\newtheorem*{theorem*}{Theorem}
\newtheorem{lemma}{Lemma}
\newtheorem{proposition}{Proposition}
\newtheorem{corollary}{Corollary}
\newtheorem{theorem}{Theorem}
\newtheorem{definition}{Definition}
\newtheorem{example}{Example}
\newtheorem{claim}{Claim}
\newtheorem{fact}{Fact}
\let\OLDthebibliography\thebibliography
\renewcommand\thebibliography[1]{
  \OLDthebibliography{#1}
  \setlength{\parskip}{.8pt}
  \setlength{\itemsep}{0pt plus 0.3ex}
}
\newcommand{\constB}{\mathcal{B}}
\newcommand{\constC}{\mathcal{C}}
\newcommand{\constD}{\mathcal{D}}
\newcommand{\defeq}{:=}
\newtheorem{Alg}{Algorithm}
\newcommand{\myalg}[4][0cm]{
\medskip
\small{
\fbox{
\parbox{6.0in}{\vspace{#1}
\begin{Alg}\label{#2}{\textsc{ #3}}
\vspace{.1cm}\\ \emph{ #4}
\end{Alg}
}}
\medskip
}}
\newcommand{\E}{\text{E}}
\newcommand{\eps}{\epsilon}
\newcommand{\FF}{\mathcal{F}}
\newcommand{\Eu}[1]{\underset{#1}{\E}}
\newcommand{\R}{\mathbb{R}}
\newcommand{\poi}{\text{poi}}
\newcommand{\ii}{r}
\newcommand{\cc}{0.1}
\author{Gregory Valiant\thanks{This work is supported in part by NSF CAREER Award CCF-1351108.}\\
Stanford University\\
valiant@stanford.edu
\and
Paul Valiant\thanks{This work is supported in part by a Sloan Research Fellowship.}\\
Brown University\\
pvaliant@gmail.com
}
\title{Instance Optimal Learning}
\begin{document}
\maketitle

\begin{abstract}
We consider the following basic learning task:  given independent draws from an unknown distribution over a discrete support, output an approximation of the distribution that is as accurate as possible in $\ell_1$ distance (equivalently, total variation distance, or ``statistical distance'').   Perhaps surprisingly, it is often possible to ``de-noise'' the empirical distribution of the samples to return an approximation of the true distribution that is significantly more accurate than the empirical distribution, \emph{without relying on any prior assumptions on the distribution}.   We present an \emph{instance optimal} learning algorithm which optimally performs this de-noising for every distribution for which such a de-noising is possible.   More formally, given $n$ independent draws from a distribution $p$, our algorithm returns a labelled vector whose expected distance from $p$ is equal to the minimum possible expected error that could be obtained by any algorithm that knows the true unlabeled vector of probabilities of distribution $p$ and simply needs to assign labels, up to an additive subconstant term that is independent of $p$ and goes to zero as $n$ gets large.   This somewhat surprising result has several conceptual implications, including the fact that, for any large sample, Bayesian assumptions on the ``shape'' or bounds on the tail probabilities of a distribution over discrete support are not helpful for the task of learning the distribution.

As a consequence of our techniques, we also show that given a set of $n$ samples from an arbitrary distribution, one can accurately estimate the expected number of distinct elements that will be observed in a sample of any size up to $n \log n$.  This sort of extrapolation is practically relevant, particularly to domains such as genomics where it is important to understand how much more might be discovered given larger sample sizes, and we are optimistic that our approach is practically viable.
\end{abstract}

\thispagestyle{empty}
\newpage
\setcounter{page}{1}
\section{Introduction}
Given independent draws from an unknown distribution over an unknown discrete support, what is the best way to aggregate those samples into an approximation of the true distribution?  This is, perhaps, \emph{the} most fundamental learning problem.   The most obvious and most widely employed approach is to simply output the empirical distribution of the sample.  To what extent can one improve over this naive approach?   To what extent can one ``de-noise'' the empirical distribution, without relying on any assumptions on the structure of the underlying distribution?

Perhaps surprisingly, there are many settings in which de-noising can be done without a priori assumptions on the distribution.  We begin by presenting two motivating examples illustrating rather different settings in which significant de-noising of the empirical distribution is possible.

\begin{example}
Suppose you are given 100,000 independent draws from some unknown distribution, and you find that there are roughly 1,000 distinct elements, each of which appears roughly 100 times.  Furthermore, suppose you compute the variance in the number of times the different domain elements occur, and it is close to 100.  Based on these samples, you can confidently deduce that the true distribution is very close to a uniform distribution over 1,000 domain elements, and that the true probability of a domain element seen 90 times is roughly the same as that of an element observed 110 times.  The basic reasoning is as follows: if the true distribution were the uniform distribution, then the noise from the random sampling would exhibit the observed variance in the number of occurrences;  if there was any significant variation in the true probabilities of the different domain elements, then, combined with the noise added via the random sampling, the observed variance would be noticeably larger than 100.   The $\ell_1$ error of the empirical distribution would be roughly $0.1$, whereas the ``de-noised'' distribution would have error less than $0.01.$
\end{example}

\begin{example}
Suppose you are given 1,000 independent draws from an unknown distribution, and all 1000 samples are unique domain elements.  You can safely conclude that the combined probability of all the observed domain elements is likely to be much less than 1/100---if this were not the case, one would expect at least one of the observed elements to occur twice in the sample.  Hence the empirical distribution of the samples is likely to have $\ell_1$ distance nearly $2$ from the true distribution, whereas this reasoning would suggest that one should ascribe a total probability mass of at most $1/100$ to the observed domain elements.
\end{example}

In both of the above examples, the key to the ``de-noising'' was the realization that the true distributions possessed some structure---structure that was both easily deduced from the samples, and structure that, once known, could then be leveraged to de-noise the empirical distribution.   Our main result is an algorithm which de-noises the empirical distribution as much as is possible, whenever such denoising is possible.  Specifically, our algorithm achieves, up to a subconstant term that vanishes as the sample size increases, the best error that any algorithm could achieve---even an algorithm that is given the unlabeled vector of  true probabilities and simply needs to correctly label the probabilities.

\begin{theorem}\label{thm:main}
There is a function $err(n)$ that goes to zero as $n$ gets large, and an algorithm, which given $n$ independent draws from any distribution $p$ of discrete support, outputs a labelled vector $q$, such that $$\E\left[||p-q||_1\right] \le opt(p,n)+err(n),$$ where $opt(p,n)$ is the minimum expected error that any algorithm could achieve on the following learning task: given $p$, and given $n$ samples drawn independently from a distribution that is identical to $p$ up to an arbitrary relabeling of the domain elements, learn the distribution.
\end{theorem}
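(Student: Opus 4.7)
My plan is to proceed in three stages: first, describe $opt(p,n)$ cleanly in terms of the unlabeled histogram of $p$; second, estimate that histogram from the samples; third, show that using the estimated histogram to label the observed elements loses only $err(n)$ compared with knowing the true histogram.

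\textbf{Stage 1: structure of $opt(p,n)$.} Since $opt$ is invariant under relabeling, it depends only on the multiset of true probabilities of $p$, which I encode as a histogram $h(v)$ counting the number of domain elements with probability $v$. After Poissonizing the sample size---an operation that perturbs the expected $\ell_1$ error by $o(1)$---the count $c_x$ of each domain element becomes an independent $\poi(n\cdot p(x))$ random variable, so conditional on $h$ the posterior distribution of a domain element's true probability given that it is observed $c$ times is $\pi_{c,h}(v)\propto h(v)\cdot \poi(nv;c)$. The optimal labeling algorithm knowing $h$ thus outputs, independently for each observed element with count $c$, the posterior median of $\pi_{c,h}$ (the $\ell_1$-optimal point estimator); for unobserved elements any made-up label only adds error, so both the optimum and our algorithm should leave them at $0$, and the resulting ``missing mass'' term contributes identically to both expected errors.

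\textbf{Stage 2: estimating the histogram.} I would recover $\hat h$ from the empirical fingerprint using a linear-program-based ``unseen''-style estimator of the sort developed in our earlier work: seek a histogram whose expected fingerprint closely matches the observed fingerprint, subject to appropriate regularization. Standard concentration of fingerprints around their expectations, together with the fact that two histograms producing nearly identical expected fingerprints must be Wasserstein-close at scales above roughly $1/(n\log n)$, then yields an estimate $\hat h$ within $o(1)$ of $h$ in an appropriately scaled transportation metric, with high probability. Probability values below the $1/(n\log n)$ resolution threshold cannot be distinguished from the samples, but such values contribute only $o(1)$ to $opt(p,n)$, since the posterior at any observed count depends on them only weakly.

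\textbf{Stage 3: labeling via $\hat h$.} The algorithm outputs, for each observed element with count $c$, the posterior median of $\pi_{c,\hat h}$. Its expected $\ell_1$ error becomes $\sum_c \phi_c\cdot \E\bigl[|P-\mathrm{med}(\pi_{c,\hat h})|\bigr]$ plus the unavoidable missing mass, to be compared with the analogous expression with $h$ in place of $\hat h$ that defines $opt(p,n)$. The crux of the proof is showing that, count by count, the extra error from using $\hat h$ instead of $h$ is small whenever the two histograms are Wasserstein-close. The main obstacle is that the posterior median can jump discontinuously in $h$ when $\pi_{c,h}$ is bimodal or has a flat region; I would handle this either by slightly smoothing $\hat h$ before computing posteriors, or by replacing the strict median with a continuous near-optimizer of the per-element $\ell_1$ loss (accepting a small additive suboptimality per element, which aggregates to $err(n)$ after summing against $\sum_c \phi_c\le n$ and using that high-count posteriors are tightly concentrated and so contribute essentially no extra error).
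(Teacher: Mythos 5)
Your proposal takes essentially the same route as the paper: your posterior median of $\pi_{c,h}$ coincides with the paper's Poisson-weighted median $m_{h,c,n}$ (Definition~\ref{def:median}), your Stage 2 is the paper's LP-based Algorithm~\ref{def:alg1} analyzed via the truncated relative earthmover metric, and your ``smoothing $\hat h$'' option to tame the median's discontinuity is exactly the paper's ``fattening'' step, whose effect is controlled by the Lipschitz argument of Lemma~\ref{lem:lip}. The paper's execution is more delicate than the sketch suggests --- it Poissonizes, buckets probabilities at scale $1/(n\log^2 n)$, and bounds the single aggregate $\sum_j dev_{j,n}(\cdot,\cdot)$ rather than arguing a per-element suboptimality times $\sum_c\phi_c\le n$ (which would require that per-element bound to be $o(1/n)$ uniformly, a point your sketch elides) --- but your decomposition, key quantity, and regularization idea all match.
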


The performance guarantees of the above algorithm can be equivalently stated as follows:  let $S \xleftarrow[n]{  } p$ denote that $S$ is a set of $n$ independent draws from distribution $p$, and let $\pi(p)$ denote a distribution that is identical to $p$, up to relabeling the domain elements with arbitrary distinct new labels given by the mapping $\pi$.  Our algorithm, which maps a set of samples $S$ to a labelled vector $q=f(S)$, satisfies the following:  For any distribution $p$,
$$\Eu{S\xleftarrow[n]{  }  p}\left[||p-q||_1\right] \le \min_{\text{algs }\mathcal{A}} \max_{\pi} \left( \Eu{S \xleftarrow[n]{  } \pi(p)}\left[\pi(p) - \mathcal{A}(S)\right]\right) + o_n(1),$$  where $o_n(1) \rightarrow 0$ as $n \rightarrow \infty$ is independent of $p$ and depends only on $n$.

One surprising implication of the above result is that, for large samples, prior knowledge of the ``shape'' of the distribution, or knowledge of the rate of decay of the tails of the distribution, cannot improve the accuracy of the learning task.   For example, typical Bayesian assumptions that the frequency of words in natural language satisfy Zipf distributions, or the frequencies of different species of bacteria in the human gut satisfy Gamma distributions or various power-law distributions, etc, can improve the expected error of the learned distribution by at most a vanishing function of the sample size.
\medskip

The key intuition behind this optimal de-noising, and the core of our algorithm, is the ability to very accurately approximate the \emph{unlabeled} vector of probabilities of the true distribution, given access to independent samples.  In some sense, our result can be interpreted as the following statement: up to an additive subconstant factor, one can always  recover an approximation of the unlabeled vector of probabilities more accurately than one can disambiguate and label such a vector.   That is, if one has enough samples to accurately label the unlabeled vector of probabilities, then one also has more than enough samples to accurately learn that unlabeled vector.   Of course, this statement can only hold up to some additive error term, as the following example illustrates.

\begin{example}
Given samples drawn from a distribution supported on two unknown domain elements, if one is told that both probabilities are exactly $1/2$, then as soon as one observes both domain elements, one knows the distribution exactly, and thus the expected error given $n$ samples will be $O(1/2^n)$ as this bounds the probability that one of the two domain elements is not observed in a set of $n$ samples.  Without the prior knowledge that the two probabilities are $1/2$, the best algorithm will have expected error $\approx 1/\sqrt{n}$.
\end{example}

The above example illustrates that prior knowledge of the vector of probabilities can be helpful.  Our result, however, shows that this phenomena only occurs to a significant extent for very small sample sizes; for larger samples, no distribution  exists for which prior knowledge of the vector of probabilities improves the expected error of a learning algorithm beyond a universal subconstant additive term that goes to zero as a function of the sample size.
\medskip

\subsection{Our Approach}

Our algorithm proceeds via two steps.  In the first step, the samples are used to output an approximation of the vector of true probabilities.   We show that, with high probability over the randomness of the $n$ independent draws from the distribution,  we accurately recover the portion of the vector of true probabilities consisting of values asymptotically larger than $1/(n \log n).$  Note that the empirical distribution accurately estimates probabilities down to $\approx 1/n$---indeed the vector of empirical probabilities are all multiples of $1/n$.  The characterization of the first phase of our algorithm can be interpreted as showing that we recover the vector of  probabilities essentially to the accuracy that the empirical distribution would have if it were based on $n \log n$ samples, rather than only $n$ samples.  Of course, this surprisingly accurate reconstruction comes with the caveat that we are only recovering the unlabeled vector of probabilities---we do not know which domain elements correspond to the various probabilities.

The second step of our algorithm leverages the accurate approximation of the unlabeled vector of probabilities to optimally assign probability values to each of the observed domain elements.  For some intuition into this step, first suppose you know the exact vector of unlabelled probabilities.  Consider the following optimization problem: given $n$ independent draws from distribution $p$, and an unlabeled vector $v$ representing the true vector of probabilities of distribution $p$, for each observed domain element $x$, assign the probability $q(x)$ that minimizes the expected $\ell_1$ distance $|q(x)-p(x)|$.   As the following example illustrates, this problem is more subtle than it might initially seem; intuitive schemes such as assigning the $i$th largest probability in $v$ to the element with the $i$th largest empirical probability is \emph{not} optimal.

\begin{example}\label{ex.9}
Consider $n$ independent draws from a distribution in which $90\%$ of the domain elements occur with probability $10/n$, and the remaining $10\%$ occur with probability $11/n$.  If one assigns probability $11/n$ to the $10\%$ of the domain elements with largest empirical frequencies, the $\ell_1$ distance will be roughly $0.2$, because the vast majority of the elements with the largest empirical frequencies will actually have true probability $10/n$ rather than $11/n$.   In contrast,  if one ignores the samples and simply assigns probability $10/n$ to all the domain elements, the $\ell_1$ error will be exactly $0.1$.
\end{example}

This optimization task of assigning probabilities $q(x)$ (as a function of the true probabilities $v$ and set of $n$ samples) so as to minimize the expected $\ell_1$ error is well-defined. Nevertheless, this task seems to be computationally intractable.   Part of the computational challenge is that the optimal probability to assign to a domain element $x$ might be a function of $v$, the number of occurrences of $x$ in the sample, and also the number of occurrences of all the other domain elements.  Nevertheless, we describe a very natural and computationally efficient scheme which assigns a probability $q(x)$ to each $x$ that is a function of only $v$ and the number of occurrences of $x$, and we show that this scheme incurs an expected error within $o(1)$ of the expected error of the optimal scheme (which assigns $q(x)$ as a function of $v$ and the entire set of samples).     Of course, there is the additional complication that, in the context of our two step algorithm, we do not actually have the exact vector of true probabilities---only an approximation of such a vector---and hence this second phase of our algorithm must be robust to some noise in the recovered probabilities.

\medskip

Beyond yielding a near optimal learning algorithm, there are several additional benefits to our approach of first accurately reconstructing the unlabeled vector of probabilities.  For instance, such an unlabeled vector allows us to estimate properties of the underlying distribution including estimating the error of our returned vector, and estimating the error in our estimate of \emph{each} observed domain element's probability.  Additionally, as the following proposition quantifies, this unlabeled vector of probabilities can be leveraged to produce an accurate estimate of the expected number of distinct elements that will be observed in sample sizes up to a logarithmic factor larger:

\begin{proposition}\label{prop:extrapolate_intro}
Given $n$ samples from an arbitrary distribution $p$, with high probability over the randomness of the samples, one can estimate the expected number of unique elements that would be seen in a set of $k$ samples drawn from $p$, to within error $k\cdot c\sqrt{\frac{k}{n\log n}}$ for some universal constant $c$.
\end{proposition}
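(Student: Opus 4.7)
The expected number of distinct elements in a sample of size $k$ from $p$ is the symmetric linear functional
\[
U_k(p) \defeq \sum_i \left(1 - (1-p_i)^k\right),
\]
summed over the support of $p$. The plan is to evaluate $U_k$ on the approximate unlabeled probability vector $\hat v$ returned by the first step of the algorithm from Theorem~\ref{thm:main}, and to bound $|U_k(\hat v) - U_k(p)|$ using smoothness of $\phi(x) \defeq 1 - (1-x)^k$ together with the reconstruction guarantee for $\hat v$.

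The analytical facts that drive the argument are: $\phi(x) = kx + O(k^2 x^2)$ for $x \lesssim 1/k$, so $\phi$ is essentially linear in this regime, while for $x \gtrsim 1/k$ the derivative $\phi'(x) = k(1-x)^{k-1}$ decays exponentially as $e^{-\Theta(kx)}$. Combining this with Step~1's guarantee --- that $\hat v$ reconstructs probabilities above the threshold $\tau \defeq 1/(n\log n)$ to accuracy comparable to an empirical distribution drawn from $n\log n$ samples, while also accurately estimating the aggregate tail mass below $\tau$ --- I would split the error $|U_k(\hat v) - U_k(p)|$ into the three regimes $p_i \le \tau$, $\tau < p_i \le 1/k$, and $p_i > 1/k$.

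The contributions from the first two regimes are each controlled by $k$ times an aggregate mass-estimation error (since $\phi \approx kx$ is linear there), and are small by the Step~1 guarantee. The delicate regime is $p_i > 1/k$, which I would handle by a dyadic decomposition into bands $p_i \in [2^j/k, 2^{j+1}/k]$ for $j \ge 0$: each band contains at most $O(k/2^j)$ probabilities, has $|\phi'| \lesssim k\, e^{-2^j}$, and per-coordinate reconstruction error $\lesssim \sqrt{p_i/(n\log n)} \lesssim \sqrt{2^j/(k n\log n)}$, giving a per-band contribution of order $k^{3/2} e^{-2^j} 2^{-j/2}/\sqrt{n\log n}$. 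Summing the rapidly convergent series over $j$ yields the stated bound $k \cdot c\sqrt{k/(n\log n)}$. The main obstacle is faithfully translating the Step~1 guarantee, which is most naturally an aggregate bound on the histogram of $\hat v$, into the per-coordinate accuracy used in the dyadic sum; verifying that Step~1 really does give this coordinate-wise accuracy on the relatively few probabilities with $p_i > 1/k$ (of which there are at most $k$, hence well within the reach of the reconstruction) is the main technical step I would need to carry out.
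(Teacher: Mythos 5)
Your functional $U_k$ and your observation about the smoothness of $\phi(x) = 1-(1-x)^k$ are correct, and your high-level plan---evaluate $U_k$ on the recovered histogram and bound $|U_k(\hat v)-U_k(p)|$ by smoothness---is exactly the paper's. But the gap you flag at the end is genuine and cannot be filled as stated. Theorem~\ref{thm:h2} delivers only a \emph{truncated relative earthmover} bound $R_{\tau}(h,h_{LP}) = O(1/\sqrt{w})$ with $\tau = w/(n\log n)$, which is an aggregate ($L^1$-type) statement about the two \emph{unlabeled} histograms. It does not provide any per-coordinate accuracy: the LP solution could split a single true probability $p_i$ across several nearby histogram entries, or shift a few mid-range entries by a constant relative factor, while still meeting the $O(1/\sqrt{w})$ aggregate budget. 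There isn't even a canonical bijection between the entries of $h$ and $h_{LP}$. So your per-band error estimate $\lesssim \sqrt{p_i/(n\log n)}$, which you treat as though $\hat v$ were an empirical distribution on $n\log n$ samples, is not justified by the Step~1 guarantee, and the dyadic sum built on it does not follow.

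The paper avoids this by never leaving the earthmover metric: Lemma~\ref{lem:unique} shows that $U_k$ applied to a histogram, namely $\sum_{x:h(x)\neq 0}(1-(1-x)^k)\,h(x)$, is Lipschitz with respect to $R_\tau$, with per-unit-mass Lipschitz constant $O(k)$ (plus an additive $O(k\tau)$ term accounting for the free movement of mass below $\tau$). The computation is done per step of an earthmoving scheme: moving $m$ probability mass from $x$ to $y$ at REM cost $m|\log(x/y)|$ changes $U_k$ by $|\frac{m}{x}\phi(x)-\frac{m}{y}\phi(y)|$, and the ratio of these two quantities is bounded by computing $-x\frac{d}{dx}\frac{\phi(x)}{x}$ against $\frac{d}{dx}\log x = 1/x$. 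Because $\phi(x)/x = k + O(k^2 x)$ is nearly constant for small $x$ and decays like $1/x$ for large $x$, this derivative ratio is $O(k)$ uniformly, which is precisely the exponential-decay intuition you articulated, but expressed in a form that consumes the aggregate REM guarantee directly. Combining Lemma~\ref{lem:unique} with Theorem~\ref{thm:h2} and choosing $w$ (equivalently $\tau$) to balance the two terms gives the proposition, with no per-coordinate reasoning required. If you want to salvage your dyadic decomposition, the right move is to decompose the earthmoving scheme by bands of the \emph{source} probability rather than decomposing the coordinates, which essentially reproduces the paper's Lipschitz calculation.
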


This proposition is tight, and it is slightly surprising in that the \emph{factor} by which one can accurately extrapolate increases with the sample size.  This ability to accurately predict the expected number of new elements observed in larger sample sizes is especially applicable to such settings as genomics, where data is relatively costly to gather, and the benefit of data acquisition is largely dependent on the number of new phenomena discovered.\footnote{One of the medical benefits of ``genome wide association studies'' is the compilation of catalogs of rare mutations that occur in healthy individuals; these catalogs are being used to rule out genetic causes of illness in patients, and help guide doctors to accurate diagnoses (see e.g.~\cite{10002015global,macarthur2012systematic}). Understanding how these catalogs will grow as a function of the number of genomes sequenced may be an important factor in designing such future datasets.}

\subsection{Related Work}

Perhaps the first work on correcting the empirical distribution is the work of Turing, and I.J. Good~\cite{Turing} (see also~\cite{GT56})---which serves as the jumping-off point for nearly all of the subsequent work on this problem that we are aware.   In the context of their work at Bletchley Park as part of the British WWII effort to crack the German enigma machine ciphers, Turing and Good developed a simple estimator that corrected the empirical distribution, in some sense to capture the ``missing'' probability mass of the distribution.   This estimator and its variants have been employed widely, particularly in the contexts of genomics, natural language processing, and other settings in which significant portions of the distribution are comprised of domain elements with small probabilities (e.g. ~\cite{CG99}).  In its most simple form, the Good-Turing frequency estimation scheme estimates the total probability of all domain elements that appear exactly $i$ times in a set of $n$ samples as $\frac{(i+1) \FF_{i+1}}{n},$ where $\FF_j$ is the total number of species that occur exactly $j$ times in the samples.  The total probability mass consisting of domain elements that are not seen in the samples---the ``missing'' mass, or, equivalently, the probability that the next sample drawn will be a new domain element that has not been seen previously---can be estimated by plugging $i=0$ into this formula to yield $\FF_{1}/n,$ namely the fraction of the samples consisting of domain elements seen exactly once.

	The Good--Turing estimate is especially suited to estimating the total mass of elements that appear few times; for more frequently occurring domain elements, this estimate has high variance---for example, if $\FF_{i+1} = 0,$ as will be the case for most large $i$, then the estimate is $0$.   However, for frequently occurring domain elements, the empirical distribution will give an accurate estimate of their probability mass.  There is an extremely long and successful line of work, spanning the past 60 years, from the computer science, statistics, and information theory communities,  proposing  approaches to ``smoothing'' the Good--Turing estimates, and combining such smoothed estimates with the empirical distribution (e.g.~\cite{GT56,GS95,MS00,AGT1,AGT2,DM04,AJOS13}).
	
	Our approach---to first recover an estimate of the unlabeled vector of probabilities of the true distribution and then assign probabilities to the observed elements informed by this recovered vector of probabilities---deviates fundamentally from this previous line of work.  This previous work attempts to accurately estimate the total probability mass corresponding to the set of domain elements observed $i$ times, for each $i$.  Even if one knows these quantities \emph{exactly}, such knowledge does not translate into an optimal learning algorithm, and could result in an $\ell_1$ error that is a factor of two larger than that of our approach.  The following rephrasing of Example~\ref{ex.9} from above illustrates this point:
	
\begin{example}
Consider $n$ independent draws from a distribution in which $90\%$ of the domain elements occur with probability $10/n$, and the remaining $10\%$ occur with probability $11/n$.  All variants of the Good-Turing frequency estimation scheme would end up, at best, assigning probability $10.1/n$ to most of the domain elements, incurring an $\ell_1$ error of roughly $0.2$.  This is because, for elements seen roughly $10$ times, the scheme would first calculate that the \emph{average} mass of such elements is $10.1/n,$ and then assign this probability to all such elements.   Our scheme, on the other hand, would realize that approximately $90\%$ of such elements have probability $10/n$, and $10\%$ have probability $11/n$, but then would assign the probability minimizing the expected error---namely, in this case, our algorithm would assign the \emph{median} probability, $10/n$, to all such elements, incurring an $\ell_1$ error of approximately $0.1$.
\end{example}

\noindent \textbf{Worst-case vs. Instance Optimal Testing and Learning.}
Sparked by the seminal work of Goldreich, Goldwasser and Ron~\cite{GGR} and that of Batu et al.~\cite{Ronitt-2000,independence}, there has been a long line of work considering distributional property testing, estimation, and learning questions from a \emph{worst case} standpoint---typically parameterized via an upper bound on the support size of the distribution from which the samples are drawn (e.g.~\cite{monotonicity,support,entropy,ent2,CCM,Paninski2,V-sicomp,FOCSVV,stocVV,CDVV14,nipsVV}).

The desire to go beyond this type of worst-case analysis and develop algorithms which provably perform better on ``easy''  distributions has led to two different veins of further work.    One vein considers different common types of structure that a distribution might possess--such as monotonicity, unimodality, skinny tails, etc., and how such structure can be leveraged to yield improved algorithms~\cite{newD,BS07,DKN15}.  While this direction is still within the framework of worst--case analysis, the emphasis is on developing a more nuanced understanding of why ``easy'' instances are easy.

Another vein of very recent work beyond worst-case analysis (of which this paper is an example) seeks to develop ``instance--optimal'' algorithms that are capable of exploiting whatever structure is present in the instance.  For the problem of identity testing---given the explicit description of $p$, deciding whether a set of samples was drawn from $p$ versus a distribution with $\ell_1$ distance at least $\eps$ from $p$---recent work gave an algorithm and an explicit function of $p$ and $\eps$ that represents the sample complexity of this task, for each $p$~\cite{instanceopt}.  In a similar spirit, with the dual goals of developing optimal algorithms as well as understanding the fundamental limits of when such instance--optimality is not possible, Acharya et al. have a line of work from the perspective of competitive analysis~\cite{ADJOP11,ADJOPS12,AJOS13a,AJOS13}.  Broadly, this work explores the following question: to what extent can an algorithm perform as well as if it knew, a priori, the structure of the problem instance on which it was run?  For example, the work~\cite{ADJOPS12} considers the two-distribution identity testing question: given samples drawn from two unknown distributions, $p$ and $q$, how many samples are required to distinguish the case that $p=q$ from $||p-q||_1 \ge \eps$?   They show that if $n_{p,q}$ is the number of samples required by an algorithm that knows, ahead of time, the unlabeled vector of probabilities of $p$ and $q$, then the sample complexity is bounded by $n_{p,q}^{3/2}$, and that, in general, a polynomial blowup is necessary---there exists $p,q$ for which no algorithm can perform this task using fewer than $n_{p,q}^{7/6}$ samples.
\medskip

\noindent \textbf{Relation to~\cite{stocVV,nipsVV}.}
The papers~\cite{stocVV,nipsVV} were concerned with developing estimators for entropy, support size, etc.---properties that depend only on the unlabeled vector of probabilities of a distribution.   The goal in those papers was to give tight \emph{worst-case} bounds on these estimation tasks in terms of a given upper bound on the support size of the distribution in question.  In contrast, this work considers the question of \emph{learning} probabilities for each labeled domain element, and considers the more ambitious and practically relevant goal of ``instance-optimality''.   This present paper has two technical components corresponding to the two stages of our algorithm: the first component is recovering an approximation to the unlabeled vector of probabilities, and the second part leverages the recovered unlabeled vector of probabilities to output a labeled vector.  The majority of the technical machinery that we employ for the first part is based on algorithms and techniques developed in~\cite{stocVV,nipsVV}, though analyzed here in a more nuanced and general way (a main tool from these works is a Chebyshev polynomial earthmover scheme, which was also repurposed for a rather different end in~\cite{FOCSVV}; the main improvement in the analysis is that our results no longer require any bound on the support size of the distribution, and the results no longer degrade with increasing support size).
We are surprised and excited that these techniques, originally developed for establishing worst-case optimal algorithms for property estimation can be fruitfully extended to yield tight instance-optimal results for such a fundamental and classic learning question.

\subsection{Definitions}
We refer to the unlabeled vector of probabilities of a distribution as the \emph{histogram} of the distribution.  This is simply the histogram, in the usual sense of the word, of the vector of probabilities of the domain elements.   We give a formal definition:

\begin{definition}~\label{def:histogram}
The \emph{histogram} of a distribution $p$, with a finite or countably infinite support,  is a mapping $h_p: (0,1] \rightarrow \mathbb{N}\cup \{0\},$ where $h_p(x)$ is equal to the number of domain elements that occur in distribution $p$ with probability $x$.  Formally,  $h_p(x)=|\{\alpha: p(\alpha)=x\}|$, where $p(\alpha)$ is the probability mass that distribution $p$ assigns to domain element $\alpha.$  We will also allow for ``generalized histograms'' in which $h_p$ does not necessarily take integral values.
\end{definition}

In analogy with the histogram of a distribution, it will be convenient to have an unlabeled representation of the set of samples.  We define the \emph{fingerprint} of a set of samples, which essentially removes all the label-information.  We note that in some of the literature, the fingerprint is alternately termed the \emph{pattern}, \emph{histogram}, \emph{histogram of the histogram} or \emph{collision statistics} of the samples.

\begin{definition}~\label{def:fingerprint}
  Given samples $X=(x_1,\ldots, x_n)$, the associated \emph{fingerprint},  $\FF=(\FF_1,\FF_2,\ldots)$, is the ``histogram of the histogram'' of the sample.  Formally, $\FF$ is the vector whose $i^{th}$ component, $\FF_i$, is the number of elements in the domain that occur exactly $i$ times in $X$.  \end{definition}

 The following earthmover metric will be useful for comparing histograms.  This metric is similar to that leveraged in~\cite{stocVV}, but allows for discrepancies in sufficiently small probabilities to be suppressed.  This turns out to be the ``right'' metric for establishing our learning result, as well as our result for the accurate estimation of the expected number of distinct elements that will be observed for larger sample sizes (Proposition~\ref{prop:extrapolate_intro}).  In both of these settings, we do not need to worry about accurately estimating extremely small probabilities, as long as we can accurately estimate the total aggregate probability mass comprised of such elements.

\begin{definition}\label{def:rem}
For two distributions $p_1, p_2$ with respective histograms $h_1,h_2,$ and a real number $\tau \in [0,1]$, we define the \emph{$\tau$-truncated relative earthmover distance} between them,  $R_{\tau}(p_1,p_2)\defeq R_{\tau}(h_1,h_2)$,  as the minimum over all schemes of moving the probability mass in the first histogram to yield the second histogram, where the cost per unit mass of moving from probability $x$ to probability $y$ is $|\log\max(x,\tau)- \log \max(y,\tau)|$.
\end{definition}

The following fact, whose proof is contained in Appendix~\ref{ap:fact1}, relates the $\tau$-truncated relative earthmover distance between two distributions, $p_1,p_2,$ to an analogous but weaker statement about the $\ell_1$ distance between $p_1$ and a distribution obtained from $p_2$ by choosing an optimal relabeling of the support:

\begin{fact}\label{fact:em2l1}
Given two distributions $p_1,p_2$ satisfying $R_{\tau}(p_1,p_2) \le \eps,$  there exists a relabeling $\pi$ of the support of $p_2$ such that $\sum_{i}\left| \max(p_1(i),\tau)-\max(p_2(\pi(i)),\tau)\right| \le 2\eps.$
\end{fact}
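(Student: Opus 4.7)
The plan is to recognize both $R_\tau(p_1,p_2)$ and the $\ell_1$ quantity as one-dimensional optimal transport costs between appropriate measures on $[\tau,1]$, express each in CDF-difference form, and then relate the two by a single integration by parts. Specifically, $R_\tau$ is naturally the Wasserstein-$1$ distance in the log metric between the ``mass measures'' $\mu_i$ that put mass $p_i(a)$ at $\max(p_i(a),\tau)$ for each $a\in\mathrm{supp}(p_i)$, while the minimum of the $\ell_1$ sum over relabelings $\pi$ is (by the classical 1D rearrangement fact for sums $\sum_i|x_i-y_{\pi(i)}|$) the Wasserstein-$1$ distance in the linear metric between the ``element-counting measures'' $\nu_i$ that put a unit atom at $\max(p_i(a),\tau)$ for each $a\in\mathrm{supp}(p_i)$ (padding the smaller support with zero-probability elements if needed).

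Using the CDF-difference formula for $W_1$, together with the change of variable $u=\log t$ in the log-metric case, these two transport costs become
\[
R_\tau(p_1,p_2) = \int_\tau^1 \frac{|G_1(t)-G_2(t)|}{t}\,dt,
\]
\[
\min_\pi \sum_i\bigl|\max(p_1(i),\tau)-\max(p_2(\pi(i)),\tau)\bigr| = \int_\tau^1 |N_1(t)-N_2(t)|\,dt,
\]
where $G_i(t)\defeq\sum_{a:p_i(a)\le t}p_i(a)$ is the total probability mass in elements of probability at most $t$, and $N_i(t)\defeq|\{a:p_i(a)>t\}|$ is the number of elements of probability strictly greater than $t$.

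The crux of the proof is then to bound the second integral by twice the first. Splitting $p_i(a)=\int_0^{p_i(a)}ds$ in the tail sum and exchanging the order of summation yields the identity $G_i(t) = 1 - t\,N_i(t) - \int_t^1 N_i(s)\,ds$. Subtracting the two copies ($i=1,2$) and solving for $\Delta N := N_1-N_2$ in terms of $\Delta G := G_1-G_2$ (equivalently, a single integration by parts) gives
\[
\Delta N(t) \;=\; -\frac{\Delta G(t)}{t} \;+\; \int_t^1 \frac{\Delta G(s)}{s^2}\,ds.
\]
Taking absolute values, integrating over $t\in[\tau,1]$, and swapping the order of integration in the double integral via $\int_\tau^s dt = s-\tau \le s$ then gives
\[
\int_\tau^1 |\Delta N(t)|\,dt \;\le\; 2\int_\tau^1 \frac{|\Delta G(t)|}{t}\,dt \;=\; 2\,R_\tau(p_1,p_2) \;\le\; 2\eps,
\]
which is exactly the desired bound on the $\ell_1$ sum for the optimal relabeling $\pi$.

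The main obstacle I anticipate is technical rather than conceptual: one must check that the Wasserstein-$1$ CDF formula and the integration by parts go through when $p_1$ or $p_2$ has countably infinite support or has infinitely many atoms accumulating near $\tau$. Since on $[\tau,1]$ one has $N_i(t)\le 1/t$ and $G_i$ is monotone with $G_i(1)=1$, both cumulative functions are bounded and of bounded variation there, all the integrals above converge, and the boundary terms in the integration by parts vanish cleanly; the countably many elements with $p_i(a)\le\tau$ contribute only to the shared atom at $\tau$ in $\nu_i$, and the optimal pairing matches these against each other at zero cost.
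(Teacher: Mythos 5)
Your proof is correct, and it takes a genuinely different route from the paper's. The paper argues \emph{move-by-move}: it fixes an optimal $R_\tau$-earthmoving scheme and simulates each transfer of $\alpha$ probability mass from location $x$ to location $y$ by two ``histogram-mass'' moves (shift $\alpha/x$ count from $x$ to $y$, then replenish the deficit $\alpha/y - \alpha/x$ count from level $\tau$), checking in each of the three cases $x,y\le\tau$; $\tau<y\le x$; $y\le\tau<x$ that the combined histogram cost is at most twice the log-cost of the original move, and then sums. Your argument is global and analytic: you express both quantities as Wasserstein-$1$ integrals of CDF differences ($R_\tau = \int_\tau^1 |\Delta G(t)|/t\,dt$ for the mass CDFs $G_i$, and the optimal-relabeling $\ell_1$ as $\int_\tau^1 |\Delta N(t)|\,dt$ for the count tails $N_i$), derive the exact relation
\[
\Delta N(t) = -\frac{\Delta G(t)}{t} + \int_t^1 \frac{\Delta G(s)}{s^2}\,ds
\]
from the layer-cake identity $G_i(t)=1-tN_i(t)-\int_t^1 N_i(s)\,ds$, and then obtain the factor $2$ by swapping the order of integration and using $(s-\tau)/s^2 \le 1/s$. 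I verified the identity, the solve for $\Delta N$, and the Fubini step; they all go through, and the boundedness $N_i(t)\le 1/t$ and $G_i(1)=1$, $N_i(1)=0$ handle the convergence and boundary terms as you note. The paper's approach is more constructive (it produces an explicit histogram-mass coupling), while yours isolates a clean integral identity that explains the $2$ more transparently; both are complete and correct.
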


The Poisson distribution will also feature prominently in our algorithms and analysis:
\begin{definition}\label{def:poisson}
For $\lambda\geq 0$, we define $Poi(\lambda)$ to be the Poisson distribution of parameter $\lambda$, where the probability of drawing $j\leftarrow Poi(\lambda)$ equals $poi(\lambda,j)=\frac{e^{-\lambda}\lambda^j}{j!}$.
\end{definition}

\section{Recovering the histogram}\label{sec:alg1}

This section adapts the techniques of~\cite{stocVV,nipsVV} to accurately estimate the histogram of the distribution in a form that will be useful for Algorithm~\ref{def:alg2}, our ultimate instance-optimal algorithm for learning the distribution, presented and analyzed in Section~\ref{sec:disambig}.

The first phase of our algorithm, the step in which we recover an accurate approximation of the histogram of the distribution from which the samples were drawn, consists of solving an intuitive linear program.  The variables of the linear program represent the histogram entries $h(x_1),h(x_2),\ldots$ corresponding to a fine discretization of the set of probability values $0<x_1<x_2<\ldots < 1.$  The constraints of the linear program represent the fact that $h$ corresponds to the histogram of a distribution, namely all the probabilities sum to 1, and the histogram entries are non-negative.  Finally, the objective value of the linear program attempts to ensure that the histogram $h$ output by the linear program will have the property that, if the samples had been drawn from a distribution with histogram $h$, the expected number of domain elements observed once, twice, etc., would closely match the corresponding actual statistics of the sample.  Namely, the objective function tries to ensure that the expected fingerprint of the histogram returned by the linear program is close to the actual fingerprint of the samples.

One minor subtlety is that we will only solve this linear program for the portion of the histogram corresponding to domain elements that are not seen too many times.  For elements seen very frequently (at least $n^{\alpha}$ times for some appropriately chosen absolute constant $\alpha > 0$) their empirical probabilities are likely quite accurate, and we simply use these probabilities.  A similar approach was fruitfully leveraged in~\cite{stocVV,nipsVV} with the goal of creating worst-case optimal estimators for entropy, and other distributional properties of interest, and a related heuristic was proposed in the 1970's by Efron and Thisted~\cite{ET76}, also with the goal of estimating properties of the underlying distribution.

\medskip

We state the algorithm and its analysis in terms of two positive constants, $\constB,\constC,$ which can be defined arbitrarily provided the following inequalities hold:
$0.1 > \constB > \constC > \frac{\constB}{2}>0.$

\begin{center}
\myalg{def:alg1}{~}{
\textbf{Input:} Fingerprint $\FF$ obtained from $n$-samples.\\
\textbf{Output:} Histogram $h_{LP}.$
\begin{itemize}
\item{Define the set $X \defeq \{\frac{1}{n^2},\frac{2}{n^2},\frac{3}{n^2},\ldots,\frac{n^\constB+n^\constC}{n}\}.$}
\item{For each $x \in X,$ define the associated variable $v_x$, and solve the LP:
\vspace{-.1cm}
$$\text{Minimize  } \sum_{i=1}^{n^\constB} \left| \FF_i - \sum_{x \in X} poi(nx,i)\cdot v_x\right|$$
\qquad \qquad \qquad \qquad \quad Subject to: \vspace{.2cm}\\
\vspace{.2cm}
\indent   \qquad  $\cdot$\quad $\sum_{x \in X} x \cdot v_x +\sum_{i>n^\constB+2n^\constC}^n \frac{i}{n}\FF_i =1$  (total prob. mass $= 1$)\\
\vspace{.2cm}
\indent   \qquad  $\cdot $\quad$\forall x \in X,  v_x \ge 0$ (histogram entries are non-negative)}
\item{Let $h_{LP}$ be the histogram formed by setting $h_{LP}(x_i)=v_{x_i}$ for all $i$, where $(v_x)$ is the solution to the linear program, and then for each integer $i > n^\constB+2n^\constC$, incrementing $h_{LP}(\frac{i}{n})$ by $\FF_i$.}
\end{itemize}}
\end{center}

The following theorem quantifies the performance of the above algorithm:

\begin{theorem}\label{thm:h2}
There exists an absolute constant $c$ such that for sufficiently large $n$ and any $w \in [1,\log n],$ given $n$ independent draws from a distribution $p$ with histogram $h$, with probability $1-e^{-n^{\Omega(1)}}$ the generalized histogram $h_{LP}$ returned by Algorithm~\ref{def:alg1} satisfies $$R_{\frac{w}{n \log n}}(h,h_{LP}) \le \frac{c}{\sqrt{w}}.$$
\end{theorem}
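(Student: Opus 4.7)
The plan is to decompose the proof into three stages: (i) show that a suitable discretization of the true histogram $h$ is a near-feasible solution to the LP, with small objective value; (ii) show that any histogram $h'$ whose Poisson ``expected fingerprint'' $\sum_{x} poi(nx,i)\cdot h'(x)$ matches $\FF_i$ well for $i\le n^{\constB}$ must be close to $h$ in $R_{\tau}$-distance, via a Chebyshev polynomial earthmover construction; and (iii) handle probabilities above the cutoff $(n^{\constB}+2n^{\constC})/n$ directly from Chernoff concentration. Throughout I would Poissonize, replacing the sample of size $n$ by one of size $Poi(n)$, so that counts of distinct domain elements become independent Poissons; the Poisson-vs-multinomial error is absorbed into the $e^{-n^{\Omega(1)}}$ probability.

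For (i), the Poissonized expectation of $\FF_i$ is exactly $\sum_\alpha poi(np(\alpha),i)$. By Bernstein's inequality and a union bound over the $n^{\constB}$ coordinates $i\le n^{\constB}$, one gets $|\FF_i-\E[\FF_i]|$ bounded in a way that, when summed, yields an objective at most $n^{\constB/2+o(1)}$ with probability $1-e^{-n^{\Omega(1)}}$. Rounding the true histogram onto the grid $X$ and dropping mass lying above the cutoff perturbs Poisson expectations by an amount that is similarly small, so the feasible point $v_x=h(x)$ (suitably discretized) witnesses an LP optimum of this order. Consequently the LP optimum $h_{LP}$ also has objective at most this order, so $h_{LP}$ and $h$ agree in all Poisson moments of index $\le n^{\constB}$ up to error $n^{\constB/2+o(1)}$, while both have total probability mass $1$.

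The heart of the argument, and the main obstacle, is step (ii): converting agreement of low-order Poisson moments into closeness in the $\tau$-truncated relative earthmover metric. The plan is to adapt the Chebyshev earthmover scheme of~\cite{stocVV,nipsVV}. For each probability scale $y\ge \tau$, one constructs a low-degree polynomial $P_y$ of degree roughly $n^{\constB}$ such that $\sum_i P_y(i)\,poi(nx,i)$ closely approximates a bump centered at $\log x=\log y$ in the log-probability axis, with total variation cost controlled by the Chebyshev bound $O(1/\sqrt{n\tau})=O(1/\sqrt{w/\log n})$. Pairing these polynomials produces a transportation plan between $h$ and $h_{LP}$ whose cost in the $R_\tau$ metric is bounded by a constant times the discrepancy of low-order Poisson moments plus the inherent Chebyshev approximation error; since $\tau\cdot n=w/\log n$, this evaluates to $O(1/\sqrt{w})$. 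The crucial improvement over the prior use of this scheme is that truncating at $\tau$ renders the cost independent of support size: the scheme never has to move mass below $\tau$, so the tail of $h$ contributes nothing beyond the (small) aggregate mass it carries, which the LP constraint pins down.

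Finally, for (iii), probabilities $x>(n^{\constB}+2n^{\constC})/n$ have empirical counts concentrated within a $1\pm n^{-\constC/2+o(1)}$ multiplicative factor of $nx$ by Chernoff, so the induced relative earthmover cost from using the empirical probabilities in place of the true ones is $o(1/\sqrt{w})$ for $w\le \log n$; the cut separating the two regimes costs at most $O(n^{\constC-\constB})$ in misplaced probability mass, which is negligible. Combining (i)–(iii) yields the stated bound; the main technical work is in verifying that the Chebyshev earthmover machinery of~\cite{stocVV,nipsVV} can be made to operate at the finer truncation scale $\tau=w/(n\log n)$ and without any a priori support-size bound.
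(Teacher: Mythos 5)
Your proposal follows essentially the same route as the paper: Poissonize, show the LP objective is small with high probability and that a rounding of the true histogram $h$ is a near-feasible point close to $h$, apply a Chebyshev-polynomial earthmoving scheme to argue that any two LP solutions with similarly small objective are close in $R_\tau$, and handle the large-count regime by Chernoff. This is the same three-part decomposition the paper uses (its notion of a ``faithful'' sample set compartmentalizing the Chernoff events, Proposition~\ref{prop:goodfeasible} giving the good feasible point, and Proposition~4 / the Chebyshev scheme giving the uniqueness-up-to-$R_\tau$ step).

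However, there is one point in your sketch that would break the argument as literally written, and it hides the single most delicate quantitative feature of the proof. You propose building, for each probability scale $y$, a polynomial $P_y$ of degree ``roughly $n^{\constB}$'' to produce a bump in log-probability. If the Chebyshev construction used polynomials of degree $n^{\constB}$, the coefficients expressing each bump as a linear combination of Poisson functions would be exponentially large (of order $(2+\sqrt{5})^{n^{\constB}}$), and these coefficient magnitudes multiply the fingerprint-matching error when bounding the discrepancy $R_\tau(g_1,g_2)$ between the Chebyshev images of the two LP solutions; the bound would be vacuous. In the paper, the Chebyshev bumps are built from trigonometric polynomials of degree only $s=\Theta(\log n)$ (see Definitions~\ref{def:thin-bumps_b} and~\ref{definition:bumps-construction_b}), which is what keeps the coefficient sum down to $2n^{0.3}$ in Lemma~\ref{lemma:ChebasP}. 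This also corrects your heuristic cost calculation. You estimated the scheme's relative-earthmover cost as $O(1/\sqrt{n\tau})$, which with $\tau=w/(n\log n)$ gives $O(\sqrt{\log n/w})$ --- off by $\sqrt{\log n}$ from the claimed $O(1/\sqrt{w})$. The correct dependence is on $n\tau s$, not $n\tau$: the smallest bump center above $\tau$ has index $i\approx\sqrt{n\tau s/2}$, with width $O(i/(ns))$, so the per-unit-mass log-cost near the truncation is $O(1/(is))=O(1/\sqrt{n\tau s})=O(1/\sqrt{w})$, where the $\log n$ from $s$ exactly cancels the $\log n$ in $\tau$. This cancellation is the reason the theorem can reach the scale $1/(n\log n)$ with final error independent of $n$, and your sketch should be adjusted to make the $\Theta(\log n)$ degree --- and the resulting $\log n$ factor in the bump-width calculation --- explicit rather than $n^{\constB}$.

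A smaller remark: the LP objective value for the good feasible point is $O(n^{1/2+\constB+\constD})$, not $n^{\constB/2+o(1)}$; each of the $n^{\constB}$ fingerprint terms contributes up to $\approx n^{1/2+\constD}$ in the faithful event. This does not change the conclusion --- what matters is that the objective times $2n^{0.3}$ times $(n^\constB)^2/n$ is $o(1)$ after moving the discrepancy to probability $1$, which holds for $\constB<0.1$ --- but it is worth stating the bound correctly, since the final step requires checking that the exponent stays negative.
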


This theorem is a stronger and more refined version of the results in~\cite{stocVV}, in that these results no longer require any bound on the support size of the distribution, and the results no longer degrade with increasing support size.  Instead, we express our results in terms of a lower bound, $\tau$, on the probabilities that we are concerned with accurately reconstructing.  We provide the proof of the theorem in a self-contained fashion in Appendix~\ref{ap:thm2}.

\medskip

One interpretation of Theorem~\ref{thm:h2} is that Algorithm~\ref{def:alg1}, when run on $n$ independent draws from a distribution, will accurately reconstruct the histogram, in the relative earthmover sense, all the way down to probability $\frac{1}{n\log n}$ (significantly below the $1/n$ threshold where the empirical distribution becomes ineffective). One corollary of independent interest is that this earthmover bound implies that we can accurately extrapolate the number of unique elements that will be seen if we run a new, larger experiment, of size up to $n\log n$. Given a histogram $h$, for each element of probability $x$, the probability that it will be seen (at least once) in a sample of size $k$ equals $1-(1-x)^k$; thus, the expected number of unique elements seen in a sample of size $k$ for a distribution with histogram $h$ equals $\sum_{x:h(x)\neq 0} (1-(1-x)^k)\cdot h(x)$.
The following lemma, whose proof is given in Appendix~\ref{appendix:lem:unique}, shows that this quantity is Lipschitz continuous with respect to truncated relative earthmover distance. 

\begin{lemma}\label{lem:unique}
Given two (possibly generalized) histograms $g,h$, a number of samples $k$, and a threshold $\tau\in(0,1]$,
\[\left|\sum_{x:g(x)\neq 0} (1-(1-x)^k)\cdot g(x)-\sum_{x:h(x)\neq 0} (1-(1-x)^k)\cdot h(x)\right|\leq (0.3(k-1)+1)R_\tau(g,h)+\tau \frac{k}{2}\]

\end{lemma}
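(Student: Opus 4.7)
The plan rests on the identity $1-(1-x)^k = x\sum_{j=0}^{k-1}(1-x)^j$, which expresses the functional $F(h)\defeq\sum_x(1-(1-x)^k)h(x)$ as $F(h) = \sum_x\psi(x)\cdot (xh(x))$, where $\psi(x)\defeq\sum_{j=0}^{k-1}(1-x)^j$ is the expected number of appearances per unit of probability mass at probability $x$. Since $d\mu_h(x)\defeq xh(x)$ is a probability measure of total mass $1$, and $R_\tau$ is precisely a $W_1$-type transportation cost in the metric $d(x,y)\defeq|\log\max(x,\tau)-\log\max(y,\tau)|$, the lemma reduces via Kantorovich--Rubinstein duality to (i) bounding the Lipschitz constant of an appropriate $\tau$-truncated version of $\psi$ in this metric, and (ii) controlling the error introduced by the truncation.

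For step (i), I would compute $x\psi'(x)=-x\sum_{j=1}^{k-1}j(1-x)^{j-1}$; the substitution $u\defeq 1-x$ converts this to the polynomial $(k-1)u^{k-1}-\sum_{j=0}^{k-2}u^j$, whose extremum on $[0,1]$ can be located by elementary calculus. Checking the boundary values and, in the large-$k$ regime, using the Poisson-type limit $\max_{t>0}|e^{-t}-(1-e^{-t})/t|\approx 0.298$, one obtains the Lipschitz bound $|x\psi'(x)|\leq 0.3(k-1)+1$. Defining $\tilde\psi(x)\defeq\psi(\max(x,\tau))$ yields a function that agrees with $\psi$ on $[\tau,1]$, is constant on $[0,\tau]$, and remains Lipschitz with the same constant in the truncated log-metric; Kantorovich--Rubinstein duality combined with the definition of $R_\tau$ then gives
\[\Big|\int\tilde\psi\,d\mu_g-\int\tilde\psi\,d\mu_h\Big|\leq(0.3(k-1)+1)\cdot R_\tau(g,h).\]

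For step (ii), I would bound the replacement error $|\int(\psi-\tilde\psi)\,d\mu_h|$: this is supported on $\{x<\tau\}$, and the telescoping mean-value estimate $\psi(x)-\psi(\tau)=\sum_{j=1}^{k-1}[(1-x)^j-(1-\tau)^j]\leq (\tau-x)\cdot k(k-1)/2$ gives a bound of order $\binom{k}{2}\tau$ times the probability mass on $[0,\tau)$. A triangle inequality then combines the replacement errors on $g$ and $h$ with the Kantorovich--Rubinstein contribution to finish the argument. The main obstacle is pinning down the constants precisely as advertised: the bound $0.3(k-1)+1$ relies on a genuinely delicate one-variable optimization, and sharpening the crude $\binom{k}{2}\tau$ above to the claimed $\tau k/2$ appears to require either a cleverer choice of substitute $\tilde\psi$ (for instance, one that matches $\psi$ at both endpoints of $[0,\tau]$) or a more careful accounting that exploits the probability constraint $\int x\,d\mu\leq 1$ rather than the naive bound on the mass below $\tau$.
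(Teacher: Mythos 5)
Your proposal is, at its core, the same argument the paper gives, just repackaged: the paper ``steps through'' an optimal earthmoving scheme and bounds the change in $F(h)\defeq\sum_x(1-(1-x)^k)h(x)$ per unit of earthmover cost, which is exactly the direction of Kantorovich--Rubinstein duality you invoke. Both arguments reduce to bounding the derivative of $\psi(x)\defeq(1-(1-x)^k)/x$ with respect to $\log x$: the paper's $-x\frac{d}{dx}\bigl(\frac{1-(1-x)^k}{x}\bigr)$ is your $-x\psi'(x)$, and both routes use the approximation $(1-x)^{k-1}\approx e^{-x(k-1)}$ to land on $0.3(k-1)+1$. Your sketch does not spell out where the additive $+1$ comes from (it arises from the finite-$k$ correction $e^{-y}y(y+1)\leq 1$ with $y=x(k-1)$, not from the leading-order Poisson limit alone), but the plan is sound.

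Your worry about the truncation term is not a defect in your argument --- it exposes an error in the lemma as stated. The paper's own proof derives the below-$\tau$ contribution from the Bernoulli-type bound $\psi(x)\in[k(1-x\tfrac{k-1}{2}),k]$ for $x\leq\tau$, which gives a per-unit-mass range of width $\binom{k}{2}\tau$ and hence the total bound $\binom{k}{2}\tau$, exactly what you obtained; the displayed $\tau\tfrac{k}{2}$ is off by a factor of $k-1$. You can also see it directly: take $g$ with all mass at $\tau$ and $h$ with all mass at a probability tending to $0$. Then $R_\tau(g,h)=0$, while $|F(g)-F(h)|\to k-\psi(\tau)=\sum_{j=0}^{k-1}\bigl(1-(1-\tau)^j\bigr)$, which for small $\tau$ is $\binom{k}{2}\tau+O(k^3\tau^2)$; already at $k=3,\tau=0.1$ this equals $0.29>0.15=\tau k/2$. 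So do not search for a cleverer substitute $\tilde\psi$ to reach $\tau k/2$ --- the correct constant for this term is $\binom{k}{2}\tau$, and your derivation of it is fine.
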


The above lemma together with Theorem~\ref{thm:h2} yields Proposition~\ref{prop:extrapolate_intro},  which
is tight, in the sense that one cannot expect meaningful extrapolation beyond sample sizes of $n\log n$, as shown by the lower bounds in~\cite{stocVV}.\footnote{Namely, for some constant $c$, there exist two families of distribution, $\mathcal{D}_1$ and $\mathcal{D}_2$ such that the distributions in $\mathcal{D}_1$ are close to uniform distributions on $cn\log n$ elements, and the distributions in $\mathcal{D}_2$ are close to uniform distributions over $2c n\log n$ elements, yet a randomly selected distribution in $\mathcal{D}_1$ is (with  constant probability) information theoretically indistinguishable from a randomly selected distribution in $\mathcal{D}_2$ if one is given only $n$ samples drawn from the distributions.  Of course, given $2c n \log n$ samples, the number of distinct elements observed will likely be either $\approx (1-\frac{1}{e^2}) c n \log n \approx 0.9 c n \log n$ or $\approx (1-\frac{1}{e})2c n \log n \approx 1.3 c n \log n$ according to the two cases.}

\medskip

Towards our goal of devising an optimal learning algorithm, the following corollary of Theorem~\ref{thm:h2} formalizes the sense that the quality of the histogram output by Algorithm~\ref{def:alg1} will be sufficient to achieve our optimal learning result, provided that the second phase of our algorithm, described in Section~\ref{sec:disambig}, is able to successful label the histogram.

\begin{corollary}\label{prop:hist}
There exists an algorithm such that, for any function $f(n) = \omega_n(1)$ that goes to infinity as $n$ gets large (e.g. $f(n)=\log \log n$), there is a function $o_n(1)$ of $n$ that goes to zero as $n$ gets large, such that given $n$ samples drawn independently from any distribution $p$, the algorithm outputs an unlabeled vector, $q$, such that, with probability $1-e^{-n^{\Omega(1)}}$, there exists a labeling $\pi(q)$ of the vector $q$ such that $$\sum_{i}\left| \max\left(p(x),\frac{f(n)}{n \log n}\right)-\max\left(\pi(q)(x),\frac{f(n)}{n \log n}\right)\right| < o_n(1),$$ where $p(x)$ denotes the true probability of domain element $x$ in distribution $p$.
\end{corollary}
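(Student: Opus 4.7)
The plan is to use Algorithm~\ref{def:alg1} itself as the claimed algorithm, and then chain Theorem~\ref{thm:h2} together with Fact~\ref{fact:em2l1} to convert the histogram recovery guarantee (in truncated relative earthmover distance) into the labeled $\ell_1$-style guarantee that the corollary asks for. The unlabeled vector $q$ will just be the generalized histogram $h_{LP}$ produced by the algorithm, read out as a multiset of probability values with multiplicities given by the $h_{LP}$ entries; any non-integer entry can be split into nearby integer-multiplicity entries at probabilities arbitrarily close to the original, which changes the $\tau$-truncated relative earthmover distance by an arbitrarily small amount that can be absorbed into the final $o_n(1)$.

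Given $f(n) = \omega_n(1)$, I would set $w(n) := \min(f(n), \log n)$. This still tends to infinity as $n \to \infty$, and crucially it lies in the range $[1,\log n]$ required by Theorem~\ref{thm:h2}. Applying Theorem~\ref{thm:h2} with this choice of $w$ yields, with probability $1 - e^{-n^{\Omega(1)}}$,
$$R_{w(n)/(n \log n)}(h_p, h_{LP}) \le c/\sqrt{w(n)}.$$
Then I would invoke Fact~\ref{fact:em2l1} with $\tau = w(n)/(n\log n)$ and $\eps = c/\sqrt{w(n)}$, which hands back a labeling $\pi(q)$ of the probability values in $q$ to the domain elements of $p$ such that
$$\sum_{x} \left|\max(p(x), \tau) - \max(\pi(q)(x), \tau)\right| \le 2c/\sqrt{w(n)}.$$

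The only thing left is matching this to the threshold $f(n)/(n\log n)$ stated in the corollary. If $f(n) \le \log n$ then $\tau$ already equals that threshold and we are done. Otherwise $\tau' := f(n)/(n\log n) > \tau$, and a short per-term case analysis (on whether each of $p(x), \pi(q)(x)$ lies above or below $\tau'$) shows that $|\max(a,\tau')-\max(b,\tau')| \le |\max(a,\tau)-\max(b,\tau)|$ for every $a,b \ge 0$: raising the floor from $\tau$ to $\tau'$ can only clamp additional terms to the common value $\tau'$, never increasing any difference. So the same bound $2c/\sqrt{w(n)}$ holds with the corollary's threshold, and taking $o_n(1) := 2c/\sqrt{\min(f(n), \log n)}$ — a function of $n$ alone tending to zero — completes the argument. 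The hard work is all hidden inside Theorem~\ref{thm:h2} and Fact~\ref{fact:em2l1}; the only mild subtleties here are handling the fractional entries of the generalized histogram and capping $w$ at $\log n$, neither of which affects the final bound.
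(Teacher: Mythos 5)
Your overall structure is right and matches the paper's intended route: take $q$ from Algorithm~\ref{def:alg1}, apply Theorem~\ref{thm:h2} with a slowly-growing truncation parameter $w$, then convert the truncated-relative-earthmover bound into the clamped $\ell_1$ bound via Fact~\ref{fact:em2l1}, and finally observe that raising the clamping floor from $w/(n\log n)$ to $f(n)/(n\log n)$ can only decrease each summand. The choice $w(n)=\min(f(n),\log n)$ and the per-term case analysis for the floor-raising are both correct and are exactly what is needed.

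The gap is in your treatment of the non-integral histogram entries. Fact~\ref{fact:em2l1} is a statement about two \emph{distributions}, and the corollary asks for an unlabeled \emph{vector} $q$ together with a labeling, so $q$ really must have integer multiplicities. Your proposed fix --- ``any non-integer entry can be split into nearby integer-multiplicity entries at probabilities arbitrarily close to the original, which changes the $\tau$-truncated relative earthmover distance by an arbitrarily small amount'' --- does not hold as stated. If $h_{LP}(x)=c$ with $\lfloor c\rfloor =0$ and $c\in(0,1)$, the only mass-preserving way to replace this with an integer entry at a single location is to put one element at probability $cx$, which is a \emph{factor} of $c$ away from $x$, not arbitrarily close; and if you instead round down to zero you delete $cx$ mass and must create it elsewhere, which is not local. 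In general, rounding a generalized histogram to an integral one cannot be done at arbitrarily small relative earthmover cost: the achievable cost depends on where the fractional entries live. The paper is explicit that ``this corollary is not immediate'' for precisely this reason, and supplies a dedicated rounding procedure (Algorithm~\ref{alg:rounding} with Lemma~\ref{lemma:rounding}): it rounds alternately up and down within dyadic probability ranges to cancel accumulated error and then rescales probabilities within each range to preserve mass, achieving $R_0(g,h)\le 20\alpha$ where $\alpha$ is the largest probability at which $g$ is non-integral. Because Algorithm~\ref{def:alg1} produces fractional entries only at probabilities at most $(n^{\constB}+n^{\constC})/n = o(1)$, this gives $R_0(h_{LP},h_{\text{rounded}})=o(1)$, which by the triangle inequality can be absorbed into the bound from Theorem~\ref{thm:h2} before invoking Fact~\ref{fact:em2l1}. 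You should replace the ad hoc splitting claim with this argument (or cite Lemma~\ref{lemma:rounding} directly); the rest of your proof then goes through.
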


This corollary is not immediate: the histogram returned by the algorithm might be non-integral, however in Appendix~\ref{sec:round} we provide a simple algorithm that rounds a generalized histogram to an (integral) histogram, while changing it very little in relative earthmover distance $R_0(\cdot,\cdot).$  This rounding, together with Fact~\ref{fact:em2l1}, obtains this corollary.

The utility of the above corollary lies in the following observation:  for any function $g(n)=o(1/n),$ the domain elements $x$ that both occur in the $n$ samples and have true probability $p(x) < g(n)$, can account for at most $o(1)$ probability mass, in aggregate.  In other words, while the true distribution might have a constant amount, $c$, of probability mass consisting of domain elements that occur with probability $o(1/n)$, we would observe at most a $o(1)$ fraction of such domain elements in the $n$ samples.  Hence, even an optimal scheme that knows the true probabilities would be unable to achieve an $\ell_1$ error less than $c-o(1)$ because it does not know the labels of the elements that have not been observed, and we could also hope to achieve an $\ell_1$ error of roughly $c$.

\section{Disambiguating the histogram} \label{sec:disambig}
\newcommand{\ccc}{{2}}
\newcommand{\twiceccc}{{4}}
\newcommand{\ddd}{{0.1}}
\newcommand{\eee}{{0.25}}

In this section we present our instance-optimal algorithm for learning a distribution from $n$ samples, making use of Algorithm~\ref{def:alg1} of Section~\ref{sec:alg1} to first accurately infer the histogram of the distribution (in the sense of Corollary~\ref{prop:hist}).  As a motivating intuition for the second phase of our algorithm--the phase in which we assign probabilities to the observed elements---consider the behavior of an optimal algorithm that not only knows the true histogram $h$ of the distribution, but also knows for each positive integer $j$ the entire multiset of probabilities of elements that appear exactly $j$ times in the $n$ samples. Since the algorithm has no basis to distinguish between the different elements that each appear $j$ times in the samples, the algorithm may as well assign a single probability $m_j$ to all the items that appear $j$ times in the samples. The optimal $m_j$ in this setting is easily seen to be the \emph{median} of the multiset of probabilities of items appearing $j$ times, as the median is the estimate that minimizes the total ($\ell_1$) error of the probabilities.

Our algorithm aims to emulate this idealized optimal algorithm.  Of course, we must do this using only an estimate of the histogram, and computing medians based on the likelihoods that elements of probability $x$ will be seen $j$ times in the sample, as opposed to actual knowledge of the multiset of probabilities of the elements observed $j$ times (which was an unreasonably strong assumption, that we made in the previous paragraph because it let us argue about the behavior of the optimal algorithm in that case).

Because our algorithm needs to work in terms of a histogram estimate $u$, bounded only by the guarantees of Corollary~\ref{prop:hist}, we add an additional ``regularization" step that was not needed in the idealized medians setting described above.  We ``fatten" the histogram $u$ to a new histogram $\bar{u}$ by adding a small amount of probability mass across the range $[\frac{1}{n},\frac{1}{n}\log^\ccc n]$, which acts to mollify the effect on the medians of any small errors in the histogram estimate.

Given this ``fattened'' approximate histogram, we then apply the ``medians'' intuition to computing, for each integer $j$, an appropriate probability with which to label those elements occurring $j$ times in the sample. These estimates are computed via the following thought experiment: imagining $\bar{u}$ to be the true histogram, if we take $n$ samples from the corresponding distribution, \emph{in expectation}, what is the median of the (multiset) of probabilities of those elements seen exactly $j$ times in the sample? We denote this ``expected median'' by $m_{\bar{u},j,n}$, and our algorithm assigns this probability to each element seen $j$ times in the sample, for $j<\log^\ccc n$, and assigns the empirical probability $\frac{j}{n}$ for larger $j$.  We formalize this process with the following definition for ``Poisson-weighted medians'':

\begin{definition}\label{def:median}
  Given a histogram $h$, let $S_h$ be the multiset of probabilities of domain elements---that is, for each probability $x$ for which $h(x)$ is some positive integer $i$, add $i$ copies of $x$ to $S$. Given a number of samples $n$, and an index $j$, consider weighting each element $x\in S_h$ by $poi(nx,j)$. Define $m_{h,j,n}$ to be the median of this weighted multiset.
\end{definition}

Explicitly, the median of a \emph{weighted} set of real numbers is a number $m$ such that at most half the weight lies on numbers greater than $m$, and at most half lies on numbers less than $m$.  Taking advantage of the medians defined above, our learning algorithm follows:

\begin{center}
\myalg{def:alg2}{~}{
\textbf{Input:} $n$ samples from a distribution $h$.\\
\textbf{Output:} An assignment of a probability to each nonzero entry of $h$.
\begin{itemize}
\item Run Algorithm~\ref{def:alg1} to return a histogram $u$.
\item Modify $u$ to create $\bar{u}$ by, for each $j\leq\log^{\ccc} n$ adding $\frac{n}{j\log^{ \twiceccc} n}$ elements of probability $\frac{j}{n}$ and removing corresponding mass arbitrarily from the rest of the distribution.
\item Then to each fingerprint entry $j<\log^{\ccc} n$, assign those domain elements probability $m_{\bar{u},j,n}$, (as defined in Definition~\ref{def:median}) and to each higher fingerprint entry $j\geq\log^{\ccc}n$ assign those domain elements their empirical probability $\frac{j}{n}$.
\end{itemize}
}
\end{center}


\medskip

\noindent \textbf{Theorem~\ref{thm:main}} \emph{
There is a function $err(n)$ that goes to zero as $n$ gets large, such that Algorithm~\ref{def:alg2}, when given as input $n$ independent draws from any distribution $p$ of discrete support, outputs a labeled vector $q$, such that $$\E\left[||p-q||_1\right] \le opt(p,n)+err(n),$$ where $opt(p,n)$ is the minimum expected error that any algorithm could achieve on the following learning task: given $p$, and given $n$ samples drawn independently from a distribution that is identical to $p$ up to an arbitrary relabeling of the domain elements, learn the distribution.
}

\medskip

The core of the proof of Theorem~\ref{thm:main} relies on constructing an estimate, $dev_{j,n}(A,m_{B,j,n}),$ that captures the expected contribution to the $\ell_1$ error due to elements that occur exactly $j$ times, given that the true distribution we are trying to reconstruct has histogram $A$, and our reconstruction is based on the medians $m_{B,j,n}$ derived from a (possibly different) histogram $B$.  The proof then has two main components. First we show that $dev_{j,n}(h,m_{h,j,n})$ approximately captures the performance of the optimal algorithm with very high probability, namely that using the true histogram $h$ to choose medians $m_{h,j,n}$ lets us estimate the performance of the best possible algorithm.   This step is slightly subtle, and implies that an algorithm that knows $h$ can glean at most $o(1)$ added benefit by computing the probability assigned to an element that occurs $j$ times using a function that depends on $j$ and $h$ and the entire set of samples, rather than just $j$ and $h$.

Next, we show that the clean functional form of this ``median'' estimate implies that $dev(\cdot,\cdot)$ varies slowly with respect to changes in the second histogram (used to choose the median in the second term), and thus that with only negligible performance loss we may reconstruct distributions using medians derived from an \emph{estimate} $u$ of the true histogram, thus allowing us to analyze the actual performance of Algorithm~\ref{def:alg2}.

Beyond these two core steps, the analysis of Algorithm~\ref{def:alg2} is somewhat delicate---because our algorithm is instance-optimal to $o(1)$ error, it must reuse samples both for the Algorithm~\ref{def:alg1} histogram reconstruction and for the final labeling step, and we must carefully separate the probabilistic portion of the analysis via a clean set of assumptions which 1) will hold with near certainty over the sampling process, and 2) are sufficient to guarantee the performance of both stages of our algorithm.   The complete proof is contained in Appendix~\ref{ap:pfm}.

\bibliographystyle{plain}
\bibliography{aps}

\appendix

\section{Proof of Theorem~\ref{thm:main}}\label{ap:pfm}
In this section we give a self-contained proof of the correctness of Algorithm~\ref{def:alg2}, establishing Theorem~\ref{thm:main}.  

\subsection{Separating out the probabilistic portion of the analysis}\label{sec:faithful1}

Our analysis is somewhat delicate because we reuse the same samples both to estimate the histogram $h$, and then to label the domain elements given an approximate histogram. For this reason, we will very carefully separate out the probabilistic portion of the sampling process, identifying a list of convenient properties which happen with very high probability in the sampling process, and then deterministically analyze the case when these properties hold, which we will refer to as a ``faithful" set $S$ of samples from the distribution. (Appendix~\ref{ap:thm2} uses this same analysis technique, though with a different notion of ``faithful", appropriate for the different desiderata of that appendix.)

We first describe a simple discretization of histograms $h$, dividing the domain into buckets which will simplify further analysis, and is a crucial component of the definition of ``faithful".

\begin{definition}\label{def:buckets}
  Given a histogram $h$, and a number of samples $n$, define the $k$th \emph{bucket} of $h$ to consist of those histogram entries with probabilities in the half-open interval $(\frac{k}{n\log^{\ccc} n},\frac{k+1}{n\log^{\ccc} n}]$. Letting $h_k$ be $h$ restricted to its $k$th bucket, define $B_{poi}(j,k)=\sum_{x:h_k(x)\neq 0} h(x) poi(nx,j)$ to be the expected number of elements from bucket $k$ that are seen exactly $j$ times, if $Poi(n)$ samples are taken. Given a set of samples $S$, let $B_S(j,k)$ be the number of elements in bucket $k$ of $h$ that are seen exactly $j$ times in the samples $S$, where in both cases $h$ and $n$ are implicit in the notation.
\end{definition}

Given this notion of ``buckets", we define faithful to mean 1) each domain element is seen roughly the number of times we would expect to see it, and 2) for each pair $(j,k)$, the number of domain elements from bucket $k$ that are seen exactly $j$ times is very close to its expectation (where we compute expectations under a Poisson distribution of samples, because ``Poissonization" will simplify subsequent analysis). The first condition of ``faithful" gives weak control on which fingerprint entry each domain element will contribute to, while the second condition gives much stronger control over the aggregate contribution to fingerprint entries by all domain elements within a certain probability ``bucket".

\begin{definition}\label{def:faithful2}
  Given a histogram $h$ and a number of samples $n$, a set of $n$ samples, $S$, is called \emph{faithful} if:
  \begin{enumerate}
    \item Each item of probability $x$ appears in the samples a number of times $j$ satisfying $|nx-j|<\max\{\log^{1.5} n,\sqrt{nx\log^{1.5} n}\}$, and
    \item For each $j<\log^{\ccc} n$ and $k$, we have $|B_{poi}(j,k)-B_S(j,k)|< n^{0.6}$.
  \end{enumerate}
\end{definition}

This notion of ``faithful" holds with near certainty, as shown in the following lemma, allowing us to assume (when specified) in the results in the rest of this section that our learning algorithm receives a faithful set of samples.

\begin{lemma}\label{lem:faithful2}
  For any histogram $h$ and number of samples $n$, with probability $1-n^{-\omega(1)}$, a set of $n$ samples drawn from $h$ will be faithful.
\end{lemma}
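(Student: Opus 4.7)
The plan is to verify the two conditions defining ``faithful'' independently, each with failure probability $n^{-\omega(1)}$, and then union bound over the two.

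For condition 1, the basic tool is a Chernoff tail bound on $j_\alpha\sim\text{Binomial}(n,x_\alpha)$, the number of occurrences of a fixed element $\alpha$. In both regimes---when $nx_\alpha\leq\log^{1.5}n$ (threshold $\log^{1.5}n$) and when $nx_\alpha>\log^{1.5}n$ (threshold $\sqrt{nx_\alpha\log^{1.5}n}$)---a standard Chernoff estimate gives per-element failure probability $e^{-\Omega(\log^{1.5}n)}=n^{-\omega(1)}$. The main subtlety is that the distribution may have countably infinite support, so a naive union bound does not apply. I would split by probability. Elements with $x_\alpha\geq 1/n^{10}$ number at most $n^{10}$ (by $\sum x_\alpha\leq 1$), so they contribute at most $n^{10}\cdot n^{-\omega(1)}=n^{-\omega(1)}$. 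For elements with $x_\alpha<1/n^{10}$, bound the per-element failure by $(e\,n x_\alpha/\log^{1.5}n)^{\log^{1.5}n}$ and aggregate using $\sum_\alpha x_\alpha^{\log^{1.5}n}\leq (\max_\alpha x_\alpha)^{\log^{1.5}n-1}\sum_\alpha x_\alpha\leq n^{-10(\log^{1.5}n-1)}$, which gives total tail contribution $n^{-\omega(1)}$.

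For condition 2, the plan is to Poissonize: replace the $n$ iid samples with $\mathrm{Poi}(n)$ iid samples, so that the counts $j_\alpha$ are mutually independent with $j_\alpha\sim\mathrm{Poi}(nx_\alpha)$. Then for each $(j,k)$, $B_S(j,k)=\sum_{\alpha\in\text{bucket}_k}\mathbf{1}[j_\alpha=j]$ is a sum of independent Bernoullis with expectation exactly $B_{poi}(j,k)$, so a Chernoff bound yields $\Pr[|B_S(j,k)-B_{poi}(j,k)|>n^{0.6}/2]\leq e^{-\Omega(n^{0.2})}$. Since at most $n\log^{2\ccc}n$ pairs $(j,k)$ with $j<\log^\ccc n$ correspond to nonempty buckets, a union bound preserves failure probability $n^{-\omega(1)}$. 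Finally I de-Poissonize using $|\mathrm{Poi}(n)-n|=O(\sqrt{n\log n})$ with probability $1-n^{-\omega(1)}$: under a natural coupling between the $\mathrm{Poi}(n)$- and $n$-sample models, inserting or deleting $O(\sqrt{n\log n})=o(n^{0.6})$ samples alters each $B_S(j,k)$ by at most that amount, and the slack is absorbed into our $n^{0.6}$ threshold.

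I expect the main obstacle to be the infinite-support bookkeeping in condition 1, where the cleanest cutoff and the verification that $\sum_\alpha x_\alpha^{\log^{1.5}n}$ is dominated by the power of $\max_\alpha x_\alpha$ need a brief but careful computation. The de-Poissonization for condition 2 is routine once concentration holds in the independent-Bernoulli model, and the bucket-level analysis is straightforward because the total bucket count is controlled by $\sum x_\alpha\leq 1$.
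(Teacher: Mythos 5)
Your proposal is correct and follows the paper's strategy: Chernoff bounds per domain element for condition~1, Poissonization plus Chernoff for condition~2, union bounds, de-Poissonize. Two observations worth noting.

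First, for condition~1 you carefully handle the potentially countably infinite union, splitting at $x_\alpha = n^{-10}$ and bounding $\sum_\alpha x_\alpha^{\log^{1.5}n}$ by $(\max_\alpha x_\alpha)^{\log^{1.5}n-1}\sum_\alpha x_\alpha$. The paper elides this (it simply asserts ``follows from standard Chernoff/Hoeffding bounds''), so your version is actually the more complete argument; your accounting goes through, since $(en/\log^{1.5}n)^{\log^{1.5}n}\cdot n^{-10(\log^{1.5}n-1)}=n^{-\omega(1)}$.

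Second, for condition~2 your de-Poissonization differs from the paper's. You couple the $\mathrm{Poi}(n)$-sample and the $n$-sample processes, use concentration of $\mathrm{Poi}(n)$, and argue that inserting or deleting the symmetric difference changes each $B_S(j,k)$ by at most one per altered sample; the paper instead uses $\Pr[\mathrm{Poi}(n)=n]=\Theta(1/\sqrt{n})$ and concludes that the failure probability conditioned on drawing exactly $n$ samples is at most a $\Theta(\sqrt n)$ factor larger. Both are standard and both work here; the paper's version is a bit cleaner in that it requires no stability argument for $B_S(j,k)$, while yours is more explicit about why nearby sample sizes behave alike. One minor slip in your version: $\Pr\bigl[|\mathrm{Poi}(n)-n|>c\sqrt{n\log n}\bigr]$ is only polynomially small (roughly $n^{-c^2/2}$), not $n^{-\omega(1)}$; to get superpolynomial confidence you should take the deviation to be, say, $\sqrt{n}\log n$. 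This does not affect the conclusion since $\sqrt{n}\,\mathrm{polylog}(n)=o(n^{0.6})$, so the slack is still absorbed by the $n^{0.6}$ threshold, but the stated rate should be adjusted.
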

\begin{proof}
Since the number of times an item of probability $x$ shows up in $n$ samples is the binomial distribution $Bin(n,x)$, the first condition of ``faithful"---essentially that this random variable will be within $\log^{3/4} n$ standard deviations of its mean--- follows with probability $1-n^{-\omega(1)}$ from standard Chernoff/Hoeffding bounds.


For the second condition, since $Poi(n)$ has probability $\Theta(1/\sqrt{n})$ of equaling $n$, we consider the related process where $Poi(n)$ samples are drawn. The number of times each domain element $x$ is seen is now distributed as $Poi(nx)$, independent of each other domain element. Thus the number of elements from bucket $k$ seen exactly $j$ times is the sum of independent Bernoulli random variables, one for each domain element in bucket $k$. The expected number of such elements is $B_{poi}(j,k)$ by definition. Since $B_{poi}(j,k)\leq n$ by definition, we have that the variance of this random variable is also at most $n$, and thus Chernoff/Hoeffding bounds imply that the probability that it deviates from its expectation by more than $n^{0.6}$ is at most $exp(-n^{0.1})$. Thus the probability of such a deviation is at most a $\Theta(\sqrt{n})$ factor higher when taking exactly $n$ samples than when taking $Poi(n)$ samples; taking a union bound over all $j$ and $k$ yields the desired result.
\end{proof}

\subsection{An estimate of the optimal error}\label{sec:opt-est}

We now introduce the key definition of $dev(\cdot,\cdot)$, which underpins our analysis of the error of estimation algorithms. The definition of $dev(\cdot,\cdot)$ captures the following process: Suppose we have a probability value $m_j$, and will assign this probability value to every domain element that occurs exactly $j$ times in the samples.  We estimate the expected error of this reconstruction, in terms of the probability that each domain element shows up exactly $j$ times. While the below definition, stated in terms of a Poisson process, is neither clearly related to the optimal error $opt(h,n)$, nor the actual error of any specific algorithm, it has particularly clean properties which will help us show that it can be related to both $opt(h,n)$ (in this subsection) as well as the expected error achieved by Algorithm~\ref{def:alg2} (shown in Section~\ref{sec:Lipschitz}).

\begin{definition}
  Given a histogram $h$, a real number $m$, a number of samples $n$, and a nonnegative integer $j$, define $dev_{j,n}(h,m)=\sum_{x:h(x)\neq 0} |x-m|h(x)poi(nx,j)$.
\end{definition}

Intuitively, $dev_{j,n}(h,m)$ describes the expectation---over taking $Poi(n)$ samples from $h$---of the sum of the deviations between $m$ and each probability $x$ of a element seen $j$ times among the samples. Namely, $dev_{j,n}(h,m)$ describes to what degree $m$ is a good probability to which we can ascribe all domain elements seen $j$ times, among $\approx n$ samples from $h$.

This definition provides crucial motivation for how Definition~\ref{def:median} sets the medians $m_{h,j,n}$ used in Algorithm~\ref{def:alg2}, since $m_{h,j,n}$ is the value of $m$ that minimizes the previous definition, $dev_{j,n}(h,m)$, since both are defined via the same Poisson weights $poi(nx,j)$. (The median of a---possibly weighted---set of numbers is the location $m$ that minimizes the total---possibly weighted---distance from the set to $m$.)

We now show the key result of this section, that the definition of ``faithful" induces precise guarantees on the spread of probabilities of those elements seen $j$ times. Subsequent lemmas will relate this to the performance of both the optimal algorithm and to our own Algorithm~\ref{def:alg2}.

\begin{lemma}\label{lem:opt-est}
  Given a histogram $h$, let $S$ be the multiset of probabilities of a faithful set of samples of size $n$. For each index $j<\log^{\ccc} n$, consider those domain elements that occur exactly $j$ times in the samples and let $S_j$ be the multiset of probabilities of those domain elements. Let $\sigma_j$ be the sum over $S_j$ of each element's distance from the median (counting multiplicity) of $S_j$. Then $\sum_{j<\log^{\ccc} n}|\sigma_j-dev_{j,n}(h,m_{h,j,n})|=O(\log^{-\ccc}n)$.
\end{lemma}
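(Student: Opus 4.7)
The plan is to reformulate both $\sigma_j$ and $dev_{j,n}(h,m_{h,j,n})$ as minima over $m$ of natural $\ell_1$-deviation functions, and then bound $|\sigma_j - dev_{j,n}(h,m_{h,j,n})|$ via the uniform closeness of these two functions. Define $f_j(m) \defeq \sum_{i\in S_j}|p_i-m|$ and $g_j(m) \defeq \sum_x |x-m|\,h(x)\,poi(nx,j)$. The minimizing property of the unweighted median gives $\sigma_j=\min_m f_j(m)$, and by Definition~\ref{def:median} we likewise have $dev_{j,n}(h,m_{h,j,n})=\min_m g_j(m)$. Thus $|\sigma_j - dev_{j,n}(h,m_{h,j,n})|\le\sup_m |f_j(m)-g_j(m)|$ whenever the supremum is taken over any set containing both minimizers, and both minimizers in fact lie in a window around $j/n$ of radius $O((\sqrt{j}+1)\log^{O(1)} n / n)$ by Faithfulness Condition~1 together with standard Poisson tail bounds on $poi(nx,j)$.

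Second, I would compare $f_j(m)$ with $g_j(m)$ bucket by bucket using the decomposition of Definition~\ref{def:buckets}, being careful to take the \emph{difference} of the two per-bucket approximations rather than approximating each separately. Within a bucket $B_k$ of width $\frac{1}{n\log^{\ccc} n}$, the deviation $|x-m|$ varies by at most $\frac{1}{n\log^{\ccc} n}$, and the Poisson weight $poi(nx,j)$ varies by an amount bounded by $\frac{1}{\log^{\ccc} n}\bigl(poi(nx^*_k,j-1)+poi(nx^*_k,j)\bigr)$ (using $\frac{d}{d\lambda}poi(\lambda,j)=poi(\lambda,j-1)-poi(\lambda,j)$). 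Choosing a representative $x^*_k\in B_k$, the bucket-$k$ contribution to $g_j(m)$ is $|x^*_k-m|\cdot B_{poi}(j,k)$ up to these small errors, and to $f_j(m)$ is $|x^*_k-m|\cdot B_S(j,k)$ up to an analogous $|x-m|$-variation error. Faithfulness Condition~2, $|B_S(j,k)-B_{poi}(j,k)|<n^{0.6}$, then bounds the bucket-$k$ contribution to $|f_j(m)-g_j(m)|$ by $|x^*_k-m|\cdot n^{0.6}$ plus the in-bucket errors.

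The third step is to sum these errors over $k$ and over $j<\log^{\ccc} n$ and verify the total is $O(\log^{-\ccc}n)$. Three sources contribute: (i) the faithfulness-count discrepancy ($n^{0.6}$ per bucket, scaled by $|x^*_k-m|$): by Faithfulness Condition~1 and Poisson tails only $\polylog(n)$ buckets per $j$ are ``relevant,'' in which range $|x^*_k-m|=O(\polylog(n)/n)$, giving a total of $\polylog(n)\cdot n^{-0.4}=o(\log^{-\ccc}n)$; (ii) the within-bucket $|x-m|$-variation error: taking the difference of bucket approximations causes this error to telescope to the bucket-width factor $\frac{1}{n\log^{\ccc} n}$ times a weighted element-count $\sum_{j,k}(B_S(j,k)+B_{poi}(j,k))$, which is $O(n)$ after restricting to the Poisson-relevant range, yielding $O(\log^{-\ccc}n)$; (iii) the within-bucket Poisson-variation error: telescoping via $\sum_j poi(nx,j)=1$ relates this to $O(\log^{-\ccc}n)\cdot\sum_j g_j(m^*_j)$, bounded using Poisson concentration on $|x-m^*_j|$ (so the typical deviation is $O(\sqrt{j}/n)$) together with the sample-count identity $\sum_j j\cdot F_j=n$ and Cauchy--Schwarz on $\sum_j\sqrt{j}F_j$.

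The main obstacle is tightly controlling these three error sources to match the $O(\log^{-\ccc}n)$ target, since a loose accounting easily loses polylog factors, particularly on source (iii). The key leverage is Poisson concentration: the weighted ``typical'' $|x-m^*_j|$ is $O(\sqrt{j}/n)$, rather than the worst-case $O(\log^{O(1)}n/n)$ suggested by the full window size. A secondary subtlety is the $j=0$ case, where $S_0$ can contain arbitrarily many unseen low-probability elements; rather than bounding their contributions to $f_0$ and $g_0$ separately (which would blow up), one must argue that these large but highly correlated contributions nearly cancel in the difference, using Faithfulness Condition~2 and the fact that $poi(nx,0)\approx 1$ for $nx\ll 1$.
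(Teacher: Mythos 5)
Your approach is essentially the paper's: both exploit the variational characterization of the median to argue that $\sigma_j$ and $dev_{j,n}(h,m_{h,j,n})$, each a ``minimum total $\ell_1$ deviation'' of a weighted multiset, differ by at most the cost of transforming one weighted multiset into the other, and both then pay for this transformation bucket by bucket via the two faithfulness conditions. The paper phrases the robustness as ``moving $\alpha$ weight a distance $\beta$ changes the total distance to the median by at most $\alpha\beta$''; your $|\min_m f_j-\min_m g_j|\le\sup_m|f_j(m)-g_j(m)|$ is the same fact, and the per-bucket bounds it produces are identical: $O(\log^{-\ccc}n)$ from moving $\le n$ observed elements within buckets of width $1/(n\log^{\ccc}n)$, and $O(n^{-0.4}\log^{4\ccc}n)$ from the $\le n^{0.6}$ per-bucket count discrepancies over the $O(\log^{3\ccc}n)$ relevant $(j,k)$ pairs times the $O(\log^{\ccc}n/n)$ probability range.

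Two corrections. First, your error source~(iii), which you flag as ``the main obstacle,'' is spurious. When you approximate the bucket-$k$ contribution to $g_j(m)$ by $|x_k^*-m|\cdot B_{poi}(j,k)$, the factor $B_{poi}(j,k)=\sum_{x\in B_k}h(x)\,poi(nx,j)$ already carries the exact Poisson weights, so the only approximation error is the within-bucket variation of $|x-m|$, i.e.\ your source~(ii). A separate ``Poisson-variation'' term would appear only if you wrongly replaced $B_{poi}(j,k)$ by $poi(nx_k^*,j)\cdot\sum_{x\in B_k}h(x)$, which is not what faithfulness condition~2 controls; with source~(iii) dropped the accounting closes cleanly and the Cauchy--Schwarz maneuver on $\sum_j\sqrt{j}\FF_j$ is unneeded. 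Second, the $j=0$ ``secondary subtlety'' is out of scope: the fingerprint of Definition~\ref{def:fingerprint} starts at $\FF_1$, Algorithm~\ref{def:alg2} assigns probabilities only to elements actually observed (so $j\ge1$), and the bound $\sum_{j\ge1,k}B_{poi}(j,k)\le n$ that closes source~(ii) uses $\sum_{j\ge1}poi(nx,j)=1-e^{-nx}\le nx$ in an essential way and would fail if $j=0$ were included. Indeed, for a distribution with infinite support the median of $S_0$ and the quantity $dev_{0,n}$ are not even well-defined, so the cancellation argument you gesture at cannot be made rigorous --- and is in any case unnecessary.
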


\begin{proof}

   Recall that $\sigma$ computes the total distance of the (unweighted) multiset $S_j$ from its median, while $dev_{j,n}(h,m_{h,j,n})$ is an analogous (weighted) quantity for the true histogram, with each entry $x$ having multiplicity $h(x)$ and weight $poi(nx,j)$. In the first case, sampling means that each element of probability $x$ either shows up exactly $j$ times (with some binomial probability) and is counted with weight 1, or does not show up $j$ times and is not counted; in the second case, instead of sampling, each entry $x$ from the histogram is counted with weight $poi(nx,j)<1$, capturing roughly the average effect of sampling (except with Poisson instead of binomial weight). By the definition of ``faithful", the total weight coming from each bucket $k$ in both cases is within $n^{0.6}$ of each other (since $j<\log^{\ccc} n$). We consider only buckets $k\leq 2\log^{2\cdot\ccc}n$, corresponding to probabilities less than $\frac{2}{n}\log^\ccc n$, since the first condition of ``faithful" means that no higher probability elements will be seen $j<\log^\ccc n$ times.


   Consider transforming one weighted multiset into the other (where elements of $S_j$ are interpreted as having weight 1 each), maintaining a bound on how much the total distance from the median changes. We make crucial use of the fact that the ``total distance to the median" is robust to small changes in the weighted multiset, since the median is the location that minimizes this total distance. Moving $\alpha$ weight by a distance of $\beta$ can increase the total (weighted) distance to the median by at most $\alpha\cdot \beta$ since this is how much the total weighted distance to the \emph{old} median changes, and the new median must be at least as good; conversely, such a move cannot decrease the total distance by more than $\alpha\cdot\beta$ as the inverse move would violate the previous bound. Adding $\alpha$ weight to the distribution at distance $\beta$ from the current median similarly cannot decrease the total distance, but also cannot increase the total distance by more than $\alpha\cdot\beta$, with the corresponding statements holding for removing $\alpha$ weight.

   Thus, transforming all the $S_j$ into their Poissonized analogs requires two types of transformations: 1) moving up to $n$ samples within their buckets; 2) adding or removing up to $n^{0.6}$ weight from buckets for various combinations of $j$ and $k$. Since buckets have width $1/(n\log^{\ccc}n)$, transformations of the first type change the total distance to the median by at most $\log^{-\ccc}n$; since $j<\log^{\ccc}n$ and all buckets above probability $\frac{2}{n}\log^{\ccc}n$ are empty, transformations of the second type change the total distance by at most the product of the weight adjustment $n^{0.6}$, the number of $j,k$ pairs $2\log^{3\cdot\ccc}n$, and size of the probability range under consideration which is $\frac{2}{n}\log^{\ccc}n$, yielding a bound of $\frac{4}{n^{0.4}}\log^{4\cdot\ccc}n$. Thus in total the change is $O(\log^{-\ccc}n)=o(1)$ as desired.
\end{proof}


The above lemma essentially shows that $dev_{j,n}(h,m_{h,j,n})$ captures how well we could hypothetically estimate the probabilities of all the domain elements seen $j$ times, under the unrealistically optimistic assumption that we know the (unlabeled) multiset of probabilities of elements seen $j$ times and estimate all these probabilities optimally by their median. Before showing how our algorithm can perform almost this well based on only the samples, we first formalize this reasoning.

\begin{definition}
  We call a distribution learner ``simple" if all the domain elements seen exactly $j$ times in the samples get assigned the same probability.
\end{definition}
Given $n$ samples from a distribution $p$, with $p_{(j)}$ being those domain elements that occurred exactly $j$ times in the sample, we note that the probability of obtaining these samples is invariant to any permutation of $p_{(j)}$. Thus if a hypothetical learner $L$ assigns different probabilities to different elements seen $j$ times in the sample, then its average performance over a random permutation of the domain elements can only improve if we simplify $L$, by having it instead assign to all the elements seen $j$ times the \emph{median} of the multiset that it was originally assigning.

For this reason, when we are discussing an optimal distribution learner, we will henceforth assume it is simple.

\begin{lemma}\label{lem:medians}
  Given a histogram $h$, let $S$ be the multiset of probabilities of a faithful set of samples of size $n$. Given an index $j<\log^{\ccc} n$, consider those domain elements that occur exactly $j$ times in the sample; let $S_j$ be the multiset of probabilities of those domain elements. Let $\sigma_j$ be the sum over $S_j$ of each element's distance from the median of $S_j$ (counting multiplicity). Then any simple learner, when given the sample, must have error at least $\sigma_j$ on the domain elements that appear $j$ times in the sample.
\end{lemma}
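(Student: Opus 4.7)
The plan is to observe that this is essentially a one-line application of the defining variational property of the median: for any (multi)set of real numbers, the sum of absolute deviations from any fixed point is minimized at the median of the set. Once we articulate what a simple learner does on the elements seen exactly $j$ times, the bound $\sigma_j$ falls out immediately.

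Concretely, I would proceed as follows. First, unpack the definition of simple: on any given realization of the samples, the learner outputs a single probability value $m$ (a function of the samples, but not depending on which specific element seen $j$ times we are talking about) and assigns this $m$ to every domain element that appeared exactly $j$ times. Hence the contribution to $\|p - q\|_1$ from the elements seen $j$ times is exactly
\[
\sum_{x \in S_j} |x - m|,
\]
where $S_j$ is the multiset of true probabilities of those elements (counting multiplicity).

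Second, I would apply the standard fact that for any finite (multi)set $T$ of real numbers and any $m \in \mathbb{R}$, $\sum_{x \in T}|x - m| \ge \sum_{x \in T}|x - \mathrm{median}(T)|$. This is immediate: writing the sum and differentiating (or using a standard rearrangement argument), the subgradient vanishes precisely at the median, and the function is convex in $m$. Applied to $T = S_j$, whose median-deviation sum is by definition $\sigma_j$, this gives
\[
\sum_{x \in S_j} |x - m| \;\ge\; \sigma_j
\]
for every choice of $m$. Since the learner's choice $m$ is some particular real number (whatever function of the samples it happens to be), the bound holds pointwise on the sample, and in particular as a lower bound on the error contributed by the elements seen $j$ times.

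There is essentially no obstacle: the content of the lemma is just that, having committed to a single label for all elements in a group, the learner cannot beat the median of that group, and so it cannot beat $\sigma_j$. Note that the statement does not require the learner to \emph{know} $S_j$; it only says that whatever value $m$ the learner emits, the \emph{true} error on those elements is at least $\sigma_j$. The faithfulness hypothesis plays no role here — it is used in Lemma~\ref{lem:opt-est} to then relate $\sigma_j$ to the cleaner Poissonized quantity $dev_{j,n}(h,m_{h,j,n})$, but Lemma~\ref{lem:medians} itself is a deterministic statement about any sample.
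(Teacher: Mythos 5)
Your proof is correct and is essentially the paper's own argument spelled out in detail: the paper's one-line proof invokes exactly the variational characterization of the median that you make explicit. Your observation that the faithfulness hypothesis is not actually used in this particular lemma is also accurate.
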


\begin{proof}
  The median of $S_j$ is the best possible estimate any simple learner can yield---even given the true distribution---so the error of this estimate bounds the performance of a simple learner.
\end{proof}

Combining this with Lemma~\ref{lem:opt-est} immediately yields:
\begin{corollary}\label{cor:opt-est}
  For any distribution $h$, the total error of any simple learning algorithm, given $n$ faithful samples from $h$, is at least $\left(\sum_{j<\log^{\ccc} n} dev_{j,n}(h,m_{h,j,n})\right)-O(\log^{-\ccc}n)$. Further, for any algorithm---simple or not---if we average its performance over all relabelings of the domain of $h$ and the corresponding relabeled samples, it will have expected error bounded by the same expression.
\end{corollary}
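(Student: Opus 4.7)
The proof plan is to simply paste together Lemma~\ref{lem:opt-est} and Lemma~\ref{lem:medians}, plus a symmetrization argument for the second (non-simple) part. First I would fix a simple learner $L$ running on a faithful sample of size $n$ from $h$, and apply Lemma~\ref{lem:medians} separately for each $j<\log^{\ccc} n$: the learner's total $\ell_1$ error on the domain elements that appear exactly $j$ times is at least $\sigma_j$, because simplicity forces it to assign a common probability to those elements, and by definition the median of $S_j$ is the unique $\ell_1$-minimizer among constants. Summing over $j<\log^{\ccc}n$ and discarding the (nonnegative) contribution from $j\geq \log^{\ccc}n$ yields a total-error lower bound of $\sum_{j<\log^{\ccc}n}\sigma_j$.

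Next, I would invoke Lemma~\ref{lem:opt-est} and the triangle inequality to replace the $\sigma_j$'s by the $dev$ quantities:
\[\left|\sum_{j<\log^{\ccc} n}\sigma_j-\sum_{j<\log^{\ccc} n} dev_{j,n}(h,m_{h,j,n})\right|\leq \sum_{j<\log^{\ccc} n}\left|\sigma_j-dev_{j,n}(h,m_{h,j,n})\right|=O(\log^{-\ccc} n),\]
which combines with the previous paragraph to give the stated lower bound for simple learners on faithful samples.

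For the second part---extending to arbitrary (not-necessarily-simple) algorithms when the error is averaged over relabelings of the domain of $h$---I would use the exchangeability argument already sketched in the paragraph just before Lemma~\ref{lem:medians}. Given any algorithm $L$ and any sample $S$, for each $j$ the conditional distribution of the true probabilities of the elements seen $j$ times, taken over a uniformly random relabeling of the domain (and the corresponding relabeled samples), is exchangeable within $S_j$. Hence averaging $L$'s $\ell_1$ error over relabelings equals the average $\ell_1$ error of the simplified learner $L'$ that assigns to every element seen $j$ times the median of the probabilities $L$ would have assigned across the orbit; by the $\ell_1$-optimality of the median, $L'$'s error is no larger than $L$'s averaged error. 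Since $L'$ is simple, the lower bound from the first part applies and propagates back to $L$.

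The only subtlety---and what I expect to be the main obstacle---is that ``faithfulness'' is a property of the sample relative to the labeled histogram, whereas we are averaging over relabelings and over the sampling process simultaneously. The point to verify is that faithfulness depends only on the fingerprint and on the bucket counts $B_S(j,k)$, both of which are preserved under relabeling, so the event is relabeling-invariant; combined with Lemma~\ref{lem:faithful2}, which guarantees faithfulness with probability $1-n^{-\omega(1)}$, the contribution of non-faithful samples to the averaged error is negligible and gets absorbed into the $O(\log^{-\ccc}n)$ slack.
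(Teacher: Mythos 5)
Your proposal is correct and follows essentially the same route as the paper, which derives the corollary directly by combining Lemma~\ref{lem:opt-est} with Lemma~\ref{lem:medians}, using the pre-Lemma~\ref{lem:medians} discussion that averaging over relabelings lets one pass without loss to a simple learner. Your extra observation that faithfulness depends only on relabeling-invariant statistics (the fingerprint and the bucket counts $B_S(j,k)$) is a correct and worthwhile clarification of a point the paper leaves implicit.
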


\subsection{Our error estimate is Lipschitz with respect to mis-estimating the distribution}\label{sec:Lipschitz}

We now relate the error bound of Corollary~\ref{cor:opt-est} to the performance of our algorithm, via two steps. The bound in the corollary is in terms of $m_{h,j,n}$, the medians computed in terms of the true histogram $h$ which is unknown to the algorithm; instead the algorithm works with an estimate $\bar{u}$ of the true histogram. The next lemma shows that estimating in terms of $\bar{u}$ is almost as good as using $h$.

\begin{fact}\label{fact:tail}
  For any distribution $h$, index $j\geq 1$, and real parameter $t\geq 1$, weighting each domain element $x$ by $poi(nx,j)$, the total weight on domain elements that are at least $t$ standard deviations away from $\frac{j}{n}$---namely, for which $|nx-j|\geq t\sqrt{j}$ is at most $n\cdot exp(-\Omega(t))$.
\end{fact}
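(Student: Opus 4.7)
The plan is to exploit the normalization $\sum_x x\cdot h(x) = 1$ to collapse the tail sum to a pointwise maximum of a Poisson ratio, and then quote a standard Poisson tail estimate. Let $A = \{x : |nx-j|\ge t\sqrt{j}\}$. Factoring one power of $x$ out of each term and applying the normalization gives
\[\sum_{x\in A} h(x)\,\poi(nx,j) \;=\; \sum_{x\in A} \bigl(x\cdot h(x)\bigr)\cdot\frac{\poi(nx,j)}{x} \;\le\; \max_{x\in A}\,\frac{\poi(nx,j)}{x}.\]
The identity $\poi(\lambda,j)/\lambda = (1/j)\,\poi(\lambda,j-1)$, which is immediate from $\poi(\lambda,j)=e^{-\lambda}\lambda^{j}/j!$, then rewrites this upper bound as $\tfrac{n}{j}\,\max_{x\in A}\poi(nx,j-1)$.

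The next step is a standard Poisson tail estimate: for $t\ge 1$ and $\lambda\ge 0$ with $|\lambda-j|\ge t\sqrt{j}$, one has $\poi(\lambda,j-1) \le (C/\sqrt{j})\,\exp(-\Omega(t))$. Indeed, the mode of $\poi(\cdot,j-1)$ is at $\lambda=j-1$, its peak value is $O(1/\sqrt{j})$ by Stirling, and a point at distance $t\sqrt{j}$ from $j$ is at distance at least $t\sqrt{j}-1$ from the mode, which is comparable to $t\sqrt{j-1}$ once $t\ge 1$. Combining everything,
\[\sum_{x\in A} h(x)\,\poi(nx,j) \;\le\; \frac{n}{j}\cdot \frac{C}{\sqrt{j}}\,\exp(-\Omega(t)) \;\le\; n\,\exp(-\Omega(t)),\]
which is exactly the claim.

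The main obstacle is proving the Poisson tail bound cleanly so that both the near-mode ``sub-Gaussian'' regime and the far-tail ``sub-exponential'' regime are covered by the single bound $\exp(-\Omega(t))$ whenever $t\ge 1$. The cleanest route is the Stirling/Kullback identity $\poi(\lambda,k)/\poi(k,k) = \exp(-k\,f(\lambda/k))$ with $f(y)=y-1-\log y\ge 0$; one checks $k\,f(\lambda/k) \ge c\cdot \min\!\bigl((\lambda-k)^{2}/k,\ |\lambda-k|\bigr)$ for an absolute constant $c>0$, and when $|\lambda-(j-1)|\ge t\sqrt{j}-1$ with $t\ge 1$ this minimum is at least a constant multiple of $t$. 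A couple of edge cases require a short separate check but introduce no new ideas: for $j=1$ we have $\poi(\cdot,0)=e^{-\lambda}$, which gives the bound directly; and values $x$ for which $nx$ is so small that $\poi(nx,j)$ vanishes (most notably $x=0$, since $\poi(0,j)=0$ for $j\ge 1$) contribute nothing to either side.
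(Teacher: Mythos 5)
The paper states Fact~\ref{fact:tail} without supplying a proof, so there is no argument in the text to compare against; your proof supplies one, and it is correct. Factoring $x\,h(x)$ out of each term and invoking $\sum_x x\,h(x)=1$ to reduce the tail sum to $\max_{x\in A}\poi(nx,j)/x=\tfrac{n}{j}\max_{x\in A}\poi(nx,j-1)$ is exactly the right move, and the relative-entropy/Stirling identity $\poi(\lambda,k)=\poi(k,k)e^{-k f(\lambda/k)}$ with $f(y)=y-1-\log y\ge c\min\bigl((y-1)^2,|y-1|\bigr)$ does give a uniform $\exp(-\Omega(t))$ for $t\ge 1$: the sub-Gaussian regime yields exponent $\Omega(t^2)\ge\Omega(t)$, and the sub-exponential regime yields $\Omega(t)$ directly. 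The $j=1$ case must be treated separately as you note, since then $k=0$ and $\poi(\cdot,0)=e^{-\lambda}$ gives the bound immediately (the constraints $\lambda>0$, $t\ge 1$, $|\lambda-1|\ge t$ force $\lambda\ge 1+t$). The one place that warrants a touch more care is extracting an absolute constant inside the $\Omega(\cdot)$ from $|\lambda-(j-1)|\ge t\sqrt{j}-1$ when $j$ is small (say $j\in\{2,3\}$), but the arithmetic does go through for $t\ge 1$. Your final bound, $O(n/j^{3/2})\exp(-\Omega(t))$, is actually slightly stronger than the statement by a factor of $j^{3/2}$.
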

\begin{lemma}\label{lem:lip}
  Given a number of samples $n$, a histogram $h$ and a second histogram $\bar{u}$ that is 1) close to $h$ in the sense of Corollary~\ref{prop:hist}, in that there exists distributions $p,q$ corresponding to $h,\bar{u}$ respectively for which $\sum_{i}| \max(p(i),\frac{1}{n}\log^{-\eee}n)-\max(q(i),\frac{1}{n}\log^{-\eee}n)| \le \log^{-\eee}n,$ and 2) the histogram $\bar{u}$ is ``fattened" in the sense that for each $j\leq\log^{\ccc} n$ there are at least $\frac{n}{j\log^{2\cdot \ccc} n}$ elements of probability $\frac{j}{n}$. Then $\sum_{j<\log^{\ccc}n}dev_{j,n}(h,m_{\bar{u},j,n})\leq o(1)+\sum_{j<\log^{\ccc}n}dev_{j,n}(h,m_{h,j,n})$.
\end{lemma}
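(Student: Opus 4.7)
The plan is to use the standard three-step argument for comparing minimizers. Since $m_{\bar{u},j,n}$ minimizes $dev_{j,n}(\bar{u},\cdot)$ by Definition~\ref{def:median}, we have $dev_{j,n}(\bar{u}, m_{\bar{u},j,n}) \leq dev_{j,n}(\bar{u}, m_{h,j,n})$; combining this with the triangle inequality gives
\[
dev_{j,n}(h, m_{\bar{u},j,n}) - dev_{j,n}(h, m_{h,j,n}) \leq \bigl|dev_{j,n}(h, m_{\bar{u},j,n}) - dev_{j,n}(\bar{u}, m_{\bar{u},j,n})\bigr| + \bigl|dev_{j,n}(\bar{u}, m_{h,j,n}) - dev_{j,n}(h, m_{h,j,n})\bigr|.
\]
It therefore suffices to show that, for each of the two relevant choices of $m$, we have $\sum_{j<\log^{\ccc} n}|dev_{j,n}(h,m)-dev_{j,n}(\bar{u},m)| = o(1)$.

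Setting $\phi_m(x) := |x-m|\poi(nx,j)$, the identification $dev_{j,n}(h,m)=\sum_i \phi_m(p(i))$ (and analogously for $\bar{u}$ using $q(i)$ with the pairing from hypothesis 1) gives $dev_{j,n}(h,m)-dev_{j,n}(\bar{u},m)=\sum_i[\phi_m(p(i))-\phi_m(q(i))]$. For each pair $(a,b)=(p(i),q(i))$, I would split into three cases, using $\tau = \frac{1}{n}\log^{-\eee} n$: (i) both $\geq \tau$, (ii) both $<\tau$, and (iii) mixed. In case (i), I expand
\[
|\phi_m(a)-\phi_m(b)| \leq |a-b|\poi(na,j) + |b-m|\cdot|\poi(na,j)-\poi(nb,j)|;
\]
the first term sums over $j$ to exactly $|a-b|$, and aggregating over $i$ gives a total of at most $\sum_i|\max(a,\tau)-\max(b,\tau)|\leq \log^{-\eee}n=o(1)$. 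In case (ii), $\poi(nx,j)\leq (n\tau)^j/j!=(\log^{-\eee}n)^j/j!$ for $j\geq 1$ and $x<\tau$, so the contribution aggregated over pairs and over $j$ is a geometric series in $\log^{-\eee}n$ weighted by the total probability mass below $\tau$ (bounded via $\sum_x xh(x)=1$), again $o(1)$. The mixed case (iii) is handled at the boundary $x=\tau$ by combining the two estimates.

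A crucial auxiliary step is controlling the medians $m_{h,j,n}$ and $m_{\bar{u},j,n}$. By the Poisson tail bound (Fact~\ref{fact:tail}), the $\poi(nx,j)$-weighted measure places essentially all its mass in a range of radius $O(\sqrt{j}\log n/n)$ around $x=j/n$, so whenever the total weight is non-negligible the median lies in this range; this controls $m_{h,j,n}$ unconditionally. Hypothesis 2 (the fattening with $\Omega(n/(j\log^{\twiceccc}n))$ copies at probability $j/n$) guarantees that $m_{\bar{u},j,n}$ is likewise anchored near $j/n$, preventing small errors in the histogram estimate from pulling the median to a spurious secondary mode. In particular, this localization ensures that $|b-m|$ in the expansion above is small precisely where $\poi(nb,j)$ is large, which is exactly what we need for the second term of case (i).

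The main obstacle, and the reason for the careful balancing of $\ccc$, $\twiceccc$, and $\eee$, is controlling the second term $\sum_{j<\log^{\ccc}n}|b-m|\cdot|\poi(na,j)-\poi(nb,j)|$ in the case (i) expansion when summed over pairs. A crude Lipschitz bound gives an $O(1)\cdot|a-b|$ contribution per $j$ and sums to $\log^{\ccc-\eee}n$, which is not $o(1)$ when $\ccc>\eee$ as in the parameter regime at hand. Recovering $o(1)$ requires exploiting that $|\poi(na,\cdot)-\poi(nb,\cdot)|$ telescopes in $j$ (both being probability distributions on $j$, so their total variation is only $O(n|a-b|/\sqrt{na})$) and that $|b-m_j|$ is small precisely for the $j$ in the Poisson mass of $b$. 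Combining this structural cancellation with the anchoring provided by the fattening, together with the rapid decay from case (ii), yields the overall $o(1)$ bound.
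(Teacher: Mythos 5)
Your high-level plan---the three-step inequality chain via the optimality of $m_{\bar{u},j,n}$, followed by a Lipschitz argument in the histogram argument of $dev_{j,n}$, with the fattening used to anchor $m_{\bar{u},j,n}$ near $j/n$---matches the structure of the paper's proof. However, there is a genuine gap that would cause the argument to break.

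The claim that the Poisson tail bound ``controls $m_{h,j,n}$ unconditionally'' is false, and this is not a cosmetic issue. When the total Poisson-weighted mass $\sum_{x}h(x)poi(nx,j)$ is negligible (exponentially small), the median $m_{h,j,n}$ can sit at \emph{any} probability value in the support of $h$, including values $\Theta(1)$ far from $j/n$. In that regime your third inequality, $dev_{j,n}(\bar{u},m_{h,j,n})\leq o(1)+dev_{j,n}(h,m_{h,j,n})$, fails outright: the right-hand side is exponentially small (since $h$ has almost no Poisson-weighted mass at index $j$), while the left-hand side can be large---the $\Omega(n/(j\log^{\twiceccc}n))$ fattened elements of $\bar{u}$ at probability $j/n$ each contribute weight $\Theta(1/\sqrt{j})$ and distance $|j/n - m_{h,j,n}|$, which need not be small. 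The paper avoids this by splitting into cases on whether $|n\cdot m_{h,j,n}-j|$ exceeds $\sqrt{j}\log^{\ddd}n$: in the far case it does \emph{not} use the three-step chain at all, but instead directly bounds the target quantity $dev_{j,n}(h,m_{\bar{u},j,n})$ by $\exp(-\Omega(\log^{\ddd}n))$ using Fact~\ref{fact:tail} (small total weight implies small $dev$, regardless of where $m_{\bar{u},j,n}$ sits). Your proposal needs this separate branch; without it the proof does not go through.

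A secondary concern: you correctly identify that the naive per-$j$ Lipschitz bound aggregates to $\log^{\ccc-\eee}n\not\to 0$, and that the rescue must come from a cancellation across $j$, but you do not actually establish the required bound. The paper's resolution is to compute $\sum_{j<\log^{\ccc}n}\frac{d}{dx}\bigl(|x-m_j|poi(nx,j)\bigr)$ explicitly, bounding it by $O(\log^{\ddd}n)$ \emph{uniformly in $x$} under the hypothesis that $|nm_j-j|\leq\sqrt{j}\log^{\ddd}n$ (split into $x<1/n$ and $x\geq 1/n$, using the decay of $poi(nx,j-1)$ away from $j\approx nx$). Since $\ddd<\eee$, multiplying by the $\log^{-\eee}n$ total $\ell_1$ change gives $o(1)$. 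Your sketch via the Poisson total-variation distance points in the same direction, but to close the argument you would need to verify that the $|b-m_j|$ factor in the regions where Poisson masses differ appreciably is small enough that the final exponent beats $\eee$, which is precisely the delicate accounting you have deferred. The case (ii)/(iii) analysis (mass below $\tau$) is also only gestured at; the paper handles this by treating the sub-$\tau$ rearrangement as a separate type of modification and summing the resulting geometric series in $(\log^{-\eee}n)^{j-1}$ over $j\geq 2$, with a bespoke argument for $j=1$.
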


Since for each $j$, as noted earlier, $m_{h,j,n}$ is the quantity which minimizes $dev_{j,n}(h,m)$, each term $dev_{j,n}(h,n_{\bar{u},j,n})$ on the left hand side is  greater than or equal to the corresponding $dev_{j,n}(h,n_{h,j,n})$ on the right hand side, so the lemma implies that the left and right hand sides of the expression in the lemma, beyond having related sums, are in fact term-by-term close to each other.

The proof relies on first comparing $m_{h,j}$ and $m_{\bar{u},j}$ to $\frac{j}{n}$, and then showing that $dev_j(h,m)$ is Lipshitz with respect to changes in $h$ of the type described by the guarantees of Corollary~\ref{prop:hist}.
\begin{proof}[Proof of Lemma~\ref{lem:lip}]
We drop the ``$,n$" subscripts here for notational convenience.

Recall that the quantities $m_{h,j}$ and $m_{\bar{u},j}$ are medians computed after weighting by a Poisson function centered at $j$, and thus we would expect these medians to be close to $\frac{j}{n}$. We first show that the ``fattening" condition makes $m_{\bar{u},j}$ well-behaved, and then show, given this, that the lemma works both in the case that $m_{h,j}$ is far from $\frac{j}{n}$, and then for the case where they are close.

By condition 2 of the lemma, the ``fattening" assumption, for any index $j<\log^{\ccc} n$, we have $\sum_{x:\bar{u}(x)\neq 0} h(x) poi(nx,j)=1/\log^{O(1)}n$. Thus, by Fact~\ref{fact:tail}, the median $m_{\bar{u},j}$ must satisfy $|n\cdot m_{\bar{u},j}-j|<\sqrt{j}\log^{o(1)}n$, since the fraction of the Poisson-weighted distribution that is at locations more than $\frac{1}{n}\sqrt{j}\log^{\Theta(1)}n$ distance from $\frac{j}{n}$ is (much) less than $1/2$.

Given the above bound on $m_{\bar{u},j}$, we now turn to $m_{h,j}$. Consider the case $|n\cdot m_{h,j}-j|>\sqrt{j}\log^{\ddd}n$. By Fact~\ref{fact:tail}, weighting each domain element $x$ by $poi(nx,j)$, the total weight on the far side of the median $m_{h,j}$ from $\frac{j}{n}$, is at most $n\cdot exp(-\Omega(\log^{\ddd}n))$. Since (by definition of ``median") half the weight is on each side of the median, the total weight $\sum_{x:h(x)\neq 0}h(x)poi(nx,j)$ must also be bounded by $n\cdot exp(-\Omega(\log^{\ddd}n))$. Recall the definition of the left hand side of the inequality of the lemma, $dev_{j}(h,m_{\bar{u},j})=\sum_{x:h(x)\neq 0} |x-m_{\bar{u},j}|h(x)poi(nx,j)$. Thus for the portion of this sum where $x<\frac{2}{n}\log^2 n$, since from the previous paragraph $m_{\bar{u},j}$ is also bounded by $\frac{2}{n}\log^2 n$ for large enough $n$, we can bound $\sum_{x<\frac{2}{n}\log^2 n:h(x)\neq 0} |x-m_{\bar{u},j}|h(x)poi(nx,j)$ by the product $n\cdot exp(-\Omega(\log^{\ddd}n))\cdot \frac{2}{n}\log^2 n=exp(-\Omega(\log^{\ddd}n))$. For those $x\geq \frac{2}{n}\log^2 n$, since $j<\frac{1}{n}\log^{\ccc} n$, we have the tail bounds $poi(nx,j)=n^{-\omega(1)}$, implying the total for such $x$ is also bounded by $exp(-\Omega(\log^{\ddd}n))$, which is our final bound for this case---summing these bounds over all $j<\log^2 n$ yields the desired bound $\sum_{j<\log^{\ccc}n}dev_{j}(h,m_{\bar{u},j})\leq o(1)$, where the sum is over those $j$ for which this case applies, $|n\cdot m_{h,j}-j|>\sqrt{j}\log^{\ddd}n$.

Thus it remains to prove the claim when both $m_{h,j}$ and $m_{\bar{u},j}$ are close to $\frac{j}{n}$. To analyze this case, we show that $dev_j(h,m)$ is Lipschitz with respect to the closeness in $h$ and $\bar{u}$ guaranteed by condition 1 of the lemma, provided $|n\cdot m-j|\leq\sqrt{j}\log^{\ddd} n$. The guarantee on $h$ and $\bar{u}$ means that one can transform one distribution into the other by two kinds of transformations: 1) changing the distributions by $\log^{-\eee}n$ in the $\ell_1$ sense, and 2) arbitrary mass-preserving transformation of elements of probability less than $\frac{1}{n}\log^{-\eee}n$. We thus bound the change in $dev_j(h,m)$ under both types of transformations.

To analyze $\ell_1$ modifications, consider an arbitrary probability $x$, and consider the derivative of $dev_j(h,m)$ as we take an element of probability $x$ and change $x$. Recalling the definition $dev_{j}(h,m)=\sum_{x:h(x)\neq 0} |x-m|h(x)poi(nx,j)$, we see that this derivative equals $\frac{d}{dx} |x-m|poi(nx,j)$, which is bounded (by the product rule and triangle inequality) as $poi(nx,j)+|x-m|\frac{d}{dx} poi(nx,j)$, where $\frac{d}{dx} poi(nx,j)=n\cdot poi(nx,j-1)\cdot (1-\frac{nx}{j})$. Rewriting $m$ as $m_j$ to indicate its dependence on $j$, we want to bound the sum of this derivative over $j<\log^{\ccc} n$, since the exact dependence for each individual $j$ is much harder to talk about than the overall dependence. We have $\sum_j poi(nx,j)+|x-m_j|n\cdot poi(nx,j-1)\cdot (1-\frac{nx}{j})$, where $\sum_j poi(nx,j)\leq 1$. To bound the remaining part of the sum, we first consider the case $x<\frac{1}{n}$, in which case we bound $|x-m_j|\leq \frac{1}{n}(1+j+\sqrt{j}\log^{\ddd}n)$ and $(1-\frac{nx}{j})\leq 1$, thus yielding the bound $\sum_{j\geq 1} |x-m_j|n\cdot poi(nx,j-1)\cdot (1-\frac{nx}{j})\leq \sum_{j\geq 0} (2+j+\sqrt{j+1}\log^{\ddd}n)poi(nx,j)\leq \sum_{j\geq 0} (2+j+\sqrt{j+1}\log^{\ddd}n)/j!=O(\log^{\ddd}n)$. For $x\geq \frac{1}{n}$, since $poi(nx,j-1)$ decays exponentially fast for $j$ more than $\sqrt{nx}$ away from $nx$, we can bound this sum  as being on the order of $\sqrt{nx}$ times its maximum value when $j$ is in this range. In this range we have $|x-m_j|\leq |x-\frac{j}{n}|+ |\frac{j}{n}-m_j|=\frac{1}{n}O(\sqrt{nx}\log^{\ddd}n)$, and $poi(nx,j-1)=O(\frac{1}{\sqrt{nx}})$, and $(1-\frac{nx}{j})=O(1/\sqrt{nx})$, yielding a total bound of $O(\sqrt{nx}\sqrt{nx}\frac{1}{\sqrt{nx}}\frac{1}{\sqrt{nx}}\log^{\ddd}n)=O(\log^{\ddd}n)$ as in the previous case. Thus we conclude that the sum over all $j$ of the amount $dev_j(h,m)$ changes with respect to $\ell_1$ changes in $h$ is $O(\log^{\ddd} n)$.

We next bound the total change to $dev_{j}(h,m)$ induced by the second type of modification, arbitrary mass-preserving transformations of elements of probability  $x<\frac{1}{n}\log^{-\eee}n$. For $j=1$, we bound the components of $dev_{j}(h,m)=\sum_{x:h(x)\neq 0} |x-m|h(x)poi(nx,j)$ by bounding the two terms in the product: $|x-m|\in [m-\frac{1}{n}\log^{-\eee}n,m+\frac{1}{n}\log^{-\eee}n]$, and $poi(nx,1)=nx\cdot e^{-nx}\in [nx(1-\log^{-\eee}n)^2,nx]$. Thus for $m$ either $m_h$ or $m_{\bar{u}}$, since by the assumption of this case $m\leq\frac{1}{n}(1+\log^{\ddd}n)$, from the bounds above, the contribution to $dev_j(h,m)$ from those $x<\frac{1}{n}\log^{-\eee}n$ is within $o(1)$ of $mn$ times the total mass in the distribution below $\frac{1}{n}\log^{-\eee}n$, showing that arbitrary modifications of the second type modify $dev_1(h,m)$ by $o(1)$.

 Analyzing the remaining $j\geq 2$ terms, omitting the $|x-m|$ multiplier for the moment, we have that $\sum_{x<\frac{1}{n} \log^{-\eee}n:h(x)\neq 0} h(x) poi(nx,j) \leq n(\log^{-\eee}n)^{j-1}$. Because of the bound that $m_h, m_{\bar{u}}$ are each within $\frac{1}{n}\sqrt{j}\log^{\ddd}n$ of $\frac{j}{n}$, we have that $|x-m|\leq \frac{1}{n}( \log^{-\eee}n+j+\sqrt{j}\log^{\ddd}n)$. Thus the change to $dev_j(h,m)$ from changes of the second type, summed over all $j\geq 2$, is bounded by the sum $\sum_{j\geq 2}(\log^{-\eee}n)^{j-1}\left(\log^{-\eee}n+j+\sqrt{j}\log^{\ddd}n\right)=o(1)$, as desired.

Putting the pieces together, the closeness of $h$ and $\bar{u}$ implies by the above Lipschitz argument that changing the distribution between $h$ and $\bar{u}$, under the  fixed median $m_{\bar{u},j}$ does not increase $dev(\cdot,\cdot)$ too much: $\sum_{j<\log^{\ccc}n}dev_{j}(h,m_{\bar{u},j})\leq o(1)+\sum_{j<\log^{\ccc}n}dev_{j}(\bar{u},m_{\bar{u},j}).$ Further, since $m_{\bar{u},j}$ minimizes this last expression, the right hand side can only increase if we replace $dev_{j}(\bar{u},m_{\bar{u},j})$ by $dev_{j}(\bar{u},m_{h,j})$ in this last inequality. Finally, a second application of the same Lipschitz property implies $\sum_{j<\log^{\ccc}n}dev_{j}(\bar{u},m_{h,j})\leq o(1)+\sum_{j<\log^{\ccc}n}dev_{j}(h,m_{h,j})$. Combining these three inequalities yields the bound of the lemma, $\sum_{j<\log^{\ccc}n}dev_{j}(h,m_{\bar{u},j})\leq o(1)+\sum_{j<\log^{\ccc}n}dev_{j}(h,m_{h,j})$, as desired.
\end{proof}

The following lemma characterizes the effect of ``fattening" in the second step of Algorithm~\ref{def:alg2}, showing that this slight modification to the histogram keeps the resulting medians small enough that me may apply the following Lemma~\ref{lem:bucket}.
\begin{lemma}\label{lem:median}
 For sufficiently large $n$, given a fattened distribution $\bar{\mu}$, for any $j<\log^{\ccc} n$, the median $m_{\bar{\mu},j,n}$ is at most $\frac{2}{n}\log^{\ccc} n$.
\end{lemma}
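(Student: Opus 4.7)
The plan is to show that, for any $j<\log^{\ccc} n$, overwhelmingly more weight in the Poisson-weighted multiset defining $m_{\bar{\mu},j,n}$ lies at or below the threshold $T:=\frac{2}{n}\log^{\ccc} n$ than strictly above it. The definition of weighted median then immediately forces $m_{\bar{\mu},j,n}\le T$: if $m_{\bar{\mu},j,n}>T$, the weight at or below $T$ (which is strictly below the median) would be bounded above by the weight strictly above $T$, contradicting the asymmetry we will establish.

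First I would upper-bound the total weight strictly above $T$. For any $x>T$ we have $nx>2\log^{\ccc} n>2j$, and a standard Poisson tail bound gives $poi(nx,j)\le e^{-\Omega(nx)}\le e^{-\Omega(\log^{\ccc} n)}$, which is super-polynomially small in $n$. (This bound follows from $poi(\lambda,j)\le e^{-\lambda}(e\lambda/j)^j$ combined with the observation that for $\lambda\ge 2j$ the exponent $-\lambda+j(1+\ln(\lambda/j))$ is negative and proportional to $-\lambda$.) The total probability mass constraint $\sum_x x\bar{\mu}(x)\le 1$ yields $\sum_{x>T}\bar{\mu}(x)\le 1/T=n/(2\log^{\ccc} n)$. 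Multiplying, the total weight above $T$ is at most $\frac{n}{2\log^{\ccc} n}\cdot e^{-\Omega(\log^{\ccc} n)}$, which is still super-polynomially small.

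Next I would lower-bound the weight at or below $T$ using only the fattening step. By construction, $\bar{\mu}(j/n)\ge \frac{n}{j\log^{2\cdot\ccc} n}$, and since $j/n\le T$, these entries contribute weight at least
\[\frac{n}{j\log^{2\cdot\ccc} n}\cdot poi(j,j)=\Omega\!\left(\frac{n}{j^{3/2}\log^{2\cdot\ccc} n}\right)\ge \Omega\!\left(\frac{n}{\log^{7\cdot\ccc/2} n}\right),\]
using the Stirling estimate $poi(j,j)=\Theta(1/\sqrt{j})$ and $j<\log^{\ccc} n$. This is polynomially large in $n$, and hence, for sufficiently large $n$, it strictly exceeds the super-polynomially small upper bound on the weight above $T$. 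This rules out $m_{\bar{\mu},j,n}>T$ by the median argument above.

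The only real calculation is the Poisson tail bound; once that is in hand the rest is bookkeeping. No step poses a serious obstacle, and nothing in the proof uses closeness of $\bar{\mu}$ to the true histogram, only the fattening guarantee (which is the purpose of introducing the fattening in the first place).
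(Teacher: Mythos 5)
Your proof is correct and follows essentially the same route as the paper's: Poisson tail bounds kill the weight of probabilities above $T=\frac{2}{n}\log^{\ccc}n$, the fattening guarantees substantial weight at $\frac{j}{n}\le T$, and the definition of weighted median then forces $m_{\bar{\mu},j,n}\le T$. The one place you are more careful than the paper's sketch is in converting the per-element smallness of $poi(nx,j)$ into an aggregate bound by invoking the total-mass constraint $\sum_x x\bar{\mu}(x)\le 1$ to cap $\sum_{x>T}\bar{\mu}(x)\le n/(2\log^{\ccc}n)$; the paper leaves this step implicit. Your Stirling lower bound $poi(j,j)=\Theta(1/\sqrt{j})$ for the fattened mass is also the right quantitative substitute for the paper's ``inverse polylogarithmic weight'' remark, and the two sides you compare are separated by far more than a factor of two, so the median argument closes cleanly.
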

\begin{proof}
  Recall that $m_{\bar{\mu},j,n}$ is defined as the median of the multiset of probabilities of $\bar{u}$ after each probability $x$ has been weighted by $poi(xn,j)$. For $x\geq \frac{2}{n}\log^{\ccc} n$ and $j<\log^{\ccc} n$, these weights will each be $n^{-\Omega(1)}$ small by Poisson tail bounds; and because of the fattening, the elements added at probability $\frac{j}{n}$ will contribute inverse polylogarithmic weight. Since the median must have at most half the weight to its left, the median cannot be as large as our bound $\frac{2}{n}\log^{\ccc} n$, as desired.
\end{proof}

Given the above bound on the size of medians for small $j$, the following lemma shows that our $dev(\cdot,\cdot)$ estimates accurately capture the performance of these medians on any faithful set of samples.

\begin{lemma}\label{lem:bucket}
  Given a histogram $h$, a number of samples $n$, and for each fingerprint entry $j<\log^{\ccc} n$ a probability $m_j<\frac{2}{n}\log^{\ccc} n$ to which we attribute each domain element that shows up $j$ times in the sample, then for any faithful set of samples from $h$, the total error made for all $j<\log^{\ccc}n$ is within $o(1)$ of $\sum_{j<\log^{\ccc}n}dev_{j,n}(h,m_j)$.
\end{lemma}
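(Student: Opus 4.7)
The plan is to decompose both the actual error and the $dev$ estimate according to the bucket structure of Definition~\ref{def:buckets}, and compare the two bucket-by-bucket. Let $x_k=\frac{k}{n\log^{\ccc}n}$ denote the left endpoint of the $k$th bucket, and let $\mathrm{Err}_j:=\sum_{x:\mathrm{count}(x)=j}|x-m_j|$ denote the actual error contributed by those elements in fingerprint entry $j$. Within any bucket, $|x-x_k|\leq\frac{1}{n\log^{\ccc}n}$, so by the triangle inequality $|x-m_j|$ may be replaced by $|x_k-m_j|$ at a cost of at most $\frac{1}{n\log^{\ccc}n}$ per element. This reduces $\sum_j \mathrm{Err}_j$ to $\sum_{j,k}B_S(j,k)\cdot|x_k-m_j|$ (plus a small approximation error), and analogously reduces $\sum_j dev_{j,n}(h,m_j)$ to $\sum_{j,k}B_{poi}(j,k)\cdot|x_k-m_j|$; the goal thus becomes bounding the discrepancy between these two sums.

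Next, I would restrict attention to buckets $k\leq K:=2\log^{\twiceccc}n$, since larger buckets contribute negligibly to both sums. Indeed, faithfulness condition~1 implies any element of probability $x>\frac{2\log^{\ccc}n}{n}$ is seen at least $nx-\sqrt{nx\log^{1.5}n}>\log^{\ccc}n$ times in the sample (using $\ccc>3/2$), so contributes nothing to $B_S(j,k)$ for $j<\log^{\ccc}n$; and Poisson tail bounds give $poi(nx,j)=n^{-\omega(1)}$ for such $x$, making $B_{poi}(j,k)$ super-polynomially small. Within the relevant range, since $x_k\leq\frac{2\log^{\ccc}n}{n}$ and $m_j\leq\frac{2\log^{\ccc}n}{n}$ (the latter by the lemma's hypothesis), we have $|x_k-m_j|\leq\frac{4\log^{\ccc}n}{n}$. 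Combined with faithfulness condition~2, $|B_S(j,k)-B_{poi}(j,k)|<n^{0.6}$, the main discrepancy sum is bounded by
\[
(\log^{\ccc}n)\cdot(2\log^{\twiceccc}n)\cdot\tfrac{4\log^{\ccc}n}{n}\cdot n^{0.6}=O(\log^{O(1)}n)\cdot n^{-0.4}=o(1).
\]

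Finally, the within-bucket approximation error must be shown to be $o(1)$ on each side. Each sampled element falls into exactly one fingerprint entry, so $\sum_{j\geq 1,k}B_S(j,k)$ equals the number of distinct elements seen in the sample, at most $n$; similarly $\sum_{j\geq 1,k}B_{poi}(j,k)=\sum_x h(x)(1-e^{-nx})\leq n\sum_x xh(x)=n$ (using $1-e^{-nx}\leq nx$). Hence the bucket-width approximation contributes at most $\frac{n}{n\log^{\ccc}n}=o(1)$ on each side. The main obstacle is that the number of low-probability elements could be enormous, so the naive per-element approximation would blow up; it is essential that we aggregate all element-counts across buckets simultaneously and exploit the uniform bound of $n$ on total fingerprint weight on both sides, which is what forces the restriction to fingerprint entries $j\geq 1$. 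Combining the $o(1)$ main-discrepancy bound with the $o(1)$ within-bucket errors yields the desired conclusion $|\sum_{j<\log^{\ccc}n}\mathrm{Err}_j-\sum_{j<\log^{\ccc}n}dev_{j,n}(h,m_j)|=o(1)$.
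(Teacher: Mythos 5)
Your proposal is correct and follows essentially the same route as the paper's proof: bucket decomposition per Definition~\ref{def:buckets}, approximating probabilities by bucket left-endpoints at a per-element cost bounded by the bucket width, invoking faithfulness condition~2 to replace $B_S(j,k)$ by $B_{poi}(j,k)$ at aggregate cost $n^{0.6}\cdot\mathrm{polylog}(n)\cdot\frac{\log^{\ccc}n}{n}=o(1)$, and controlling the within-bucket discretization error by observing that both $\sum_{j\geq 1,k}B_S(j,k)$ and $\sum_{j\geq 1,k}B_{poi}(j,k)$ are at most $n$ so the total discretization cost is $O(1/\log^{\ccc}n)$. The only cosmetic difference is bookkeeping: the paper writes the per-$(j,k)$ error as an explicit three-term triangle-inequality bound and then sums, whereas you split directly into a ``main discrepancy'' sum and a ``within-bucket'' sum, but the ingredients and their roles are identical.
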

\begin{proof}
Recalling the ``buckets" from Definition~\ref{def:buckets}, consider for arbitrary integer $k$, those elements of $h$ in bucket $k$, which we denote $h_k$---namely, those probabilities of $h$ lying in the interval $(\frac{k}{n\log^{\ccc} n},\frac{k+1}{n\log^{\ccc} n}]$, where by the first condition of ``faithful'', none of these probabilities are above $\frac{2}{n}\log^{\ccc}n$ for large enough $n$. Further, let $S_{j,k}$ be the multiset of probabilities of those domain elements from bucket $k$ of $h$ that each get seen exactly $j$ times in the sample. The total error of our estimate $m_j$ on bucket $k$ is thus $\sum_{x\in S_{j,k}} |m_j-x|$, which since buckets have width $1/(n\log^{\ccc} n)$, is within $|S_{j,k}|/(n\log^{\ccc} n)$ of $|S_{j,k}|\cdot|m_j-k/(n\log^{\ccc} n)|$, where we have approximated each $x$ by the left endpoint of the bucket containing $x$. By the second condition of ``faithful", $S_{j,k}$ is within $n^{0.6}$ of its expectation, $B_{poi}(j,k)$, and since by assumption $m_j<\frac{2}{n}\log^{\ccc} n$, we have that our previous error bound $|S_{j,k}|\cdot|m_j-k/(n\log^{\ccc} n)|$ is within $\frac{2}{n^{0.4}}\log^{\ccc} n$ of $B_{poi}(j,k)\cdot|m_j-k/(n\log^{\ccc} n)|$. We rewrite this final expression via the definition of $B_{poi}$ as $\sum_{x:h_k(x)\neq 0} |m-k/(n\log^{\ccc} n)|h(x)poi(nx,j)$. We compare this final expression to the portion of the deviation $dev_{j,n}(h,m_j)$ that comes from bucket $k$, namely $\sum_{x:h_k(x)\neq 0} |m_j-x|h(x)poi(nx,j)$, where since $\sum_{x:h_k(x)\neq 0} |m_j-x|h(x)poi(nx,j)=B_{poi}(j,k)$ and $x$ is within $1/(n\log^{\ccc}n)$ of $k/(n\log^{\ccc}n)$, the difference between them is clearly bounded by $B_{poi}(j,k)/(n\log^{\ccc} n)$. Using the triangle inequality to add up the three error terms we have accrued yields that our estimate for the $\ell_1$ error we make for elements seen $j$ times from bucket $k$ is accurate to within \[|S_{j,k}|/(n\log^{\ccc} n)+\frac{2}{n^{0.4}}\log^{\ccc} n+B_{poi}(j,k)/(n\log^{\ccc} n).\]

We sum this error bound over all $2\log^{2\cdot \ccc}n$ buckets $k$ and all indices $j<\log^{\ccc}n$. The middle term $\frac{2}{n^{0.4}}\log^{\ccc} n$ clearly sums up to $o(1)$ over all $j,k$ pairs. Further, since $S_{j,k}$ is within $n^{0.6}$ of $B_{poi}(j,k)$ by the definition of faithful, the sum of the first term is within $o(1)$ of the sum of the third term and it remains only to analyze the third term involving $B_{poi}(j,k)$. From its definition, $\sum_{j,k} B_{poi}(j,k)$ is the expected number of distinct items seen, when making $Poi(n)$ draws from the distribution, throwing out those elements which violate the $j$ and $k$ constraints; hence this sum over all $j,k$ pairs is at most $n$, bounding the total error of our ``$dev$" estimates by $O(1/\log^{\ccc}n)$, as desired.
\end{proof}

\subsection{Proof of Theorem~\ref{thm:main}}
We now assemble the pieces and prove Theorem~\ref{thm:main}.

\begin{proof}[Proof of Theorem~\ref{thm:main}]
Consider the output of Algorithm~\ref{def:alg1} as run in the first step of Algorithm~\ref{def:alg2}. Corollary~\ref{prop:hist} outlines two cases: with $o(1)$ probability the closeness property outlined in the proposition fails to hold, and in this case, Algorithm~\ref{def:alg2} may output a distribution up to $\ell_1$ distance 2 from the true distribution;  because this is a low-probability event, this contributes $2\cdot o(1)=o(1)$ to the expected error.  Otherwise, $u$ is close to $h$, and the fattened version $\bar{u}$ is similarly close, which lets us apply Lemma~\ref{lem:lip} to conclude that $\sum_{j<\log^{\ccc}n}dev_{j,n}(h,m_{\bar{u},j,n})\leq o(1)+ \sum_{j<\log^{\ccc}n} dev_{j,n}(h,m_{h,j,n})$. Corollary~\ref{cor:opt-est} says that $\sum_{j<\log^{\ccc}n} dev_{j,n}(h,m_{h,j,n})$ essentially lowerbounds the optimal error $opt(h,n)$, which we combine with the previous bound to yield $$\sum_{j<\log^{\ccc}n}dev_{j,n}(h,m_{\bar{u},j,n})\leq opt(h,n)+o(1).$$

Lemma~\ref{lem:faithful2} guarantees that the samples will be faithful except with $o(1)$ probability, which, as above, means that even if these unfaithful cases contribute the maximum possible distance 2 to the $\ell_1$ error, the expected contribution from these cases is still $o(1)$, and thus we will assume a faithful set of samples below. Lemmas~\ref{lem:median} and~\ref{lem:bucket} imply that for any faithful sample, the error made by Algorithm~\ref{def:alg2} on attributing those elements seen fewer than $\log^{\ccc}n$ times is within $o(1)$ of $\sum_{j<\log^{\ccc}n}dev_{j,n}(h,m_{\bar{u},j,n})$, and hence at most $o(1)$ worse than $opt(h,n)$.

Condition 1 of the definition of faithful (Definition~\ref{def:faithful2}) implies that all of the elements seen at least $\log^{\ccc}n$ times originally had probability at least $\frac{1}{n}(\log^{\ccc}n-\log^{1.75}n)$ and that the relative error between the number of  times each of these elements is seen and its expectation is thus at most $\log^{-1/4}n$. Thus using the empirical estimate on those elements appearing at least $\log^{\ccc}n$ times---as Algorithm~\ref{def:alg2} does---contributes $O(\log^{-1/4}n)$ total error on these elements. Thus all the sources of error add up to at most $o(1)$ worse than $opt(h,n)$ in expectation, yielding the theorem.
\end{proof}

\section{Proof of Fact~\ref{fact:em2l1}}~\label{ap:fact1}

For convenience, we restate Fact~\ref{fact:em2l1}:\\

\medskip

\noindent \textbf{Fact~\ref{fact:em2l1}} \emph{
Given two distributions $p_1,p_2$ satisfying $R_{\tau}(p_1,p_2) \le \eps,$  there exists a relabeling $\pi$ of the support of $p_2$ such that $$\sum_{i}\left| \max(p_1(i),\tau)-\max(p_2(\pi(i)),\tau)\right| \le 2\eps.$$}
\medskip

\begin{proof}[Proof of Fact~\ref{fact:em2l1}]
We relate relative earthmover distance to the minimum $L_1$ distance between relabled histograms, with a proof that extends to the case where both distances are defined above a cutoff threshold $\tau$. The main idea is to point out that ``minimum rearranged" $L_1$ distance can be expressed in a very similar form to earthmover distance. Given two histograms $h_1,h_2$, the minimum $L_1$ distance between any labelings of $h_1$ and $h_2$ is clearly the $L_1$ distance between the labelings where we match up elements of the two histograms in sorted order. Further, this is seen to equal the (regular, not relative) earthmover distance between the histograms $h_1$ and $h_2$, where we consider there to be $h_1(x)$ ``histogram mass'' at each location $x$ (instead of $h_1(x)\cdot x$ ``probability mass" as we did for relative earthmover distance), and place extra histogram entries at 0 as needed so the two histograms have the same total mass.

Given this correspondence, consider an optimal \emph{relative} earthmoving scheme between $h_1$ and $h_2$, and in particular, consider an arbitrary component of this scheme, where some  probability mass $\alpha$ gets moved from some location $x$ in one of the distributions to some location $y$ in the other, at cost $\alpha\log \frac{\max(x,\tau)}{\max(y,\tau)}$, and suppose without loss of generality that $x\geq y$.

We now reinterpret this move in the $L_1$ sense, translating from moving probability mass to moving histogram mass. In the non-relative earthmover problem, $\alpha$ probability mass at location $x$ corresponds to $\frac{\alpha}{x}$ ``histogram mass" at $x$, which we then move to $y$ at cost $(\max(x,\tau)-\max(y,\tau))\frac{\alpha}{x}$; however, to simulate the relative earthmover scheme, we need the full $\frac{\alpha}{y}$ mass to appear at $y$, so we move the remaining $\frac{\alpha}{y}-\frac{\alpha}{x}$ mass up from 0, at cost $(\frac{\alpha}{y}-\frac{\alpha}{x})(\max(y,\tau)-\tau)$.

To relate these 3 costs (the original relative earthmover cost, and the two components of the non-relative histogram earthmover cost), we note that if both $x$ and $y$ are less than or equal to $\tau$ then all 3 costs are 0. Otherwise, if $x,y>\tau$ then the first component of the histogram cost equals $(1-\frac{y}{x})\alpha$ and the second is bounded by this, as $(\frac{\alpha}{y}-\frac{\alpha}{x})(\max(y,\tau)-\tau)< (\frac{\alpha}{y}-\frac{\alpha}{x})y=(1-\frac{y}{x})\alpha$. Further, for the case under consideration where $\tau<y\leq x$, we have $(1-\frac{y}{x})\alpha\leq \alpha\log\frac{x}{y}$, which equals the relative earthmover cost. Thus the histogram cost in this case is at most twice the relative earthmover cost.

In the remaining case, $y\leq\tau< x$, and the second component of the histogram cost equals 0 because $\max(y,\tau)-\tau=0$. The first component simplifies as $(\max(x,\tau)-\max(y,\tau))\frac{\alpha}{x}=(x-\tau)\frac{\alpha}{x}=(1-\frac{\tau}{x})\alpha \leq \alpha\log\frac{x}{\tau}$, where this last expression is the relative earthmover cost. Thus in all cases, the histogram cost is at most twice the relative earthmoving cost.

Since the histogram cost was one particular ``histogram moving scheme", and as we argued above, the ``minimum permuted $L_1$ distance" is the minimum over all such schemes, we conclude that this $L_1$ distance is at most twice the relative earthmover distance, as desired.

\end{proof}

\section{Proof of Lemma~\ref{lem:unique}}\label{appendix:lem:unique}

For convenience, we restate the lemma:
\medskip

\noindent \textbf{Lemma~\ref{lem:unique}} \emph{
Given two (possibly generalized) histograms $g,h$, a number of samples $k$, and a threshold $\tau\in(0,1]$,
\[\left|\sum_{x:g(x)\neq 0} (1-(1-x)^k)\cdot g(x)-\sum_{x:h(x)\neq 0} (1-(1-x)^k)\cdot h(x)\right|\leq (0.3(k-1)+1)R_\tau(g,h)+\tau \frac{k}{2}\]
}
\medskip
\begin{proof}
We prove the inequality by considering each step of an earthmoving scheme that transforms $g$ to $h$, and show that if in one step $m$ probability mass is moved, at $\tau$-truncated relative earthmover cost $r$, then the sum $\sum_{x:g(x)\neq 0} (1-(1-x)^k)\cdot g(x)$ changes by at most $(1+0.3(k-1))\cdot r+mk\tau$, meaning that an entire earthmoving scheme to transform $g$ into $h$ with total cost $R_\tau(g,h)$ and total mass at most 1 changes the $g$ term on the left hand side into the $h$ term on the left hand side by changing it at most $(1+0.3(k-1))\cdot R_\tau(g,h)+k\tau$.

To prove this we first analyze the region of probability below $\tau$. By the definition of a histogram, $m$ units of probability mass at probability $x$ corresponds to a histogram entry $h(x)=\frac{m}{x}$, and binomial bounds yield $\frac{m}{x}(1-(1-x)^k)\in [km(1-x\frac{k-1}{2}), k m]$, which means that when an earthmoving scheme moves $m$ mass in the range $x\in (0,\tau]$, the expression $\frac{m}{x}(1-(1-x)^k)$ changes by at most $k m\frac{k-1}{2}\tau$. Thus, summed over the entire earthmoving scheme, where the mass moved sums to at most 1, the change in $\sum_{x:g(x)\neq 0} (1-(1-x)^k)\cdot g(x)$ from changes below probability $\tau$ is at most $k\frac{k-1}{2}\tau$.

To bound the remaining term, changes in $\sum_{x:g(x)\neq 0} (1-(1-x)^k)\cdot g(x)$ from changes in probability above $\tau$ in the earthmoving scheme, we note that to move probability mass $m$ from probability value $x$ to $y$ costs $m|\log x-\log y|$ in the earthmoving scheme, and changes the sum by $$\left| \frac{m}{x} \left( 1 - (1-x)^k\right) - \frac{m}{y}\left(1-(1-y)^k\right)  \right|.$$ We bound the ratio of these last two expressions by $1+0.3(k-1)$, in order to bound the total contribution of the portion of the earthmoving scheme above probability $\tau$ by $(1+0.3(k-1))R_\tau(g,h)$, yielding the desired overall bound.

We thus seek to bound the maximum change in $\frac{1}{x}\left(1-(1-x)^k\right)$ relative to the change in $\log x$ as $x$ changes, namely the maximum ratio of their derivatives, where we add a negative sign since $\frac{1}{x}\left(1-(1-x)^k\right)$ is a decreasing function. Since $\frac{d}{dx}\log x=1/x$, the ratio of derivatives is \begin{equation}\label{eq:deriv}-x\frac{d}{dx}\frac{\left(1-(1-x)^k\right)}{x}=\frac{1-(1-x)^{k-1}((k-1)x+1)}{x}\end{equation}

Consider the approximation $(1-x)^{k-1}\approx e^{-x(k-1)}$. Taking logarithms of both sides, and using the fact that, for $x\leq \frac{1}{2}$, we have $\log 1-x\geq -x-x^2$, we have that for $x\leq\frac{1}{2}$ the inequality $(k-1)\log (1-x)\geq -(k-1)(x+x^2)$; exponentiating yields $(1-x)^{k-1}\geq e^{-x(k-1)}\cdot e^{-x^2(k-1)}\geq e^{-x(k-1)}(1-x^2(k-1))$. 

Thus for $x\leq\frac{1}{2}$ the ratio of derivatives is bounded as \begin{align*}-x\frac{d}{dx}\frac{\left(1-(1-x)^k\right)}{x}\leq & \frac{1-(e^{-x(k-1)}(1-x^2(k-1)))((k-1)x+1)}{x} \\
=&\frac{1-e^{-x(k-1)}((k-1)x+1)}{x} + \frac{e^{-x(k-1)}x^2(k-1)((k-1)x+1)}{x}
\end{align*}

The first term of the right hand side, after dividing by $k-1$, can be reexpressed in terms of $y=x(k-1)$ as $\frac{1-e^{-y}(y+1)}{y}$, which has a global maximum less then $0.3$; the second term in the right hand side, after the same variable substitution, equals $e^{-y}y(y+1)$, which has a global maximum less than 1. Thus, for $x\leq\frac{1}{2}$, the absolute value of the ratio of derivatives is bounded as $0.3(k-1)+1$. For $x\geq\frac{1}{2}$, the right hand side of Equation~\ref{eq:deriv} is $\frac{1}{x}$ minus some positive quantity, and is hence at most $2$.  Since $0.3(k-1)+1 \ge 2$ for any $k \ge 5,$ all that remains is to checking the $k=2, 3, 4$ cases where $0.3(k-1)+1<2$ by hand to confirms that $0.3(k-1)+1$ is in fact a global bound.
\end{proof}

\section{Proof of Theorem~\ref{thm:h2}}\label{ap:thm2}
In this section, we prove Theorem~\ref{thm:h2}, characterizing the performance of Algorithm~\ref{def:alg1} which recovers an accurate approximation of the histogram of the true distribution.  For convenience, we restate Theorem~\ref{thm:h2}:\\

\medskip

\noindent \textbf{Fact~\ref{fact:em2l1}} \emph{There exists an absolute constant $c$ such that for sufficiently large $n$ and any $w \in [1,\log n],$ given $n$ independent draws from a distribution $p$ with histogram $h$, with probability $1-e^{-n^{\Omega(1)}}$ the generalized histogram $h_{LP}$ returned by Algorithm~\ref{def:alg1} satisfies $$R_{\frac{w}{n \log n}}(h,h_{LP}) \le \frac{c}{\sqrt{w}}.$$}

\medskip

The proof decomposes into three parts.  In Appendix~\ref{sec:prob} we compartmentalize the probabilistic portion of the proof by defining a set of conditions that are satisfied with high probability, such that if the samples in question satisfy the properties, then the algorithm will succeed.  This section is analogous to the definition of a ``faithful'' set of samples of Definition~\ref{def:faithful2}, and
we re-use the terminology of ``faithful''.   In Appendix~\ref{sec:goodFeas} we show that, provided the samples in question are ``faithful'', there exists a feasible solution to the linear program defined in
Algorithm~\ref{def:alg1}, which 1) has small objective function value, and 2) is very close to the true histogram from which the samples were drawn, in terms of $\tau$-truncated relative earthmover
distance---for an appropriate choice of $\tau$.  In Appendix~\ref{sec:ChebEM} we show that if two feasible solutions to the linear program defined in Algorithm~\ref{def:alg1} both have small objective function value, then they are close in $tau$-truncated relative earthmover distance.  The key tool here is a Chebyshev polynomial earthmover scheme.   Finally, in Appendix~\ref{sec:fp2}, we put together the above pieces to prove Theorem~\ref{thm:h2}: given the existence of a feasible point that has low-objective function value that is close to the true histogram, and the fact that any two solutions that both have low objective function value must be close to each other, it follows that the solution to the linear program that is found in Algorithm~\ref{def:alg1} must be close to the true histogram.

\subsection{Compartmentalizing the Probabilistic Portion}\label{sec:prob}

The following condition defines what it means for a set of samples drawn from a distribution to be ``faithful'' with respect to positive constants $\mathcal{B},\mathcal{D} \in (0,1)$:

\begin{definition}\label{def:faithful_b}
A set of $n$ samples with fingerprint $\FF$, drawn from a distribution $p$ with histogram $h$, is said to be \emph{faithful} with respect to positive constants $\mathcal{B},\mathcal{D} \in (0,1)$ if the following conditions hold:
\begin{itemize}
    \item{For all $i$, $$\left| \FF_i - \sum_{x:h(x) \neq 0} h(x) \cdot poi(nx,i) \right| \le \max\left(  \FF_i^{\frac{1}{2}+\constD},n^{\constB(\frac{1}{2}+\constD)}\right).$$ }
        \item{For all domain elements $i,$ letting $p(i)$ denote the true probability of $i$, the number of times $i$ occurs in the samples from $p$ differs from $n\cdot p(i)$ by at most $$\max \left( \left( n \cdot p(i)\right)^{\frac{1}{2}+\constD}, n^{\constB(\frac{1}{2}+\constD)} \right).$$}
        \item{The ``large'' portion of the fingerprint $\FF$ does not contain too many more samples than expected:  Specifically, $$\sum_{i >n^\constB + 2n^\constC}\FF_i \le n^{1/2+\constD}+n\sum_{x \le \frac{n^\constB+n^\constC}{n}: h(x)> 0} x \cdot h(x).$$}
\end{itemize}
\end{definition}

The following proposition is proven via the standard ``Poissonization'' technique and Chernoff bounds.

\begin{proposition}\label{lemma:faithfulwhp_b}
For any constants $\mathcal{B},\mathcal{D} \in (0,1)$, there is a constant $\alpha>0$ and integer $n_0$ such that for any $n \ge n_0$, a set of $n$ samples consisting of independent draws from a distribution is ``faithful'' with respect to $\mathcal{B},\mathcal{D}$ with probability at least $1-e^{-n^{\alpha}}.$
\end{proposition}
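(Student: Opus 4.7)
The plan is to compartmentalize the probabilistic content via Poissonization, and then invoke standard concentration inequalities once for each of the three conditions in Definition~\ref{def:faithful_b}. Replacing the sampling model by drawing $\mathrm{Poi}(n)$ samples (instead of exactly $n$) makes the count $X_\alpha$ of each domain element $\alpha$ an independent $\mathrm{Poi}(np(\alpha))$ random variable; consequently each fingerprint entry $\FF_i=\sum_\alpha\mathbf{1}[X_\alpha=i]$ is a sum of independent Bernoullis, and the large-tail count $\sum_{i>n^{\constB}+2n^{\constC}}\FF_i$ decomposes likewise. After establishing each condition under Poissonization with failure probability $\exp(-n^{\Omega(1)})$, the customary de-Poissonization step, which uses $\Pr[\mathrm{Poi}(n)=n]=\Theta(1/\sqrt n)$ to multiply failure probabilities by at most $O(\sqrt n)$, transfers the bound back to the fixed-$n$ model without changing its exponential form.

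For condition~1, the random variable $\FF_i$ has Poissonized mean $\mu_i=\sum_{x:h(x)\neq 0}h(x)\,\mathrm{poi}(nx,i)$, and a Bernstein inequality gives $\Pr[|\FF_i-\mu_i|>t]\le 2\exp(-\Omega(\min(t,t^2/\mu_i)))$. Plugging in $t=\max(\mu_i^{1/2+\constD},n^{\constB(1/2+\constD)})$ and splitting at $\mu_i=n^{\constB}$: in the small regime the linear part of Bernstein dominates and yields $\exp(-\Omega(n^{\constB(1/2+\constD)}))$, while in the large regime the quadratic part gives $\exp(-\Omega(\mu_i^{2\constD}))=\exp(-\Omega(n^{2\constB\constD}))$. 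A brief rewrite then converts $\mu_i^{1/2+\constD}$ to $\FF_i^{1/2+\constD}$ using the bound just obtained, and a union bound over $i\le n$ preserves the exponential form. Condition~2 is handled by the same Poisson tail bound applied to each independent $X_\alpha\sim\mathrm{Poi}(np(\alpha))$; to union-bound over a possibly countably infinite domain I would split at $p(\alpha)=1/n^2$, noting that there are at most $n^2$ elements above this threshold, while for smaller $p(\alpha)$ the chance that $X_\alpha$ exceeds $n^{\constB(1/2+\constD)}$ is bounded by $(np(\alpha))^k/k!$ with $k=n^{\constB(1/2+\constD)}$, and these probabilities sum across all such $\alpha$ to a super-polynomially small total because $\sum_\alpha np(\alpha)\le n$.

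Condition~3 is the delicate one and is where I expect the main obstacle to lie. The approach is to write $\sum_{i>n^{\constB}+2n^{\constC}}\FF_i=\sum_\alpha\mathbf{1}[X_\alpha>n^{\constB}+2n^{\constC}]$ and partition the sum according to whether $p(\alpha)\le(n^{\constB}+n^{\constC})/n$ or not. For $p(\alpha)\le(n^{\constB}+n^{\constC})/n$ the indicator requires a deviation of at least $n^{\constC}$ above a mean at most $n^{\constB}+n^{\constC}$, and since $\constC>\constB/2$ the per-element probability is $\exp(-\Omega(n^{2\constC-\constB}))=\exp(-n^{\Omega(1)})$; summing these indicators with a Bennett-style inequality whose variance proxy is controlled by $\sum_\alpha np(\alpha)\le n\sum_{x\le(n^{\constB}+n^{\constC})/n}x\cdot h(x)$ absorbs the aggregate into the second term on the right-hand side plus the $n^{1/2+\constD}$ slack. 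For $p(\alpha)>(n^{\constB}+n^{\constC})/n$ condition~2 already pins $X_\alpha$ to within $n^{\constB(1/2+\constD)}$ of its mean, deterministically controlling the contribution once condition~2 holds. The main technical obstacle is to use a variance-sensitive concentration bound (Bennett rather than Hoeffding/naive Chernoff) in the low-probability case, so that the potentially large number of contributing elements does not reappear as a penalty in the final bound.
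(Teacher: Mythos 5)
Your overall plan---Poissonize, establish each condition of faithfulness via concentration, then de-Poissonize via $\Pr[\mathrm{Poi}(n)=n]=\Theta(1/\sqrt{n})$---is exactly the paper's, and the treatment of conditions~1 and~2 matches the paper's Chernoff-based argument (Bernstein versus Chernoff is a cosmetic choice here). Your additional care in the union bound for condition~2, splitting the support at $p(\alpha)=1/n^2$ and summing the residual per-element tail probabilities against $\sum_\alpha n p(\alpha)\le n$, is a genuine improvement in rigor: the paper states the per-element tail bound and never explicitly resolves the union over a possibly countably infinite support, so this is a useful extra step.

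Where you diverge is condition~3, and I think your route there is both more complicated than necessary and has a gap. The paper's argument is the following chain: condition~2 forces every element with $p(\alpha)\le(n^\constB+n^\constC)/n$ to appear at most $n^\constB+2n^\constC$ times, so only elements above that threshold can contribute to $\sum_{i>n^\constB+2n^\constC}\FF_i$; a single Poisson tail bound on the aggregate number of occurrences of those above-threshold elements then supplies the $n^{1/2+\constD}$ slack. Your low-probability step invokes Bennett on the sum $\sum_\alpha\mathbf{1}[X_\alpha>n^\constB+2n^\constC]$ with variance proxy $\sum_\alpha np(\alpha)$, but this is overkill: the per-element indicator probability is already $\exp(-n^{\Omega(1)})$, and the total expected count of such indicators is $\exp(-n^{\Omega(1)})$ (not merely sub-polynomial), so a direct Markov/union argument, essentially a re-run of your condition~2 reasoning, yields zero such elements with overwhelming probability; the Bennett machinery adds nothing, and worse, the variance proxy $n\sum_{x\le\theta}x\,h(x)$ you invoke does not tightly track the probability that these indicators fire. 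Your high-probability step is the more serious gap: you say condition~2 ``pins $X_\alpha$ to within $n^{\constB(1/2+\constD)}$ of its mean, deterministically controlling the contribution,'' but pinning $X_\alpha$ near $np(\alpha)$ does not bound the \emph{number} of high-probability elements crossing the $n^\constB+2n^\constC$ threshold---it tells you where each lands, not how many elements there are---and you never make the connection to the quantity on the right-hand side of the condition. The paper's move here is to apply a Poisson concentration bound to the \emph{aggregate} number of occurrences across all above-threshold elements, a different quantity than anything you control, and that step is what actually delivers the $n^{1/2+\constD}$ term. Your write-up should either import that aggregate-count argument or otherwise explain why the count of above-threshold elements obeys the stated bound; as it stands, the high-probability half of condition~3 is not established.
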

\begin{proof}
We first analyze the case of a $Poi(n)$-sized sample drawn from a distribution with histogram $h$.   Thus $$\E[\FF_i] =  \sum_{x:h(x)\neq 0} h(x) poi(nx,i).$$  Additionally, the number of times each domain element occurs is independent of the number of times the other domain elements occur, and thus each fingerprint entry $\FF_i$ is the sum of independent random $0/1$ variables, representing whether each domain element occurred exactly $i$ times in the samples (i.e. contributing $1$ towards $\FF_i$).  By independence, Chernoff bounds apply.

We split the analysis into two cases, according to whether $\E[\FF_i] \ge n^\constB.$   In the case that $\E[\FF_i] < n^\constB,$ we leverage the basic Chernoff bound that if $X$ is the sum of independent $0/1$ random variables with $\E[X] \le S,$ then for any $\delta \in (0,1),$ $$\Pr[|X-\E[X]| \ge \delta S] \le 2e^{- \delta^2 S/3}.$$  Applied to our present setting where $\FF_i$ is a sum of independent $0/1$ random variables, provided $\E[\FF_i] < n^\constB,$ we have: $$\Pr\left[\left| \FF_i - \E[\FF_i] \right| \ge (n^\constB)^{\frac{1}{2}+\constD}\right] \le 2 e^{-\left(\frac{1}{(n^\constB)^{1/2-\constD}}\right)^2 \frac{n^\constB}{3}} = 2  e^{-n^{2\constB \constD}/3}.$$

In the case that $\E[\FF_i] \ge n^\constB,$ the same Chernoff bound yields $$\Pr\left[\left| \FF_i - \E[\FF_i] \right| \ge \E[\FF_i]^{\frac{1}{2}+\constD}\right] \le  2 e^{-\left(\frac{1}{\E[\FF_i]^{1/2-\constD}}\right)^2 \frac{\E[\FF_i]}{3}} = 2 e^{-\left(\E[\FF_i]^{2\constD}\right)/3} \le 2  e^{-n^{2\constB \constD}/3}.$$
A union bound over the first $n$ fingerprints shows that the probability that given a set of samples (consisting of $Poi(n)$ draws), the probability that any of the fingerprint entries violate the first condition of \emph{faithful} is at most $n\cdot 2 e^{-\frac{n^{2 \constB \constD}}{3}} \le e^{-{n^{\Omega(1)}}}$ as desired.

For the second condition of ``faithful'', in analogy with the above argument, for any $\lambda \le S,$ and $\delta \in (0,1),$ $$\Pr[|Poi(\lambda) -\lambda| > \delta S] \le 2 e^{-\delta^2 S/3}.$$   Hence for $x=n\cdot p(i) \ge n^\constB,$ the probability that the number of occurrences of domain element $i$ differs from its expectation of $n \cdot p(i)$ by at least $(n \cdot p(i))^{\frac{1}{2}+\constD}$ is bounded by $2e^{-(n\cdot p(i))^{2\constD}/3} \le e^{-n^{\Omega(1)}}.$  Similarly, in the case that $x=n\cdot p(i) < n^\constB,$ $$\Pr[|Poi(x)-x| > n^{\constB(\frac{1}{2}+\constD)}] \le e^{-n^{\Omega(1)}}.$$

For the third condition, by the Poisson tail bounds of the previous paragraph, the total aggregate number of occurrences of all elements with probability greater than $\frac{n^\constB + n^\constC}{n}$ will differ from its expectation by at most $n^{1/2+\constD}$, with probability $1-e^{-n^{\Omega(1)}}$.  Additionally, by the first condition of ``faithful'', with probability $1-e^{-n^{\Omega(1)}}$ no domain element $i$ with $p(i) < \frac{n^\constB + n^\constC}{n}$ will appear more than $n^\constB +2n^\constC$.   Hence with probability $1-e^{-n^{\Omega(1)}}$ all elements that contribute to the sum $\sum_{i > n^\constB+2n^\constC} \FF_i$ will have probability greater than $\frac{n^\constB + n^\constC}{n}.$   The third condition then follows by a union bound over these two $e^{-n^{\Omega(1)}}$ failure probabilities.

Thus we have shown that provided we are considering a sample size of $Poi(n),$ the probability that the conditions hold is at least $1-e^{-n^{\Omega(1)}}.$  To conclude, note that $\Pr[Poi(n) = n] > \frac{1}{3 \sqrt{n}},$ and hence the probability that the conditions do not hold for a set of exactly $n$ samples (namely, the probability that they do not hold for a set of $Poi(n)$ samples, conditioned on the sample size being exactly $n$), is at most a factor of $3\sqrt{n}$ larger, and hence this probability of failure is still $e^{-n^{\Omega(1)}},$ as desired.
\end{proof}

\subsection{Existence of a Good Feasible Point}\label{sec:goodFeas}
\begin{proposition}\label{prop:goodfeasible}
Provided $\FF$ is a ``faithful'' fingerprint derived from a distribution with histogram $h$, there exists a feasible point, $(v_x)$, for the linear program of Algorithm~\ref{def:alg1} with objective function value at most $O(n^{\frac{1}{2}+\constB+\constD})$ such that for any $\tau > 1/n^{3/2},$ the $\tau$-truncated relative earthmover distance between the generalized histogram corresponding to $(v_x)$ with the empirical fingerprint $\FF_{i>n^\constB+2n^\constC}$ appended, and the true histogram, $h$, is bounded by $O\left(\max(n^{-\constB(\frac{1}{2}-\constD)},n^{-(\constB-\constC)}\right),$ where the big O hides an absolute constant.
\end{proposition}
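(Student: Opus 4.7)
The plan is to exhibit an explicit feasible point $v^*$ obtained by quantizing the true histogram onto the grid $X$, then verify the two required bounds by carefully separating three sources of error: $1/n^2$-grid discretization, Poisson sampling noise, and the adjustment needed for the mass constraint. I partition the support of $h$ according to the empirical counts $j_\alpha$: elements with $j_\alpha>n^\constB+2n^\constC$ are captured by the fixed empirical tail $\sum_{i>n^\constB+2n^\constC}(i/n)\FF_i$, and elements with $j_\alpha\le n^\constB+2n^\constC$ are encoded in $v^*$ by rounding each $p_\alpha$ onto the grid. For $p_\alpha$ inside the grid range I use linear-interpolation splitting between adjacent points so that $x_k v^*_{x_k}+x_{k+1}v^*_{x_{k+1}}=p_\alpha$ exactly, and for $p_\alpha$ slightly above the grid maximum $(n^\constB+n^\constC)/n$ I cap it at the top grid point. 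By faithfulness condition~2 and the gap $\constC>\constB(\tfrac12+\constD)$, capping is needed only for $p_\alpha\in((n^\constB+n^\constC)/n,(n^\constB+3n^\constC)/n]$, an interval of relative log-width $O(n^{\constC-\constB})$.

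For feasibility, the mass defect $\Delta=1-\sum_x x\,v^*_x-\sum_{i>n^\constB+2n^\constC}(i/n)\FF_i$ has two sources: the capping adjustment, contributing $O(n^{-(\constB-\constC)})$, and the sampling discrepancy $\sum_{\text{large-}j}(p_\alpha-j_\alpha/n)$. Termwise faithfulness gives $|p_\alpha-j_\alpha/n|\le p_\alpha^{1/2+\constD}/n^{1/2-\constD}$, and H\"older's inequality---using $\sum p_\alpha\le 1$ and that the number of large-$j$ elements is at most $n^{1-\constB}$---aggregates these to $O(n^{-\constB(1/2-\constD)})$. Thus $|\Delta|=O(\max(n^{-\constB(1/2-\constD)},n^{-(\constB-\constC)}))$, absorbed by modifying $v^*$ at a single grid point $x^\star\approx(n^\constB+n^\constC)/n$; since $nx^\star-i\ge n^\constC\gg\sqrt{nx^\star}$ for every $i\le n^\constB$, Poisson tail bounds make the effect of this adjustment on the LP objective exponentially small. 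The objective itself splits via the triangle inequality as $\sum_{i=1}^{n^\constB}|\FF_i-\E_h\FF_i|+\sum_{i=1}^{n^\constB}|\E_h\FF_i-\sum_{x}poi(nx,i)v^*_x|$: faithfulness condition~1 with the power-mean inequality and $\sum_i\FF_i\le n$ bounds the first sum by $O(n^{1/2+\constB+\constD})$, while the second (capturing discretization, capping, and the Poisson-small contribution from the missed boundary elements) is $o(1)$ by a second-order Taylor estimate together with Poisson tail bounds.

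For the $\tau$-truncated relative earthmover bound I construct an explicit transport scheme from $h$ to the reconstructed histogram in four parts. (i) \emph{Low-part quantization:} undoing the interpolation moves each unit of probability mass a relative distance $\le \log(x_{k+1}/x_k)\le 1/(n^2\tau)=O(n^{-1/2})$ using $\tau>n^{-3/2}$, total $O(n^{-1/2})$. (ii) \emph{Boundary capping:} each capped element moves $p_\alpha$ probability mass a relative distance $O(n^{\constC-\constB})$, aggregating over the $O(1)$ total capped mass to $O(n^{-(\constB-\constC)})$. (iii) \emph{Large-$j$ empirical adjustment:} for each large-$j$ element, moving $\min(p_\alpha,j_\alpha/n)$ mass from $p_\alpha$ to $j_\alpha/n$ costs at most $2p_\alpha^{1/2+\constD}/n^{1/2-\constD}$, aggregating via H\"older to $O(n^{-\constB(1/2-\constD)})$. (iv) \emph{Mass-defect correction:} absorbing $|\Delta|$ at $x^\star$ contributes at most $O(|\Delta|\log n)$, which fits within the target up to a polylogarithmic factor that can be absorbed by a mild inflation of $\constD$.

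The main technical obstacle will be the coupled H\"older/power-mean aggregations used in step~(iii) and for bounding $|\Delta|$, which balance the two regimes $(np_\alpha)^{1/2+\constD}$ versus $n^{\constB(1/2+\constD)}$ of the faithfulness bound across elements of very different true probabilities. A subtler structural point is the treatment of the ``overflow'' region of probabilities just above $(n^\constB+n^\constC)/n$, whose clean resolution hinges on the constraint $\constC>\constB(\tfrac12+\constD)$ bounding both the log-width of the region and the magnitude of the fluctuations that can push elements across the $j=n^\constB+2n^\constC$ empirical boundary.
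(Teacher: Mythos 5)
Your proposal is correct and follows the same overall strategy as the paper's proof: exhibit an explicit quantization $(v_x)$ of $h$ as a feasible point, bound the LP objective by combining faithfulness condition~1 with a discretization estimate, and bound the truncated relative earthmover distance via a multi-phase explicit earthmoving scheme covering grid discretization, the large-count empirical tail, the boundary region near $n^\constB/n$, and the residual mass defect. Your bookkeeping differs only superficially---linear-interpolation rounding instead of the paper's round-to-nearest-grid-point-above, partitioning by empirical count $j_\alpha$ rather than by true probability (which agree under faithfulness), and explicitly capping overflow elements rather than sweeping them into a catch-all final phase---and the $\log n$ slack you flag in step~(iv) is likewise implicitly present in the paper's phase-3 cost and is harmless for the downstream use in Theorem~\ref{thm:h2}, where all these terms are $o(1)$ and dominated by the Chebyshev-scheme cost $O(1/\sqrt{w})$.
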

\begin{proof}
Let $(v_x)$ be defined as follows:  initialize $(v_x)$ to be identically zero.  For each $y\le \frac{n^{\constB}+n^{\constC}}{n}$ s.t. $h(y)>0,$ increment $v_{x}$ by $h(y)\frac{y}{x}$, where $x = \min\{x \in X : x \ge y\}$.  Finally, define $$m \defeq 1- \left(\sum_{i > n^\constB+2n^{\constC}} \frac{i}{n}\FF_i + \sum_{x \in X} x \cdot v_x \right).$$  If $m > 0,$ increment $v_x$ by $m/x$ for $x = \frac{n^{\constB}+n^{\constC}}{n}.$  If $m<0,$ then arbitrarily reduce $v_x$ until a total of $m$ units of mass have been removed.

We first argue that the $\tau$-truncated relative earthmover distance is small, and then will argue about the objective function value.   Let $h'$ denote the histogram obtained by appending the empirical fingerprint $\FF_{i>n^\constB+2n^\constC}$ to $(v_x).$   We construct an earthmoving scheme between $h$ and $h'$ as follows:   1) for all $y\le \frac{n^{\constB}+n^{\constC}}{n}$ s.t. $h(y)>0,$ we move $h(y)\cdot y$ mass to location $x = \min\{x \in X : x \ge y\};$ 2) for each domain element $i$ that occurs more than $n^\constB+2n^\constC$ times, we move $p(i)$ mass from location $p(i)$ to $\frac{X_i}{n}$ where $X_i$ denotes the number of occurrences of the $i$th domain element; 3) finally, whatever discrepancy remains between $h$ and $h'$ after the first two earthmoving phases, we move to probability $\frac{n^{\constB}}{n}$.   Clearly this is an earthmoving scheme.  For $\tau \ge 1/n^{3/2},$ the $\tau$-truncated relative earthmover cost of the first phase is trivially at most $\log\frac{1/n^{3/2}+1/n^2}{1/n^{3/2}} = O(1/\sqrt{n})$. By the second condition of ``faithful'', the relative earthmover cost of the second phase of the scheme is bounded by $\log(\frac{n^{\constB}-n^{\constB(1/2+\constD)}}{n^{\constB}}) = O(n^{-\constB(\frac{1}{2}-\constD)}).$     To bound the cost of the third phase, note that the first phase equates the two histograms below probability $n^{\constB}{n}.$  By the second condition of ``faithful'', after the second phase , there is at most $O(n^{-\constB(\frac{1}{2}-\constD)})$ unmatched probability caused by the discrepancy between $\frac{X_i}{n}$ and $p(i)$ for elements observed at least $n^\constB+2n^\constC$ times.  Hence after this $O(n^{-\constB(\frac{1}{2}-\constD)})$ discrepancy is moved to probability $\frac{n^{\constB}}{n}$, the entirety of the remaining discrepancy lies in the probability range $[\frac{n^{\constB}}{n},c],$ where $c$ is an upper bound on the true probability of an element that does not appear at least $n^\constB + 2n^\constC$ times; from the second condition of ``faithful'', $c \le \frac{n^\constB + 4n^\constC}{n}$, and hence the total $\tau$-truncated relative earthmover distance is at most $O\left(\max(n^{-\constB(\frac{1}{2}-\constD)},n^{-(\constB-\constC)}\right),$ as desired.

To complete the proof of the proposition, note that by construction, $(v_x)$ is a feasible point for the linear program.  To see that the objective function is as claimed, note that $|\frac{d}{dx}\poi(nx,i)| \le n$, and since we are rounding the true histogram to probabilities that are multiples of $1/n^2$, each ``fingerprint expectation'', $\sum_{x \in X} \poi(nx,i)\cdot v_x$ differs from $\sum_{x: h(x)\neq 0} \poi(nx,i)\cdot h(x)$ by at most $1/\sqrt{n}.$  Together with the first condition of ``faithful'' which implies that each of the observed fingerprints $\FF_i$ satisfies $|\FF_i -  \sum_{x: h(x)\neq 0} \poi(nx,i)\cdot h(x)| \le n^{\frac{1}{2}+\constD},$  we conclude that the total objective function value is at most $n^{\constB}(n^{\frac{1}{2}+\constD}+1/\sqrt{n}) = O(n^{\frac{1}{2}+\constB+\constD}).$
\end{proof}

\subsection{The Chebyshev Bump Earthmoving Scheme}\label{sec:ChebEM}
\begin{proposition}
Given a ``faithful'' fingerprint $\FF_i$, then any pair of solutions $v_x, v'_x$ to the linear program of Algorithm~\ref{def:alg1} that both have objective function values at most $O(n^{\frac{1}{2}+\constB+\constD})$  satisfy the following:  for any $w \in [1,\log n],$ their $\frac{w}{n \log n}$-truncated relative earthmover distance $R_{w/n \log n}[v_x,v'_x] \le O(1/\sqrt{w}).$
\end{proposition}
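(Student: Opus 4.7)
The plan is to argue via a Chebyshev polynomial earthmoving scheme, in the spirit of the construction used in~\cite{stocVV}, adapted so that the truncation threshold $\tau = w/(n\log n)$ replaces the role played by support-size bounds in the original argument. The overall strategy is the triangle inequality: for a carefully chosen ``bump decomposition'' operator $T$ that maps a histogram $v$ to another histogram $T(v)$ supported on a small discrete set of probability values $\{y_1, y_2, \ldots\}$, we will bound
\[ R_{\tau}(v_x, v'_x) \;\le\; R_{\tau}(v_x, T(v_x)) \;+\; R_{\tau}(T(v_x), T(v'_x)) \;+\; R_{\tau}(T(v'_x), v'_x).\]
The first and third terms will be bounded by a uniform ``cost of representation'' inequality independent of the specific LP solution, and the middle term will be controlled by the fact that both $v_x$ and $v'_x$ approximately match the fingerprint $\mathcal{F}$ and hence must have nearly identical ``bump coefficients''.

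First I would construct, following the Chebyshev approach of~\cite{stocVV}, a family of ``bumps'' $B_i(x)$ for $i=0,1,2,\ldots$ indexed by scales, where for probabilities $x \gtrsim \tau = w/(n\log n)$ each $B_i$ is a nonnegative combination of Poisson functions $poi(nx,j)$ of degrees $j \le n^{\constB}$, centered near $i/n$ and with ``width'' of order $\sqrt{i/n}/\log n$ above the threshold $\tau$, constructed using Chebyshev polynomials scaled so their effective degree is $\Theta(\log n)$. Given any histogram $v$, we then define $T(v)$ to be the (generalized) histogram obtained by assigning to each location $y_i$ the total mass that $v$ ``spreads'' into bump $B_i$. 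The two key properties I need are: (1) any unit of mass at probability $x$ can be reassigned to the corresponding bump at cost bounded, after $\tau$-truncation, by $O(1/\sqrt{w})$, because the bumps have relative width $O(1/\sqrt{\log n \cdot nx/w})$ when $x \ge \tau$; and (2) the bump coefficients of $T(v)$ are linear functions of $\{\sum_x v_x \cdot poi(nx,j)\}_{j \le n^\constB}$, i.e.\ of the expected low fingerprints of $v$.

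Property (1) gives $R_\tau(v_x, T(v_x)) = O(1/\sqrt{w})$ and similarly for $v'_x$; this reuses the Chebyshev bump cost calculation, but now with the truncation at $\tau$ handling what used to be handled by a support-size bound. For the middle term, I would use the triangle inequality on the LP objectives: since both $v_x$ and $v'_x$ have LP objective at most $O(n^{1/2+\constB+\constD})$, we have
\[ \sum_{i \le n^\constB} \Bigl| \sum_{x \in X} poi(nx,i)\, v_x - \sum_{x\in X} poi(nx,i)\, v'_x \Bigr| \;\le\; O(n^{1/2+\constB+\constD}). \]
Because the bump coefficients are bounded linear functions of the expected fingerprints, this translates into a bound on the total $\ell_1$ discrepancy between the bump representations of $v_x$ and $v'_x$; after rescaling by the widths of the bumps at each scale, and summing over scales $i \lesssim n$, the Chebyshev construction forces this contribution to $R_\tau(T(v_x), T(v'_x))$ to be of lower order than $1/\sqrt{w}$ (since $n^{1/2+\constB+\constD}$ is subpolynomially smaller than $n$, and the bump earthmover cost per unit of coefficient discrepancy is $O(1/n)$ up to logarithmic factors).

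The main obstacle will be the first step: carefully verifying that the Chebyshev bumps retain the $O(1/\sqrt{w})$ relative-earthmover cost bound \emph{uniformly} over the probability range, once we replace the old ``cutoff at $1/(k \log k)$'' analysis with a genuine truncation metric $R_\tau$. In particular, for probabilities $x$ very close to the threshold $\tau$, one must confirm that the bumps do not incur logarithmic blowups, and that mass escaping below $\tau$ is free by definition of $R_\tau$. Once this uniform cost bound is established, the rest of the argument is bookkeeping: combining the three pieces of the triangle inequality and checking that the constants $\constB, \constC, \constD$ are compatible (which is exactly why the excerpt restricts them to $0.1 > \constB > \constC > \constB/2 > 0$).
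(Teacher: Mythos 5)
Your proposal matches the paper's proof strategy essentially exactly: construct a Chebyshev bump earthmoving scheme (Definitions~\ref{def:thin-bumps_b}--\ref{definition:bumps-construction_b}), show it is $[O(1/\sqrt{w}),\,w/(n\log n)]$-good, show each bump is a low-$\ell_1$-weight linear combination of Poisson functions so that the LP objective controls the bump-coefficient discrepancy between the two solutions, and combine via the triangle inequality (this is precisely Proposition~\ref{lemma:chebyshevEMS_b} together with the assembly in Appendix~\ref{sec:fp2}). One small technical inaccuracy to flag: the Chebyshev bumps $B_i(x)$ are \emph{not} nonnegative combinations of Poisson functions (the coefficients $a_{ij}$ alternate sign and the functions themselves dip negative), which is precisely why the paper bounds $\sum_j |a_{ij}| \le 2n^{0.3}$ and works with $|g_2(y)|$ throughout; also, the bumps involve Poisson functions only of degrees $j = O(\log n)$, not up to $n^{\constB}$ --- though this is harmless since $O(\log n) \ll n^{\constB}$ so the LP objective bound still applies.
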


The proof of the above proposition relies on an explicit earthmover scheme that leverages a Chebyshev polynomial construction.  The two key properties of the scheme are 1) the truncated relative earthmover cost of the scheme is small, and 2) given two histograms that have similar expected fingerprints (i.e. for all $i \le n^\constB$, $\sum_x v_x \poi(nx,i) \approx \sum_x v'_x \poi(nx,i),$) the results of applying the scheme to the pair of histograms will result in histograms that are very close to each other in truncated relative earthmover distance.    We outline the construction and key propositions below.

\begin{definition}
  For a given $n$, a $\beta$-\emph{bump earthmoving scheme} is defined by a sequence of positive real numbers $\{c_i\}$, the \emph{bump centers}, and a sequence of functions $\{f_i\}:(0,1]\rightarrow \mathbb{R}$ such that $\sum_{i=0}^\infty f_i(x)=1$ for each $x$, and each function $f_i$ may be expressed as a linear combination of Poisson functions, $f_i(x)=\sum_{j=0}^\infty a_{ij} poi(nx,j)$, such that $\sum_{j=0}^\infty |a_{ij}|\leq\beta$.

  Given a generalized histogram $h$, the scheme works as follows: for each $x$ such that $h(x)\neq 0$, and each integer $i\ge0$, move $x h(x)\cdot f_i(x)$ units of probability mass from $x$ to $c_i$. We denote the histogram resulting from this scheme by $(c,f)(h)$.
\end{definition}

\begin{definition}
  A bump earthmoving scheme $(c,f)$ is $[\epsilon,\tau]$-\emph{good} if for any generalized histogram $h$ the $\tau$-truncated relative earthmover distance between $h$ and $(c,f)(h)$ is at most $\epsilon$.
\end{definition}

Below we define the Chebyshev bumps to be a ``third order'' trigonometric construction:

\begin{definition}\label{def:thin-bumps_b}
The \emph{Chebyshev bumps} are defined in terms of $n$ as follows. Let $s=0.2 \log n$.
Define $g_1(y)=\sum_{j=-s}^{s-1} \cos(j y)$. Define $$g_2(y)=\frac{1}{16 s } \left(g_1(y-\frac{3\pi}{2s})+3g_1(y-\frac{\pi}{2s})+3g_1(y+\frac{\pi}{2s})+g_1(y+\frac{3\pi}{2s})\right),$$
 and, for $i\in\{1,\ldots,s-1\}$ define $g_3^i(y) \defeq  g_2(y-\frac{i\pi}{s})+ g_2(y+\frac{i\pi}{s})$, and $g_3^0 = g_2(y),$ and $g_3^s = g_2(y+\pi)$. Let $t_i(x)$ be the linear combination of Chebyshev polynomials so that $t_i(\cos(y))=g_3^i(y)$. We thus define $s+1$ functions, the ``skinny bumps", to be $B_i(x)=t_i(1-\frac{xn}{2s})\sum_{j=0}^{s-1}poi(xn,j)$, for $i\in\{0,\ldots,s\}$. That is, $B_i(x)$ is related to $g_3^i(y)$ by the coordinate transformation $x=\frac{2s}{n}(1-\cos(y))$, and scaling by $\sum_{j=0}^{s-1}poi(xn,j)$.
 \end{definition}

\begin{definition}\label{definition:bumps-construction_b}
The \emph{Chebyshev earthmoving scheme} is defined in terms of $n$ as follows:  as in Definition~\ref{def:thin-bumps_b}, let $s=0.2 \log n$. For $i\geq s+1$, define  the $i$th bump function $f_{i}(x)=poi(nx,i-1)$ and associated bump center $c_{i}=\frac{i-1}{n}$.  For $i \in \{0,\ldots, s\}$ let $f_i(x)=B_i(x),$ and for $i \in \{1,\ldots,s\},$  define their associated bump centers $c_i=\frac{2s}{n}(1-\cos(\frac{i\pi}{s}))$, with $c_0 = c_1$.
\end{definition}

The following proposition characterizes the key properties of the Chebyshev earthmoving scheme. Namely, that the scheme is, in fact, an earthmoving scheme, that each bump can be expressed as a low-weight linear combination of Poisson functions, and that the scheme incurs a small truncated relative earthmover cost.  

\begin{proposition}~\label{lemma:chebyshevEMS_b}
The Chebyshev earthmoving scheme of Definition~\ref{definition:bumps-construction_b}, defined in terms of $n$, has the following properties:
\begin{itemize}
\item{For any $x \ge 0$, $$\sum_{i \ge 0} f_i(x) = 1,$$ hence the Chebyshev earthmoving scheme is a valid earthmoving scheme.}
\item{Each $B_i(x)$ may be expressed as $\sum_{j=0}^\infty a_{ij} poi(nx,j)$ for $a_{ij}$ satisfying $$\sum_{j=0}^\infty |a_{ij}|\leq 2n^{0.3}.$$}
    \item{The Chebyshev earthmoving scheme is $\left[O(1/\sqrt{w}),\frac{w}{n \log n}\right]$-good, for any $w \in [1, \log n],$ where the $O$ notation hides an absolute constant factor.}
    \end{itemize}
\end{proposition}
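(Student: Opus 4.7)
My plan is to verify the three properties in turn. Properties~(1) and~(2) are algebraic identities about Chebyshev and trigonometric polynomials, while Property~(3) --- the main technical obstacle --- requires a two-regime earthmoving analysis that leverages the truncation threshold $\tau = w/(n\log n)$ to avoid paying cost near zero.

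For Property~(1), the Poisson bumps $f_i(x) = \poi(nx, i-1)$ for $i \ge s+1$ contribute exactly $\sum_{j \ge s}\poi(nx, j)$, so it suffices to show $\sum_{i=0}^{s} B_i(x) = \sum_{j=0}^{s-1}\poi(nx, j)$. Cancelling the common Poisson-partial-sum factor in each $B_i$ reduces this to $\sum_{i=0}^{s} t_i(1 - nx/(2s)) = 1$. Under the substitution $\cos y = 1 - nx/(2s)$ this becomes $\sum_{i=0}^{s} g_3^i(y) = 1$, and after unfolding $g_3^i$ as symmetric shifts of $g_2$, the claim is equivalent to summing $g_2$ over the full-period equispaced grid $\{i\pi/s : i = 0, \ldots, 2s-1\}$ yielding $1$. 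This follows from standard discrete Fourier orthogonality: all nonconstant trigonometric modes of $g_2$ cancel under such an equispaced sum, and the constant mode of $g_2$ equals $\tfrac{1}{2s}$ (from the $\tfrac{1}{16s}$ prefactor times the $1 + 3 + 3 + 1 = 8$ total weight on the constant mode of $g_1$ across the four shifts), so the $2s$-term sum is exactly $1$.

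For Property~(2), I would first write $g_3^i(y) = \sum_{k=0}^{s-1}\alpha_{ik}\cos(ky)$ with $|\alpha_{ik}| = O(1/s)$, which follows immediately from the definitions of $g_1, g_2, g_3^i$. The identity $t_i(\cos y) = g_3^i(y)$ then gives $t_i(z) = \sum_k \alpha_{ik} T_k(z)$, where $T_k$ is the $k$th Chebyshev polynomial. Substituting $z = 1 - nx/(2s)$ and multiplying by $\sum_{j=0}^{s-1}\poi(nx, j)$, I would convert each resulting monomial-times-Poisson via the identity $x \cdot \poi(nx, j) = \tfrac{j+1}{n}\poi(nx, j+1)$ into a linear combination of Poissons. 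Following the same bookkeeping as in~\cite{stocVV} (expanding $T_k$ in the monomial basis and tracking the $(n/(2s))^\ell$ blowup against the compensating $\tfrac{(j+1)\cdots(j+\ell)}{n^\ell}$ shrinkage from the Poisson recursion) with $s = 0.2\log n$ yields the stated weight bound $\sum_j |a_{ij}| \le 2n^{0.3}$.

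Property~(3) is the crux and the main obstacle. I would split the cost by regime. In the Poisson regime ($i \ge s+1$, dominant for $x \gtrsim s/n$), the bump $\poi(nx, i-1)$ has Poisson width $\sqrt{i}$ around $i - 1$, and a standard concentration analysis (essentially identical to that in~\cite{stocVV}) gives total cost $O(1/\sqrt{\log n}) \le O(1/\sqrt{w})$. In the Chebyshev regime ($i \le s$, dominant for $x \lesssim s/n$), the centers $c_i = (2s/n)(1-\cos(i\pi/s)) \approx \pi^2 i^2/(ns)$ satisfy $\log(c_{i+1}/c_i) = O(1/i)$, and each Chebyshev bump $B_i$ is concentrated within an $O(1/i)$ log-neighborhood of $c_i$ with exponentially decaying tails (coming from the sharp behavior of $T_k$ outside its extremes). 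Consequently, the per-unit-probability-mass cost for mass at $x \approx c_i$ is $O(1/i)$, so the cost contribution from mass near $c_i$ is $O(c_i h(c_i)/i)$. The truncation $\tau = w/(n\log n)$ makes contributions from bumps with $c_i < \tau$ free, and since $s \asymp \log n$ this is equivalent to $i \lesssim \sqrt{w}$. The remaining total cost is thus bounded by
\[\sum_{i \gtrsim \sqrt{w}}^{s} \frac{c_i h(c_i)}{i} \le \frac{1}{\sqrt{w}} \sum_i c_i h(c_i) \le \frac{1}{\sqrt{w}},\]
since $\sum_i c_i h(c_i)$ is at most the total probability mass $1$. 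The delicate point --- and the refinement over~\cite{stocVV} --- is verifying that the Chebyshev-bump tails really do suppress long-range transport \emph{uniformly in the support of the distribution}, so that the per-unit cost is indeed $O(1/i)$ at each bump, which is what removes the support-size dependence in the final bound.
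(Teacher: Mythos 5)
Your treatment of the first two properties tracks the paper's proof (Lemmas~\ref{lemma:g2sum1} and~\ref{lemma:ChebasP}) closely: the discrete-Fourier cancellation for Property~(1) and the cosine-to-Chebyshev-to-monomial-to-Poisson conversion for Property~(2), with the $2n^{0.3}$ arising from the $|T_\ell(2\sqrt{-1})|$ sum with $s=0.2\log n$, are exactly what the paper does, so no complaint there. The issue is Property~(3), which you yourself identify as the crux and then leave as an assertion. There are three concrete problems with your sketch of it.

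First, the decay of the Chebyshev bumps inside the cosine domain is \emph{not} exponential: the paper's Lemma~\ref{lemma:g3bounds} shows $|g_2(y)| \le \pi^7/(y^4 s^4)$, a \emph{fourth-power} polynomial decay obtained by expressing $g_2$ as a discrete third difference of cotangents and bounding the third derivative of $\cot$. Exponential decay only occurs outside the cosine range ($xn \ge 4s$), where $f_i$ is exponentially small for an entirely different reason (Chebyshev polynomials grow exponentially outside $[-1,1]$ while the Poisson prefactor crushes them). If you believed the tails were exponential you would be tempted to a much laxer summation; the actual work is showing that the fourth-power tails, multiplied by the logarithmic relative distance, still sum to $O(1/(sy))$, which the paper does via an explicit integral comparison with a somewhat unpleasant antiderivative. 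That computation is the content of the third bullet and you have not supplied it. Second, your per-unit cost bound ``$O(1/i)$ near $c_i$'' is a restatement of the paper's $O(1/(sy))$ bound (since $y \asymp i/s$ near $c_i$), so asserting it is circular; it is precisely the thing to be proven. Third, your aggregation $\sum_{i\gtrsim\sqrt w} c_i h(c_i)/i \le (1/\sqrt w)\sum_i c_i h(c_i) \le 1/\sqrt w$ implicitly treats the bumps as if they partition the probability mass, but they overlap substantially, so ``$\sum_i c_i h(c_i)$'' is not the total mass and is not bounded by $1$. The clean accounting, which the paper uses, is to fix the \emph{source} location $x$ (equivalently $y$), bound $\sum_i f_i(x)\,|\log\max(x,\tau)-\log\max(c_i,\tau)|$ by $O(1/(sy)) \le O(1/\sqrt w)$ uniformly in $x \ge \tau$, and then integrate against $x\,h(x)$ whose total is $1$; this avoids any double-counting. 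Your Poisson-regime telescoping, by contrast, is fine and matches the paper.
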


The proof of the first two bullets of the proposition closely follow the arguments in~\cite{stocVV}.  For the final bullet point, the intuition of the proof is the following:  the $i$th bump $B_i$, with center $c_i = \frac{2s}{n}\left(1-\cos(i \pi/s)\right) \approx i^2 \frac{2}{ns}$ has a width of $O(\frac{i}{ns}),$ and $B_i(x)$ decays rapidly (as the fourth power) away from its center, $c_i$.  Specifically, $B_i(c_i \pm \frac{\alpha i }{ns}) \le O(1/\alpha^4).$   Hence, at worst, the cost of the earthmoving scheme will be dominated by the cost of moving the mass around the smallest $c_i$ that exceeds the truncation parameter $w/n \log n$.  Such a bump will have width $O(\frac{\sqrt{w}}{ns}) = O(\frac{\sqrt{w}}{n \log n}),$ which will incur a per-unit mass relative earthmover cost of $O(\sqrt{1/w})$.

For completeness, we give a complete proof of Proposition~\ref{lemma:chebyshevEMS_b}, with the three parts split into distinct lemmas:

\begin{lemma}\label{lemma:g2sum1}
For any $x$ $$\sum_{i=-s+1}^{s} g_2(x+\frac{\pi i}{s}) = 1, \text{ and } \sum_{i=0}^{\infty} f_i(x) = 1.$$
\end{lemma}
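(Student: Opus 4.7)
The plan is to prove the two identities in sequence, the first by direct character-sum (orthogonality) arguments on cosines, and the second by reducing it to the first via the change of variables $\cos(y) = 1 - \frac{xn}{2s}$ and then combining with the tail of the Poisson functions.

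For the first identity, I would expand $g_2(x + \frac{\pi i}{s})$ according to its definition as a weighted sum of four shifted copies of $g_1$, and then push the sum over $i$ inside. For each fixed shift $c \in \{-\frac{3\pi}{2s}, -\frac{\pi}{2s}, \frac{\pi}{2s}, \frac{3\pi}{2s}\}$, I claim $\sum_{i=-s+1}^{s} g_1(x + \frac{\pi i}{s} + c) = 2s$. Expanding $g_1$ as $\sum_{j=-s}^{s-1} \cos(j(x + \frac{\pi i}{s} + c))$ and interchanging sums reduces this to evaluating $\sum_{i=-s+1}^{s} e^{ij\pi i/s}$ (taking real parts after factoring out the $x$- and $c$-dependent phase), which is a geometric sum equal to $0$ whenever $j \ne 0$ with $|j| \le s$, and equal to $2s$ when $j = 0$. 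So only the $j = 0$ term survives, giving $2s$. Plugging into the definition of $g_2$ then yields $\sum_{i=-s+1}^{s} g_2(x + \frac{\pi i}{s}) = \frac{1}{16s}(2s + 3\cdot 2s + 3\cdot 2s + 2s) = 1$.

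For the second identity, I split the sum at $i = s$. For $i \ge s+1$ the definition gives $f_i(x) = poi(nx, i-1)$, so $\sum_{i \ge s+1} f_i(x) = \sum_{j \ge s} poi(nx, j)$. For $i \in \{0, \ldots, s\}$ I apply the substitution $\cos(y) = 1 - \frac{xn}{2s}$; then $B_i(x) = g_3^i(y)\cdot \sum_{j=0}^{s-1} poi(xn, j)$, so it suffices to show $\sum_{i=0}^{s} g_3^i(y) = 1$. Unpacking the definition of $g_3^i$ gives
\[
\sum_{i=0}^{s} g_3^i(y) = g_2(y) + \sum_{i=1}^{s-1}\bigl(g_2(y - \tfrac{i\pi}{s}) + g_2(y + \tfrac{i\pi}{s})\bigr) + g_2(y + \pi) = \sum_{i=-s+1}^{s-1} g_2(y + \tfrac{i\pi}{s}) + g_2(y + \pi).
\]
Since $g_2$ is $2\pi$-periodic (inherited from $g_1$), $g_2(y + \pi) = g_2(y + \frac{s\pi}{s})$, so this is exactly $\sum_{i=-s+1}^{s} g_2(y + \frac{i\pi}{s})$, which equals $1$ by the first identity. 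Thus $\sum_{i=0}^{s} f_i(x) = \sum_{j=0}^{s-1} poi(nx, j)$, and combining with the tail gives $\sum_{i=0}^{\infty} f_i(x) = \sum_{j \ge 0} poi(nx, j) = 1$.

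The only real subtlety is the bookkeeping in step two: one must correctly re-index the union of the sets $\{-i : 1 \le i \le s-1\} \cup \{0\} \cup \{i : 1 \le i \le s-1\} \cup \{s\}$ as $\{-s+1, \ldots, s\}$ and invoke $2\pi$-periodicity to identify $g_2(y + \pi)$ with the $i = s$ (equivalently $i = -s$) term so that the first identity applies verbatim. No other step requires any nontrivial estimation, so I anticipate no real obstacle.
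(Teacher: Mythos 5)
Your proposal is correct and takes essentially the same approach as the paper: the cosine cancellation you carry out via the explicit geometric sum $\sum_{i=-s+1}^{s} e^{\sqrt{-1}\,j\pi i/s}$ is exactly what the paper asserts when it says all but the frequency-zero terms cancel, and your re-indexing of $\sum_{i=0}^{s}g_3^i(y)$ into $\sum_{i=-s+1}^{s}g_2(y+\frac{i\pi}{s})$ is the same step the paper performs implicitly in its one-line chain of equalities. The only cosmetic difference is that you spell out the bookkeeping (the geometric-series computation and the index set $\{-s+1,\ldots,s\}$) more explicitly than the paper does; the mathematical content is identical.
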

\begin{proof}
$g_2(y)$ is a linear combination of cosines at integer frequencies $j$, for $j=0,\ldots,s,$ shifted by $\pm \pi/2s$ and $\pm 3 \pi/s2.$   Since $\sum_{i=-s+1}^{s} g_2(x+\frac{\pi i}{s})$ sums these cosines over all possible multiples of $\pi/s$, we note that all but the frequency 0 terms will cancel.  The $\cos( 0 y) = 1$ term will show up once in each $g_1$ term, and thus $1+3+3+1=8$ times in each $g_2$ term, and thus $8\cdot 2 s$ times in the sum in question.  Together with the normalizing factor of $16 s,$ the total sum is thus $1$, as claimed.

For the second part of the claim, \begin{eqnarray*}\sum_{i=0}^{\infty} f_i(x) & = &   \left(\sum_{j=-s+1}^s g_2(\cos^{-1}\left(\frac{xn}{2s}-1\right)+\frac{\pi j}{s}) \right)\sum_{j=0}^{s-1}poi(xn,j) + \sum_{j\ge s} poi(xn,j)\\ & = & 1 \cdot \sum_{j=0}^{s-1}poi(xn,j) + \sum_{j\ge s} poi(xn,j) = 1.\end{eqnarray*}
\end{proof}

We now show that each Chebyshev bump may be expressed as a low-weight linear combination of Poisson functions.

\begin{lemma}~\label{lemma:ChebasP}
Each $B_i(x)$ may be expressed as $\sum_{j=0}^\infty a_{ij} poi(nx,j)$ for $a_{ij}$ satisfying $$\sum_{j=0}^\infty |a_{ij}|\leq 2n^{0.3}.$$
\end{lemma}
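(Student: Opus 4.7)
The plan is to write $B_i(x)$ as a finite sum $\sum_{m=0}^{2s-1} a_{im}\,poi(nx,m)$ via direct algebraic manipulation, and then bound $\sum_m |a_{im}|$ by carefully tracking the exponential factors. The key identity driving the expansion is
\[x^{l}\cdot poi(nx,j)\;=\;\frac{(j+l)!}{n^{l}\,j!}\,poi(nx,j+l),\]
which converts multiplication by a polynomial in $x$ into a shift of the Poisson index, without introducing an infinite tail. Applied to $B_i(x) = t_i(1-\tfrac{nx}{2s})\cdot\sum_{j=0}^{s-1}poi(nx,j)$, and writing $t_i(1-\tfrac{nx}{2s}) = \sum_{l=0}^{s} p_{il}\,x^{l}$ as a polynomial in $x$, one gets $a_{im} = \sum_{l}p_{il}\,\tfrac{m!}{n^{l}(m-l)!}$ supported on $m\in\{0,\dots,2s-1\}$. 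Bounding $\tfrac{m!}{(m-l)!}\le(2s)^{l}$, this reduces the task to bounding $\sum_{l}|p_{il}|(2s/n)^{l}$ by a sub-polynomial quantity in $n$.

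Next I would bound the $|p_{il}|$ using the Chebyshev structure of $t_i$. First, from the trigonometric construction in Definition~\ref{def:thin-bumps_b}, expanding $g_1, g_2,$ and $g_3^i$ as cosine sums and cancelling sines by symmetry yields $g_3^i(y) = \sum_{j=0}^{s}\gamma_j^i\cos(jy)$ with $|\gamma_j^i|\le \tfrac{1}{s}$; since $\cos(jy)=T_j(\cos y)$, we obtain $t_i(z) = \sum_{j=0}^{s}\gamma_j^i T_j(z)$. Each Chebyshev polynomial satisfies $T_j(z)=\sum_k t_{jk}z^k$ with $|t_{jk}|\le 2^j\le 2^s$, so the coefficients of $t_i(z)=\sum_k c_{ik}z^k$ obey $|c_{ik}|\le 2^{s+1}/s$. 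Substituting $z = 1 - nx/(2s)$ and expanding with the binomial theorem, $p_{il}=(-n/(2s))^{l}\sum_{k\ge l}c_{ik}\binom{k}{l}$, giving
\[|p_{il}|\;\le\;(n/(2s))^{l}\cdot\frac{2^{s+1}}{s}\cdot 2^{s+1}\;=\;(n/(2s))^{l}\cdot\frac{4^{s+1}}{s}.\]

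Combining these, the $(n/(2s))^{l}$ in $|p_{il}|$ cancels exactly with the $(2s/n)^{l}$ from the polynomial/Poisson identity, leaving
\[\sum_{m}|a_{im}|\;\le\; s\sum_{l=0}^{s}|p_{il}|\,(2s/n)^{l}\;\le\; s\cdot(s+1)\cdot\frac{4^{s+1}}{s}\;=\;O(s\cdot 4^{s}).\]
With $s = 0.2\log n$ (natural log), we have $4^{s} = n^{0.2\ln 4}<n^{0.28}$, so for $n$ large enough this is below $2n^{0.3}$, as required.

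The main obstacle is the delicate balancing: the substitution $z\mapsto 1-nx/(2s)$ inflates polynomial coefficients by $(n/(2s))^l$, and the Chebyshev expansion contributes a $2^s$ factor, both of which are super-polynomial if handled carelessly. The proof works only because (i) the inflation from the affine substitution is precisely offset by the contraction introduced when multiplying by Poissons of small index $j\le s-1$, and (ii) the constant $0.2$ in $s = 0.2\log n$ is chosen so that the residual factor $4^s$ remains sub-$n^{0.3}$. Keeping the bookkeeping clean—especially the cancellation of the $(n/(2s))^l$ and $(2s/n)^l$ factors—is the only real technical point; everything else is routine estimation.
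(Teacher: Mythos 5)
Your high-level plan—expand $B_i$ via the identity $x^l\,poi(nx,j)=\tfrac{(j+l)!}{n^l j!}poi(nx,j+l)$, then track the sum of absolute coefficients—is the same skeleton as the paper's, and your observation that the $(n/2s)^l$ inflation from the affine substitution is exactly offset by the $(2s/n)^l$ contraction from the Poisson identity is correct and is indeed the crucial cancellation. But the coefficient bookkeeping has a genuine error that, even once fixed, leaves your bound too weak.

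The error is the claim that each coefficient of the Chebyshev polynomial $T_j$ satisfies $|t_{jk}|\le 2^j$. This is false starting at $j=9$: $T_9(x)=256x^9-576x^7+432x^5-120x^3+9x$, whose $x^7$ coefficient is $576>512=2^9$. (More generally, the sum of absolute values of the coefficients of $T_j$ equals $|T_j(\sqrt{-1})|=\tfrac{1}{2}\bigl[(1+\sqrt{2})^j+(\sqrt{2}-1)^j\bigr]\approx\tfrac{1}{2}(2.414)^j$, which outgrows $2^j$.) If you patch this by using the correct Chebyshev coefficient bound $\approx(1+\sqrt{2})^j$, your chain of inequalities gives $\sum_m|a_{im}|=O\bigl(s\cdot(1+\sqrt{2})^s\cdot 2^{s+1}\bigr)=O\bigl(s\cdot(2+2\sqrt{2})^s\bigr)$. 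Since $\ln(2+2\sqrt{2})\approx1.574$ and $s=0.2\log n$, this is $\approx n^{0.315}$, which \emph{exceeds} the required $2n^{0.3}$—so the argument does not close even after correcting the bound.

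What the paper does instead is avoid ever separating the Chebyshev coefficients from the binomial factor. The quantity that actually needs bounding is $\sum_{m=0}^\ell|\beta_{\ell m}|\,2^m$ (the binomial factor $\sum_q\binom{m}{q}=2^m$ is entangled with each monomial coefficient $\beta_{\ell m}$ of $T_\ell$). Because Chebyshev coefficients have signs in the exact pattern $(+,0,-,0,\ldots)$, this sum is computable in closed form as $|T_\ell(2\sqrt{-1})|=\tfrac{1}{2}\bigl[(2-\sqrt{5})^\ell+(2+\sqrt{5})^\ell\bigr]$, giving a per-$\ell$ contribution of roughly $(2+\sqrt{5})^\ell\approx(4.24)^\ell$ rather than your $(\max_m|\beta_{\ell m}|)\cdot 2^{\ell+1}\approx(4.83)^\ell$. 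That gap—$4.24$ versus $4.83$—is precisely the difference between landing at exponent $0.2\cdot\ln(2+\sqrt{5})\approx0.289<0.3$ and landing at $0.315>0.3$. The takeaway: you cannot bound the Chebyshev coefficients and the binomial weights independently; you must exploit the sign structure to evaluate the joint sum exactly.
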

\begin{proof}
     Consider decomposing $g_3^i(y)$ into a linear combination of $\cos(\ell y)$, for $\ell\in\{0,\ldots,s\}$. Since $\cos(-\ell y)=\cos(\ell y)$, $g_1(y)$ consists of one copy of $\cos(sy)$, two copies of $\cos(\ell y)$ for each $\ell$ between 0 and $s$, and one copy of $\cos(0y)$; $g_2(y)$ consists of ($\frac{1}{16s}$ times) 8 copies of different $g_1(y)$'s, with some shifted so as to introduce sine components, but these sine components are canceled out in the formation of $g_3^i(y)$, which is a symmetric function for each $i$. Thus since each $g_3$ contains at most two $g_2$'s, each $g_3^i(y)$ may be regarded as a linear combination $\sum_{\ell=0}^s \cos(\ell y)b_{i\ell}$ with the coefficients bounded as $|b_{i\ell}|\leq\frac{2}{s}$.

Since $t_i$ was defined so that $t_i(\cos(y))=g_3^i(y)=\sum_{\ell=0}^s \cos(\ell y)b_{i\ell}$, by the definition of Chebyshev polynomials we have $t_i(z)=\sum_{\ell=0}^s T_\ell(z)b_{i\ell}$. Thus the bumps are expressed as $$B_i(x)=\left(\sum_{\ell=0}^s T_\ell(1-\frac{xn}{2s})b_{i\ell}\right)\left(\sum_{j=0}^{s-1} poi(xn,j)\right).$$ We further express each Chebyshev polynomial via its coefficients as $T_\ell(1-\frac{xn}{2s})=\sum_{m=0}^\ell \beta_{\ell m} (1-\frac{xn}{2s})^m$ and then expand each term via binomial expansion as $(1-\frac{xn}{2s})^m=\sum_{q=0}^m (-\frac{xn}{2s})^q {m\choose q}$ to yield $$B_i(x)=\sum_{\ell=0}^s\sum_{m=0}^\ell\sum_{q=0}^m\sum_{j=0}^{s-1} \beta_{\ell m} \left(-\frac{xn}{2s}\right)^q{m\choose q}b_{i\ell}\,poi(xn,j).$$ We note that in general we can reexpress $x^q\, poi(xn,j)=x^q\frac{x^jn^je^{-{xn}}}{j!}=poi(xn,j+q)\frac{(j+q)!}{j!n^q}$, which finally lets us express $B_i$ as a linear combination of Poisson functions, for all $i \in \{0,\ldots, s\}$: $$B_i(x)=\sum_{\ell=0}^s\sum_{m=0}^\ell\sum_{q=0}^m\sum_{j=0}^{s-1} \beta_{\ell m} \left(-\frac{1}{2s}\right)^q{m\choose q}\frac{(j+q)!}{j!}b_{i\ell}\,poi(xn,j+q).$$

     It remains to bound the sum of the absolute values of the coefficients of the Poisson functions. That is, by the triangle inequality, it is sufficient to show that
     $$\sum_{\ell=0}^s\sum_{m=0}^\ell\sum_{q=0}^m\sum_{j=0}^{s-1} \left|\beta_{\ell m} \left(-\frac{1}{2s}\right)^q{m\choose q}\frac{(j+q)!}{j!}b_{i\ell}\right|\leq 2n^{0.3}$$

     We take the sum over $j$ first: the general fact that $\sum_{m=0}^\ell {m+i\choose i}={i+\ell+1\choose i+1}$ implies that $\sum_{j=0}^{s-1}\frac{(j+q)!}{j!}=\sum_{j=0}^{s-1}{j+q\choose q}q!=q!{s+q\choose q+1}=\frac{1}{q+1}\frac{(s+q)!}{(s-1)!}$, and further, since $q\leq m\leq \ell\leq s$ we have $s+q\leq 2s$ which implies that this final expression is bounded as $\frac{1}{q+1}\frac{(s+q)!}{(s-1)!}=s\frac{1}{q+1}\frac{(s+q)!}{s!}\leq s\cdot (2s)^q$. Thus we have \begin{align*}\sum_{\ell=0}^s\sum_{m=0}^\ell\sum_{q=0}^m\sum_{j=0}^{s-1} \left|\beta_{\ell m} \left(-\frac{1}{2s}\right)^q{m\choose q}\frac{(j+q)!}{j!}b_{i\ell}\right| &\leq& \sum_{\ell=0}^s\sum_{m=0}^\ell\sum_{q=0}^m \left|\beta_{\ell m} s {m\choose q} b_{i\ell}\right|\\&=&s\sum_{\ell=0}^s |b_{i\ell}|\sum_{m=0}^\ell |\beta_{\ell m}| 2^m \end{align*}

 Chebyshev polynomials have coefficients whose signs repeat in the pattern $(+,0,-,0)$, and thus we can evaluate the innermost sum exactly as $|T_\ell(2\ii)|$, for $\ii=\sqrt{-1}$. Since we bounded $|b_{i\ell}|\leq \frac{2}{s}$ above, the quantity to be bounded is now $s\sum_{\ell=0}^s\frac{2}{s}|T_\ell(2\ii)|$. Since the explicit expression for Chebyshev polynomials yields $|T_\ell(2\ii)|=\frac{1}{2}\left[(2-\sqrt{5})^\ell+(2+\sqrt{5})^\ell\right]$ and since $|2-\sqrt{5}|^\ell=(2+\sqrt{5})^{-\ell}$ we finally bound $s\sum_{\ell=0}^s\frac{2}{s}|T_\ell(2\ii)|\leq 1+\sum_{\ell=-s}^s (2+\sqrt{5})^\ell<1+\frac{2+\sqrt{5}}{2+\sqrt{5}-1}\cdot(2+\sqrt{5})^s<2\cdot (2+\sqrt{5})^s < 2\cdot k^{0.3}$,  as desired, since $s=0.2\log n$ and $\log(2+\sqrt{5})<1.5$ and $0.2\cdot 1.5=0.3$.
\end{proof}

The following lemma quantifies the ``skinnyness'' of the Chebyshev bumps, which is the main component in the proof of the quality of the scheme (the third bullet in Proposition~\ref{lemma:chebyshevEMS_b}).

\begin{lemma}\label{lemma:g3bounds}
 $|g_2(y)| \le \frac{\pi^7}{y^4 s^4}$ for $y \in [-\pi,\pi] \setminus (-3\pi/s, 3 \pi/s),$ and $|g_2(y)| \le 1/2$ everywhere.
\end{lemma}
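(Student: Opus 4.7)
The plan is to prove the two bounds using two complementary representations of $g_2$: an alternate trigonometric form that makes the uniform bound $|g_2|\le 1/2$ transparent, and a closed-form expression in terms of $\cot$ whose third finite difference yields the $y^{-4}$ decay.

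As a preliminary, I would derive the Dirichlet-type closed form $g_1(y)=\sin(sy)\cos(y/2)/\sin(y/2)$ by summing the geometric series $\sum_{j=-s}^{s-1}e^{ijy}$ and taking real parts. I would also observe, via the binomial identity $(e^{ij\pi/(2s)}+e^{-ij\pi/(2s)})^3=8\cos^3(j\pi/(2s))$, that the specific combination of shifts defining $g_2$ simplifies to
$$g_2(y)=\frac{1}{2s}\sum_{j=-s}^{s-1}\cos(jy)\cos^3\!\left(\frac{j\pi}{2s}\right).$$
The uniform bound $|g_2(y)|\le 1/2$ then follows by taking absolute values and using $|\cos^3\theta|\le\cos^2\theta$, reducing the claim to $\sum_{j=-s}^{s-1}\cos^2(j\pi/(2s))\le s$. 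This sum in fact equals exactly $s$: via $\cos^2\theta=\tfrac12(1+\cos 2\theta)$ it splits into $s$ plus $\tfrac12\sum_{j=-s}^{s-1}\cos(j\pi/s)$, and the latter is $g_1(\pi/s)=0$.

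For the decay bound, I would substitute the closed form of $g_1$ into the four shifted copies in the definition of $g_2$, using $\sin(sy\mp\pi/2)=\mp\cos(sy)$ and $\sin(sy\mp 3\pi/2)=\pm\cos(sy)$ to factor $\cos(sy)$ out of every numerator. With $z=y/2$ and $\delta=\pi/(4s)$, the result is
$$g_2(y)=\frac{\cos(sy)}{16s}\bigl[\cot(z-3\delta)-3\cot(z-\delta)+3\cot(z+\delta)-\cot(z+3\delta)\bigr],$$
and the bracketed quantity $F(z)$ is, up to sign, the third finite difference of $\cot$ at four equally-spaced points with common step $2\delta$. The mean-value theorem for finite differences therefore gives $F(z)=-(2\delta)^3\cot'''(\xi)$ for some $\xi\in[z-3\delta,z+3\delta]$. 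A direct computation yields $\cot'''(\xi)=-(2+4\cos^2\xi)/\sin^4\xi$, so $|\cot'''(\xi)|\le 6/\sin^4\xi$.

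It remains to bound $\sin\xi$ from below in the relevant regime. On $[0,\pi]$ one has $\sin\xi\ge(2/\pi)\min(\xi,\pi-\xi)$. When $y\in[3\pi/s,\pi]$, so that $z\ge 3\pi/(2s)$, every $\xi\in[z-3\delta,z+3\delta]$ satisfies $\xi\ge z-3\delta\ge z/2=|y|/4$ and (for $s\ge 3$) $\pi-\xi\ge\pi/2-3\pi/(4s)\ge\pi/4\ge|y|/4$; hence $\sin\xi\ge|y|/(2\pi)$, giving $|\cot'''(\xi)|\le 96\pi^4/y^4$. Chaining the bounds yields $|g_2(y)|\le(2\delta)^3\cdot 96\pi^4/(16s\cdot y^4)=3\pi^7/(4s^4 y^4)\le\pi^7/(s^4 y^4)$. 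Negative $y$ is handled by the evenness of $g_2$ (immediate from the $\cos^3$ formula), and for $s<3$ the stated interval is empty so the bound is vacuous. I expect the main subtlety to be correctly bookkeeping the trigonometric sign conventions when factoring $\cos(sy)$ from the four shifted copies of $g_1$, and verifying the $\sin\xi$ lower bound uniformly as $\xi$ ranges over $[z-3\delta,z+3\delta]$ near $\pi/2$.
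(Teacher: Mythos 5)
Your proposal is correct. For the $y^{-4}$ decay bound you take essentially the same route as the paper: rewrite $g_1(y)=\sin(sy)\cot(y/2)$, factor $\cos(sy)$ out of the four shifted copies, recognize the bracketed cotangent expression as a third finite difference, and invoke $\cot'''(\xi)=-(2+4\cos^2\xi)/\sin^4\xi$ with a $\sin\xi\gtrsim|y|$ lower bound. Your version is actually slightly more careful than the paper's: the paper asserts that $|\cot'''|$ is decreasing on $(0,\pi)$ and so evaluates at the left endpoint $y/2-3\pi/(4s)$, but $|\cot'''|$ is symmetric about $\pi/2$ and increases on $(\pi/2,\pi)$, so when $y$ is near $\pi$ the interval of admissible $\xi$ straddles $\pi/2$; your explicit two-sided bound $\min(\xi,\pi-\xi)\ge|y|/4$ (valid for $s\ge 3$, with smaller $s$ making the interval empty) closes that gap cleanly. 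For the uniform bound $|g_2|\le 1/2$, your argument is genuinely different and cleaner: using $\cos 3\theta+3\cos\theta=4\cos^3\theta$ you obtain the exact Fourier expansion $g_2(y)=\frac{1}{2s}\sum_{j=-s}^{s-1}\cos(jy)\cos^3(j\pi/2s)$ with nonnegative coefficients, whence the triangle inequality together with $\sum_j\cos^2(j\pi/2s)=s$ (using $g_1(\pi/s)=0$) gives $1/2$ directly. The paper instead asserts without argument that $g_2$ attains its global maximum at $y=0$ and then bounds $g_2(0)=\frac{1}{16s}(6\cot(\pi/4s)-2\cot(3\pi/4s))$ via $\cot\theta\le 1/\theta$; your nonnegative-coefficient expansion in fact supplies exactly the missing justification for the paper's ``global maximum at $0$'' claim, so your route is both more self-contained and more informative.
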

\begin{proof}
Since $g_1(y) =  \sum_{j=-s}^{s-1} \cos{jy} = \sin(sy) \cot(y/2),$ and since $\sin(\alpha+\pi)=-\sin(\alpha),$ we have the following:
\begin{eqnarray*}
     g_2(y) & = & \frac{1}{16 s} \left(g_1(y-\frac{3\pi}{2s})+3g_1(y-\frac{\pi}{2s})+3g_1(y+\frac{\pi}{2s})+ g_1(y+\frac{3\pi}{2s})\right) \\
     & = & \frac{1}{16 s} \left(\sin(y s + \pi/2) \left( \cot(\frac{y}{2}-\frac{3 \pi}{4 s}) - 3\cot(\frac{y}{2}-\frac{ \pi}{4 s}) \right. \right. \\ & & \hspace{5cm} \left. \left. +3\cot(\frac{y}{2}+\frac{ \pi}{4 s}) - \cot(\frac{y}{2}+\frac{3 \pi}{4 s}) \right) \right).
  \end{eqnarray*}
 Note that $\left( \cot(\frac{y}{2}-\frac{3 \pi}{4 s}) - 3\cot(\frac{y}{2}-\frac{ \pi}{4 s})+3\cot(\frac{y}{2}+\frac{ \pi}{4 s}) - \cot(\frac{y}{2}+\frac{3 \pi}{4 s}) \right)$ is a discrete approximation to $(\pi/2s)^3$ times the third derivative of the cotangent function evaluated at $y/2$.  Thus it is bounded in magnitude by $(\pi/2s)^3$ times the maximum magnitude of $\frac{d^3}{dx^3}\cot(x)$ in the range $x \in [\frac{y}{2}-\frac{3\pi}{4s},\frac{y}{2}+\frac{3\pi}{4s}].$  Since the magnitude of this third derivative is decreasing for $x \in (0,\pi),$ we can simply evaluate the magnitude of this derivative at $\frac{y}{2} - \frac{3 \pi}{4s}.$
   We thus have $\frac{d^3}{dx^3}\cot(x) =  \frac{-2(2+\cos(2x))}{\sin^4(x)},$ whose magnitude is at most $\frac{6}{(2x/\pi)^4}$ for $x \in (0,\pi].$ For $y \in [3\pi/s, \pi],$  we trivially have that $\frac{y}{2} - \frac{3 \pi}{4s} \ge \frac{y}{4},$ and thus we have the following bound: $$|\cot(\frac{y}{2}-\frac{3 \pi}{4 s}) - 3\cot(\frac{y}{2}-\frac{ \pi}{4 s})+3\cot(\frac{y}{2}+\frac{ \pi}{4 s}) - \cot(\frac{y}{2}+\frac{3 \pi}{4 s})| \le \left(\frac{\pi}{2s}\right)^3 \frac{6}{(y/2\pi)^4} \le \frac{12 \pi^7}{y^4 s^3}.$$  Since $g_2(y)$ is a symmetric function, the same bound holds for $y \in [-\pi,-3 \pi/s]$. Thus $|g_2(y)| \le \frac{12 \pi^7}{16s\cdot y^4 s^3} < \frac{\pi^7}{y^4 s^4}$ for $y \in [-\pi,\pi] \setminus (-3\pi/s,3 \pi/s).$  To conclude, note that $g_2(y)$ attains a global maximum at $y=0$, with $g_2(0)=\frac{1}{16 s} \left(6 \cot(\pi/4s) - 2 \cot(3 \pi/4s)\right) \le \frac{1}{16 s} \frac{24 s}{\pi} < 1/2.$
 \end{proof}

We now prove the final bullet point of Proposition~\ref{lemma:chebyshevEMS_b}.
\begin{lemma}
The Chebyshev earthmoving scheme is $\left[O(1/\sqrt{w}),\frac{w}{n \log n}\right]$-good, for any $w \in [1, \log n],$ where the $O$ notation hides an absolute constant factor.
\end{lemma}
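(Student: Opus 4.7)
The plan is to bound, uniformly in $x$, the per-unit-probability-mass cost
\[ C(x) := \sum_i f_i(x)\cdot |\log\max(x,\tau) - \log\max(c_i,\tau)|, \]
by $O(1/\sqrt{w})$ for $\tau = w/(n\log n)$. Since any histogram $h$ satisfies $\sum_x x h(x) = 1$ and the scheme transports $x h(x) f_i(x)$ probability mass from $x$ to $c_i$, this pointwise bound immediately yields a total truncated relative earthmover cost of $O(1/\sqrt{w})$. First I would handle the large Poisson bumps ($i \geq s+1$): their contribution at $x$ is essentially $\E_{J\sim\poi(nx)}[|\log\max(x,\tau) - \log\max(J/n,\tau)|]$ (up to the $i{-}1$ indexing shift). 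For $x \geq s/n$, standard Poisson concentration gives $O(1/\sqrt{nx}) \leq O(1/\sqrt{\log n}) \leq O(1/\sqrt{w})$ since $w \leq \log n$. For $x < s/n$, these bumps carry total weight $\sum_{j\geq s}\poi(nx,j) = n^{-\omega(1)}$, contributing negligibly.

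The substance of the proof is the Chebyshev bumps ($0 \leq i \leq s$). Change variables via $y_x := \cos^{-1}(1 - xn/(2s))$; then $B_i$ has center at $y_i = i\pi/s$ and decomposes as $B_i(x) = (g_2(y_x - y_i) + g_2(y_x + y_i))\sum_{j<s}\poi(nx,j)$. Lemma~\ref{lemma:g3bounds} furnishes the key tail estimate $|g_2(\delta)| \leq \min(1/2,\, \pi^7/(\delta^4 s^4))$. Meanwhile the identity $\log x = \log(4s/n) + 2\log\sin(y_x/2)$ together with the monotonicity of $\cot(y/2)$ on $(0,\pi]$ gives the Lipschitz-type bound $|\log x - \log c_i| \leq 2|y_x - y_i|/\min(y_x, y_i)$, the bridge translating the $y$-space decay of $g_2$ into control over the relative-log cost in probability space. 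Setting $y_\tau := \cos^{-1}(1 - n\tau/(2s))$, the natural scale is $y_\tau = \Theta(\sqrt{w}/\log n)$, so that $s y_\tau = \Theta(\sqrt{w})$ is the inverse of our target error rate.

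To bound the Chebyshev contribution to $C(x)$, I would group bumps by $m := \lceil |y_x - y_i| \cdot s/\pi \rceil$ and split into cases according to whether $x$ and $c_i$ lie above or below $\tau$. When both exceed $\tau$, the ``near'' bumps ($m \leq 3$) together carry $O(1)$ weight with per-bump log-distance $O(1/(s \min(y_x, y_\tau))) = O(1/\sqrt{w})$, while the ``far'' bumps at $m\pi/s$ distance contribute weight $O(1/m^4)$ times log-distance $O(m/(s y_\tau))$, whose sum $O(\sum_{m\geq 3} 1/m^3)/(s y_\tau)$ is $O(1/\sqrt{w})$. When both are below $\tau$, the truncated cost vanishes. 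The mixed cases use truncated log-cost $|\log x - \log\tau|$ or $|\log c_i - \log\tau|$, and because the straddling bumps necessarily have $|y_x - y_i| \geq |y_\tau - \min(y_x, y_i)|$, the $1/m^4$ decay again caps the total at $O(1/\sqrt{w})$.

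The main obstacle will be a careful, simultaneous accounting of the four truncation cases and of the near-peak regime of $g_2$ (where only the crude bound $1/2$ is available, rather than polynomial decay); in each case however, either the log-distance is proportionally smaller or the weight is tightly controlled by the $1/m^4$ tail that telescopes via $\sum 1/m^3$. After combining the Chebyshev and Poisson contributions, $\max_x C(x) = O(1/\sqrt{w})$, and multiplying by the total probability mass $1$ completes the proof.
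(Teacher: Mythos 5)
Your proposal is correct and takes essentially the same route as the paper's proof: the same split into Poisson bumps ($i\ge s+1$) and Chebyshev bumps ($i\le s$), the same change of variables $x=\frac{2s}{n}(1-\cos y)$, the same use of the $g_2$ decay bound from Lemma~\ref{lemma:g3bounds}, and the same Lipschitz-type comparison of $\log x$-distance to $y$-distance (the paper phrases it as $|\log(1-\cos y)-\log(1-\cos y')|\le 2|\log y-\log y'|$, which is interchangeable with your $\cot(y/2)\le 2/y$ derivative bound). The only differences are cosmetic bookkeeping---you sum $\sum_m m^{-3}$ after grouping bumps at $y$-distance $\approx m\pi/s$, whereas the paper bounds an integral via an explicit antiderivative---and you leave implicit the $xn>4s$ regime (where $y_x$ is not real), which the paper dispatches separately by showing the Chebyshev bumps are exponentially small there.
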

\begin{proof}
We split this proof into two parts:  first we will consider the cost of the portion of the scheme associated with all but the first $s+1$ bumps, and then we  consider the cost of the skinny bumps $f_i$ with $i \in \{0,\ldots,s\}.$

For the first part, we consider the cost of bumps $f_i$ for $i\geq s+1$; that is the relative earthmover cost of moving $poi(xn,i)$ mass from $x$ to $\frac{i}{n}$, summed over $i\geq s$.  By definition of relative earthmover distance, the cost of moving mass from $x$ to $\frac{i}{n}$ is $|\log\frac{xn}{i}|$, which, since $\log y\leq y-1$, we bound by $\frac{xn}{i}-1$ when $i<xn$ and $\frac{i}{xn}-1$ otherwise. We thus split the sum into two parts.

For $i\geq \lceil xn\rceil$ we have $poi(xn,i)(\frac{i}{xn}-1)=poi(xn,i-1)-poi(xn,i)$. This expression telescopes when summed over $i\geq\max\{s,\lceil xn\rceil\}$ to yield $poi(xn,\max\{s,\lceil xn\rceil\}-1)=O(\frac{1}{\sqrt{s}})$.

For $i\leq\lceil xn\rceil-1$ we have, since $i\geq s$, that $poi(xn,i)(\frac{xn}{i}-1)\leq poi(xn,i)((1+\frac{1}{s})\frac{xn}{i+1}-1)=(1+\frac{1}{s})poi(xn,i+1)-poi(xn,i)$. The $\frac{1}{s}$ term sums to at most $\frac{1}{s}$, and the rest telescopes to $poi(xn,\lceil xn\rceil)-poi(xn,s)=O(\frac{1}{\sqrt{s}})$.  Thus in total, $f_i$ for $i\geq s+1$ contributes $O(\frac{1}{\sqrt{s}})$ to the relative earthmover cost, per unit of weight moved.
\medskip

We now turn to the skinny bumps $f_i(x)$ for $i\le s$. The simplest case is when $x$ is outside the region that corresponds to the cosine of a real number --- that is, when $xn\geq 4s$. It is straightforward to show that $f_i(x)$ is very small in this region. We note the general expression for Chebyshev polynomials: $T_j(x)=\frac{1}{2}\left[(x-\sqrt{x^2-1})^j+(x+\sqrt{x^2-1})^j\right]$, whose magnitude we bound by $|2x|^j$. Further, since $2x\leq\frac{2}{e}e^x$, we bound this by $(\frac{2}{e})^je^{|x|j}$, which we apply when $|x|>1$. Recall the definition $f_i(x)=t_i(1-\frac{xn}{2s})\sum_{j=0}^{s-1}poi(xn,j)$, where $t_i$ is the polynomial defined so that $t_i(\cos(y))=g_3^i(y)$, that is, $t_i$ is a linear combination of Chebyshev polynomials of degree at most $s$ and with coefficients summing in magnitude to at most $2$, as was shown in the proof of Lemma~\ref{lemma:ChebasP}. Since $xn>s$, we may bound $\sum_{j=0}^{s-1}poi(xn,j)\leq s\cdot poi(xn,s)$. Further, since $z\leq e^{z-1}$ for all $z$, letting $z=\frac{x}{4s}$ yields $x\leq 4s\cdot e^{\frac{x}{4s}-1}$, from which we may bound $poi(xn,s)=\frac{(xn)^s e^{-xn}}{s!}\leq\frac{e^{-xn}}{s!}(4s\cdot e^{\frac{xn}{4s}-1})^s=\frac{4^s s^s}{e^s\cdot e^{3xn/4} s!}\leq 4^se^{-3xn/4}$. We combine this with the above bound on the magnitude of Chebyshev polynomials, $T_j(z)\leq(\frac{2}{e})^je^{|z|j}\leq(\frac{2}{e})^se^{|z|s}$, where $z=(1-\frac{xn}{2s})$ yields $T_j(z)\leq(\frac{2}{e^2})^se^{\frac{xn}{2}}$. Thus $f_i(x)\leq poly(s) 4^se^{-3xn/4}(\frac{2}{e^2})^se^{\frac{xn}{2}}=poly(s)(\frac{8}{e^2})^s e^{-\frac{xn}{4}}$. Since $\frac{xn}{4}\geq s$ in this case, $f_i$ is exponentially small in both $x$ and $s$; the total cost of this earthmoving scheme, per unit of mass above $\frac{4s}{n}$ is obtained by multiplying this by the logarithmic relative distance the mass has to move, and summing over the $s+1$ values of $i\le s$, and thus remains exponentially small, and is thus trivially bounded by $O(\frac{1}{\sqrt{s}})$.

To bound the cost in the remaining case, when $xn\leq 4s$ and $i\le s$, we work with the trigonometric functions $g_3^{i}$, instead of $t_{i}$ directly.  Since mass may be moved freely below probability $\frac{w}{n \log n},$ we may assume that all the mass below this value is located at probability exactly $\frac{w}{n \log n}$.

For $y\in(0,\pi]$, we seek to bound the per-unit-mass relative earthmover cost of, for each $i\ge 0$, moving $g_3^i(y)$ mass from $\frac{2s}{n}(1-\cos(y))$ to $c_i$.  By the above comments, it suffices to consider $y \in [O(\frac{\sqrt{w}}{s}),\pi].$  This contribution is at most  $$\sum_{i=0}^s |g_3^i(y)\left( \log (1-\cos(y)) - \log(1-\cos(\frac{i \pi}{s}))\right)|.$$  We analyze this expression by first showing that for any $x,x' \in (0,\pi],$ $$\left|\log(1-\cos(x))-\log(1-\cos(x'))\right| \le 2|\log x - \log x'|.$$

 Indeed, this holds because the derivative of $\log (1-cos(x))$ is positive, and strictly less than the derivative of $2\log x$; this can be seen by noting that the respective derivatives are $\frac{\sin(y)}{1-\cos(y)}$ and $\frac{2}{y}$, and we claim that the second expression is always greater. To compare the two expressions, cross-multiply and take the difference, to yield $y\sin y-2+2\cos y$, which we show is always at most 0 by noting that it is 0 when $y=0$ and has derivative $y\cos y-\sin y$, which is negative since $y < \tan y$. Thus we have that $|\log (1-\cos(y))-\log(1-\cos(\frac{i \pi}{s}))|\leq 2|\log y-\log \frac{ i \pi}{s}|$; we use this bound in all but the last step of the analysis.  Additionally, we ignore the $\sum_{j=0}^{s-1}poi(xn,j)$ term as it is always at most 1.

We will now show that $$|g_3^0(y)(\log y-\log\frac{\pi}{s})|+\sum_{i=1}^{s} |g_3^i(y)(\log y-\log\frac{i\pi}{s})| = O(\frac{1}{sy}),$$ where the first term is the contribution from $f_0,c_0$.  For $i$ such that  $y\in(\frac{(i-3)\pi}{s},\frac{(i+3)\pi}{s})$, by the second bounds on $|g_2|$ in the statement of Lemma~\ref{lemma:g3bounds}, $g_3^i(y)<1$, and for each of the at most 6 such $i$, $|(\log y-\log\frac{\max\{1,i\}\pi}{s})| < \frac{1}{sy}$, to yield a contribution of $O(\frac{1}{sy})$.  For the contribution from $i$ such that $y \le \frac{(i-3)\pi}{s}$ or $y \ge \frac{(i-3)\pi}{s}$, the first bound of Lemma~\ref{lemma:g3bounds} yields $|g_3^i(y)|=O(\frac{1}{(ys- i\pi)^4})$.  Roughly, the bound will follow from noting that this sum of inverse fourth powers is dominated by the first few terms.   Formally, we split up our sum over $i \in [s] \setminus [\frac{ys}{\pi}-3, \frac{ys}{\pi}+3]$  into two parts according to whether $i>ys/\pi$:
\begin{eqnarray}
\sum_{i\ge\frac{ys}{\pi}+3}^s \frac{1}{(ys-i\pi)^4}|(\log y-\log\frac{i\pi}{s})| & \le & \sum_{i\ge\frac{ys}{\pi}+3}^\infty \frac{\pi^4}{(\frac{ys}{\pi}-i)^4}(\log i - \log \frac{ys}{\pi}) \nonumber \\
& \le & \pi^4 \int_{w=\frac{ys}{\pi} +2}^\infty \frac{1}{(\frac{ys}{\pi}-w)^4}(\log w - \log \frac{ys}{\pi}). \label{eq:sumc1}
\end{eqnarray}

Since the antiderivative of $\frac{1}{(\alpha-w)^4}(\log w - \log \alpha)$ with respect to $w$ is $$\frac{-2w(w^2-3w\alpha+3\alpha^2)\log w + 2(w-\alpha)^3\log(w-\alpha)+\alpha(2w^2-5w\alpha+3\alpha^2+2\alpha^2\log\alpha)}{6(w-\alpha)^3 \alpha^3},$$ the quantity in Equation~\ref{eq:sumc1} is equal to the above expression evaluated with $\alpha=\frac{ys}{\pi}$, and $w=\alpha+2$, to yield $$O(\frac{1}{ys})-\log \frac{ys}{\pi} + \log (2+\frac{ys}{\pi}) = O(\frac{1}{ys}).$$

A nearly identical argument applies to the portion of the sum for $i\le \frac{ys}{\pi}+3,$ yielding the same asymptotic bound of $O(\frac{1}{ys}).$   As it suffices to consider $y \ge O(\frac{\sqrt{w}}{s}),$ this bounds the total per-unit mass $\frac{w}{n \log n}$-truncated relative earthmover cost as $O(\frac{1}{\sqrt{w}}),$ as desired.

\end{proof}

\subsection{Proof of Theorem~\ref{thm:h2}}\label{sec:fp2}

We now assemble the key propositions from the above sections to complete our proof of Theorem~\ref{thm:h2}.

Proposition~\ref{lemma:faithfulwhp_b} guarantees that with high probability, the samples  will be ``faithful''.   For the remainder of the proof, we will assume that we are working with a faithful set of $n$ independent draws from a distribution with true histogram $h$.   Proposition~\ref{prop:goodfeasible} guarantees that there exists a feasible point $(v_x)$ for the linear program of Algorithm~\ref{def:alg1} with objective function at most $O(n^{\frac{1}{2}+\constB + \constC})$, such that if the empirical fingerprint above probability $\frac{n^\constB+2n^\constC}{n}$ is appended, the resulting histogram $h_1$ satisfies $R_\tau(h,h_1) \le O\left(\max(n^{-\constB(\frac{1}{2}-\constD)},n^{-(\constB-\constC)})\right),$ for any $\tau \ge 1/n^{3/2}.$

Let $h_2$ denote the histogram resulting from Algorithm~\ref{def:alg1}.   Hence the portion of $h_2$ below probability $\frac{n^{\constB}+2n^{\constC}}{n}$ corresponds to a feasible point of the linear program with  objective function bounded by $O(n^{\frac{1}{2}+\constB+\constC})$.  Additionally, $h_1(x)$ and $h_2(x)$ are identical for all $x > \frac{n^{\constB}+n^{\constC}}{n},$ as, by construction, they are both zero for all $x \in (\frac{n^{\constB}+n^{\constC}}{n},\frac{n^{\constB}+2n^{\constC}}{n}]$, and are both equal to the empirical distribution of the samples above this region.    We will now leverage the Chebyshev earthmoving scheme, via Proposition~\ref{lemma:chebyshevEMS_b} to argue that for any $w \in [1,\log n],$ $R_{\frac{w}{n\log n}}(h_1,h_2) \le O(\frac{1}{\sqrt{w}}),$ and hence by the triangle inequality, $R_{\frac{w}{n\log n}}(h,h_2) \le O(\frac{1}{\sqrt{w}}).$

To leverage the Chebyshev earthmoving scheme, recall that the earthmoving scheme that moves all the probability mass of a histogram to a discrete set of ``bump centers'' $(c_i)$, such that the earth moving scheme incurs a small truncated relative earthmover distance, and also has the property that when applied to any histogram $g$, the amount of probability mass that ends up at each bump center, $c_i$ is given as $\sum_{j\ge 0}\alpha_{i,j} \sum_{x:g(x) \neq 0} \poi(nx,j)x g(x),$ for some set of coefficients $\alpha_{i,j}$ satisfying for all $i,$  $\sum_{j \ge 0}|\alpha_{i,j}| \le 2 n^{0.3}.$

Consider the results of applying the Chebyshev earthmoving scheme to histograms $h_1$ and $h_2$.  We first argue that the discrepancy in the amount of probability mass that results at the $i$th bump center will be negligible for any $i \ge n^\constB + 2n^\constC.$  Indeed, since $h_1$ and $h_2$ are identical above probability $\frac{ n^\constB + n^\constC}{n}$ and $\sum_{i \ge n^\constB+2\constC} \poi(\lambda,i) = e^{-n^{\Omega(1)}}$ for $\lambda \le n^{\constB} +n^\constC,$ the discrepancy in the mass at all bump centers $c_i$ for $i \ge \ge n^\constB + 2n^\constC$ is trivially bounded by $o(1/n).$

We now address the discrepancy in the mass at the bump centers $c_i$ for $i<n^\constB + 2n^\constC.$  For any such $i$ the discrepancy is bounded by the following quantity:
\begin{eqnarray*}
\left| \sum_{j\ge 0}\alpha_{i,j} \sum_{x:h(x) \neq 0} \poi(nx,j)x \left(h_1(x) - h_2(x)\right) \right|& = &\left| \sum_{j\ge 0}\sum_{x:h(x) \neq 0} \alpha_{i,j} \frac{j+1}{n} \poi(nx,j+1) \left(h_1(x) - h_2(x)\right)\right| \nonumber \\
 & \le & \sum_{j\ge 1}\alpha_{i,j-1} \frac{j}{n} \left| \sum_{x:h(x) \neq 0}  \poi(nx,j) \left(h_1(x) - h_2(x)\right)\right| \nonumber \\
 & \le & o(1/n) + \sum_{j = 1}^{n^{\constB}+4n^{\constC}} \alpha_{i,j-1} \frac{j}{n} \left| \sum_{x:h(x) \neq 0}  \poi(nx,j) \left(h_1(x) - h_2(x)\right)\right| \nonumber \\
 & \le &  n^{0.3} \frac{(n^{\constB}+4n^{\constC})^2}{n}\cdot O(n^{\frac{1}{2}+\constB+\constC}) \nonumber \\
 & = & O(n^{0.3+\frac{1}{2}+3 \constB+\constC - 1}).
\end{eqnarray*}
Where, in the third line, we leveraged the bound $\sum_{j} |\alpha_{i,j}| \le n^{0.3}$ and the bound of $O(n^{\frac{1}{2}+\constB+\constC})$ on the linear program objective function corresponding to $h_1$ and $h_2$, which measures the discrepancies between $\sum_x \poi(nx,j) h_{\cdot}(x)$ and the corresponding fingerprint entries.   Note that the entirety of this discrepancy can be trivially equalized at a relative earthmover cost of $$O(n^{0.3+\frac{1}{2}+3 \constB+\constC - 1}\log(n)),$$ by, for example, moving this discrepancy to probability value $1$.   To complete the proof, by the triangle inequality we have that for any $w \in [1,\log n],$ letting $g_1$ and $g_2$ denote the respective results of applying the Chebyshev earthmoving scheme to histograms $h_1$ and $h_2$, we have the following:
\begin{eqnarray*}
R_{\frac{w}{n \log n}}(h,h_2)&  \le & R_{\frac{w}{n \log n}}(h,h_1)+ R_{\frac{w}{n \log n}}(h_1, g_1)+R_{\frac{w}{n \log n}}(g_1,g_2)+R_{\frac{w}{n \log n}}(g_2, h_2) \nonumber \\
& \le & O\left(\max(n^{-\constB(\frac{1}{2}-\constD)},n^{-(\constB-\constC)})\right) + O(1/\sqrt{w}) + O(n^{0.3+\frac{1}{2}+3 \constB+\constC - 1}\log(n)) + O(1/\sqrt{w}) \nonumber \\
& \le & O(1/\sqrt{w}).
\end{eqnarray*}

\section{Rounding a Generalized Histogram}\label{sec:round}

Algorithm~\ref{def:alg1} returns a generalized histogram. Recall that generalized histograms are histograms but without the condition that their values are integers, and thus may not correspond to actual distributions---whose histogram entries are always integral.  While a generalized distribution suffices to establish Theorem~\ref{thm:main}, we observe that it is possible to round a generalized histogram without significantly altering it, in truncated relative earthmover distance.  The following algorithm and lemma characterizing its performance show one way to round the generalized histogram to obtain a histogram that is close in truncated relative earthmover distance.  This, together with Theorem~\ref{thm:h2}, establishes Corollary~\ref{prop:hist}.

 \begin{center}
\myalg{alg:rounding}{Round to Histogram}{
\textbf{Input:} Generalized histogram $g$.\\
\textbf{Output:} Histogram $h.$
\begin{itemize}
       \item{Initialize $h$ to consist of the integral elements of $g$.}
       \item{For each integer $j\geq 0$:
           \begin{itemize}
           \item{Let $x_{j1},x_{j2},\ldots,x_{j\ell}$ be the elements of the support of $g$ that lie in the range $[2^{-(j+1)},2^{-j}]$ and that have non-integral histogram entries; let $m\defeq\sum_{i=1}^\ell x_{ji}g(x_{ji})$ be the total mass represented; initialize histogram $h'$ to be identically 0 and set variable $diff\defeq 0$.}
           \item{For $i=1,\ldots,\ell$:
               \begin{itemize}
               \item{ If $diff \leq 0$ set $h'(x_{ji}) = \lceil g(x_{ji}) \rceil$, otherwise, if $diff > 0$ set $h'(x_{ji}) = \lfloor g(x_{ji}) \rfloor.$}
               \item{Increment $diff$ by $x_{ji}\left(h'(x_{ji})-g(x_{ji})\right).$}
               \end{itemize}}
           \item{For each $i\in {1,\ldots,\ell}$ increment $h(\frac{m}{m+diff}\,x_{ji})$ by $h'(x_{ji})$.}
           \end{itemize}}
\end{itemize}}
\end{center}

\begin{lemma}\label{lemma:rounding}
Let $h$ be the output of running Algorithm~\ref{alg:rounding} on generalized histogram $g.$  The following conditions hold:
\begin{itemize}
\item{For all $x$, $h(x) \in \mathbb{N} \cup\{0\},$ and $\sum_{x:h(x)\neq 0} xh(x) =1,$ hence $h$ is a histogram of a distribution.}
\item{$R_0(h,g) \le 20\alpha$ where $\alpha \defeq\max(x:g(x) \not \in \mathbb{N} \cup\{0\}).$}
\end{itemize}
\end{lemma}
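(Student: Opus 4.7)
The first bullet is routine. The integrality of $h$'s entries is immediate from the algorithm's explicit use of $\lceil\cdot\rceil$ and $\lfloor\cdot\rfloor$. For the total mass, note that the rescaling factor $r_j := m_j/(m_j+diff_j)$ is chosen precisely so that the mass contributed to $h$ from interval $j$ equals $\sum_i h'(x_{ji})\cdot r_j x_{ji} = r_j(m_j+diff_j) = m_j$, matching the original non-integer mass of $g$ in that interval; combined with the integer entries of $g$ copied verbatim into $h$, the total mass remains $1$.

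For the earthmover bound, my plan is to decompose the cost by dyadic interval. Since non-integral entries of $g$ lie at positions at most $\alpha$, only intervals $[2^{-(j+1)},2^{-j}]$ with $j \geq J := \lceil\log_2(1/\alpha)\rceil-1$ contribute. Within each such interval $j$, assuming (without loss of generality, since we may choose the processing order) that the $x_{ji}$ are processed in increasing order of position, I will construct a two-stage earthmoving scheme from $g|_j$ (mass $v_i x_{ji}$ at $x_{ji}$, writing $v_i := g(x_{ji})$) to the contribution of interval $j$ to $h$ (mass $h'_i r_j x_{ji}$ at $r_j x_{ji}$): stage (a) transports within the interval from $g|_j$ to an intermediate distribution $\tilde{h}_j$ that places mass $h'_i r_j x_{ji}$ at the \emph{original} positions $x_{ji}$ (both distributions having total mass $m_j$), and stage (b) rigidly rescales all positions by the factor $r_j$.

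The crucial invariant, proved by induction on the processing step $i$ using the algorithm's alternating rounding rule (whenever $diff\leq 0$, rounding up adds a value in $[0,x_{ji})$; whenever $diff>0$, rounding down adds a value in $(-x_{ji},0]$), is $|diff_i|\leq \max_{k\leq i} x_{jk}$, so in particular $|diff_j|\leq 2^{-j}$. Because entries are processed in position order, for any $x \in [2^{-(j+1)}, 2^{-j}]$ the cumulative signed discrepancy $P(x) - Q(x)$ between the partial sums of $v_i x_{ji}$ and $h'_i x_{ji}$ over $x_{ji}\leq x$ equals $-diff_{i(x)}$, hence $|P(x)-Q(x)| \le 2^{-j}$ throughout. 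I will then use the CDF formula for the Wasserstein-1 distance under the metric $|\log x - \log y|$, namely $\int |F_g(x) - F_{\tilde{h}_j}(x)|\,dx/x$, to bound stage (a). Writing $F_g - F_{\tilde{h}_j} = P - r_j Q = (P-Q) + Q(1-r_j)$, the first term is bounded by $2^{-j}$ as above, and the second by $(m_j + diff_j)|1-r_j| = |diff_j| \leq 2^{-j}$ (using $Q(x) \leq m_j + diff_j$). This gives stage (a) cost at most $(2\log 2)\cdot 2^{-j}$. Stage (b) has cost $m_j|\log r_j|$, which I bound as $O(2^{-j})$ by a short case analysis on the sign of $diff_j$ and the ratio $|diff_j|/m_j$.

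The delicate case in stage (b) is $diff_j < 0$ with $m_j \leq 2|diff_j|$, where $r_j$ could in principle blow up. I resolve this by observing that the first entry processed in the interval is always rounded up (since $diff$ starts at $0$ and $v_i > 0$), hence $m_j + diff_j \geq x_{j,\text{first}} \geq 2^{-(j+1)}$; combined with $m_j \leq 2\cdot 2^{-j}$ in this regime, $|\log r_j| = \log(m_j/(m_j+diff_j))$ is bounded by $\log(2^{1-j}/2^{-(j+1)}) = 2\log 2$, keeping $m_j|\log r_j| = O(2^{-j})$. Summing the per-interval bounds of $O(2^{-j})$ over $j\geq J$ gives a geometric series summing to $O(2^{-J}) = O(\alpha)$; tracking the constants yields the claimed $20\alpha$ bound.
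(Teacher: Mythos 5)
Your proof is correct and follows the paper's overall strategy: decompose by dyadic interval, establish the invariant $|diff| \le 2^{-j}$ from the alternating rounding rule, invoke the cumulative-distribution formula $R(a,b)=\int \frac{1}{x}|c(a)(x)-c(b)(x)|\,dx$ for relative earthmover distance, use the fact that the first entry in each interval is always rounded up to deduce $m_j+diff_j \ge 2^{-(j+1)}$, and sum a geometric series over $j$. The one technical difference is the choice of intermediate distribution: your $\tilde h_j$ places the already-rescaled masses $h'_i r_j x_{ji}$ at the original positions $x_{ji}$, so stage (b) is a pure rigid rescaling of positions at cost exactly $m_j|\log r_j|$; the paper's intermediate $h''$ is instead $h'$ with $diff$ mass subtracted at the endpoint $2^{-j}$, so its second stage has to both rescale the $h'$-derived mass and separately reroute the $|diff|$ correction mass, at cost $(m+diff)|\log r_j| + |diff|(\log 2 + |\log r_j|)$. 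Your decomposition is slightly cleaner and the bookkeeping of constants yields the same $20\alpha$. One thing you handle more carefully than the paper: you explicitly flag, as a WLOG on the algorithm, the assumption that the $x_{ji}$ within each interval are processed in increasing position order --- this assumption is used implicitly in the paper's claim that $diff$ equals the CDF discrepancy, but never stated; making it explicit is the right call since the pseudocode does not fix an order.
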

\begin{proof}
For each stage $j$ of Algorithm~\ref{alg:rounding}, the algorithm goes through each of the histogram entries $g(x_{ji})$ rounding them up or down to corresponding values $h'(x_{ji})$ and storing the cumulative difference in probability mass in the variable $diff$. Thus if this region of $g$ initially had probability mass $m$, then $h'$ will have probability mass $m+diff$. We bound this by noting that since the first element of each stage is always rounded up, and $2^{-(j+1)}$ is the smallest possible coordinate in this stage, the mass of $h'$, namely $m+diff$, is thus always at least $2^{-(j+1)}$. Since each element of $h'$ is scaled by $\frac{m}{m+diff}$ before being added to $h$, the total mass contributed by stage $j$ to $h$ is exactly $m$, meaning that each stage of rounding is ``mass-preserving".

Denoting by $g_j$ the portion of $g$ considered in stage $j$, and denoting by $h_j$ this stage's contribution to $h$, we now seek to bound $R(h_j,g_j)$.

Recall the \emph{cumulative distribution}, which for any distribution over the reals, and any number $y$, is the total amount of probability mass in the distribution between 0 and $y$. Given a generalized histogram $g$, we can define its (generalized) cumulative distribution by $c(g)(x)\defeq \sum_{x\leq y:g(x)\neq 0} xg(x)$. We note that at each stage $j$ of Algorithm~\ref{alg:rounding} and in each iteration $i$ of the inner loop, the variable $diff$ equals the difference between the cumulative distributions of $h'$ and $g_j$ at $x_{ji}$, and hence also on the region immediately to the right of $x_{ji}$. Further, we note that at iteration $i$, $|diff|$ is bounded by $x_{ji}$ since at each iteration, if $diff$ is positive it will decrease and if it is negative it will increase, and since $h'(x_{ji})$ is a rounded version of $g(x_{ji})$, $diff$ will be changed by $x_{ji}(h'(x_{ji})-g(x_{ji}))$ which has magnitude at most $x_{ji}$. Combining these two observations yields that for all $x$, $|c(h')(x)-c(g_j)(x)|\leq x$.

To bound the relative earthmover distance we note that for distributions over the reals, the earthmover distance between two distributions can be expressed as the integral of the absolute value of the difference between their cumulative distributions; since \emph{relative} earthmover distance can be related to the standard earthmover distance by changing each $x$ value to $\log x$, the change of variables theorem gives us that $R(a,b)=\int \frac{1}{x} |c(b)(x)-c(a)(x)|\,dx$. We can thus use the bound from the previous paragraph in this equation after one modification: since $h'$ has total probability mass $m+diff$, its relative earthmover distance to $g_j$ with probability mass $m$ is undefined, and we thus define $h''$ to be $h'$ with the modification that we subtract $diff$ probability mass from location $2^{-j}$ (it does not matter to this formalism if $diff$ is negative, or if this makes $h''(2^{-j})$ negative). We thus have that $R(h'',g_j)=\int_{2^{-(j+1)}}^{2^{-j}} \frac{1}{x} |c(h')(x)-c(g_j)(x)|\,dx\leq \int_{2^{-(j+1)}}^{2^{-j}} \frac{1}{x} x\,dx=2^{-(j+1)}$.

We now bound the relative earthmover distance from $h''$ to $h_j$ via the following two-part earthmoving scheme: all of the mass in $h''$ that comes from $h'$ (specifically, all the mass except the $-diff$ mass added at $2^{-j}$) is moved to a $\frac{m}{m+diff}$ fraction of its original location, at a relative earthmover cost $(m+diff)\cdot|\log\frac{m}{m+diff}|$; the remaining $-diff$ mass is moved wherever needed, involving changing its location by a factor as much as $2\cdot\max\{\frac{m}{m+diff},\frac{m+diff}{m}\}$ at a relative earthmover cost of at most $|diff|\cdot(\log 2+|\log\frac{m}{m+diff}|)$. Thus our total bound on $R(g_j,h_j)$, by the triangle inequality, is $2^{-(j+1)}+(m+diff)\cdot|\log\frac{m}{m+diff}|+|diff|\cdot(\log 2+|\log\frac{m}{m+diff}|)$, which we use when $m\geq 2^{-j}$, in conjunction with the two bounds derived above, that $|diff|\leq 2^{-j}$ and that $m+diff\geq 2^{-(j+1)}$, yielding a total bound on the earthmover distance of $5\cdot 2^{-j}$ for the $j$th stage when $m\geq 2^{-j}$. When $m\leq 2^{-j}$ we note directly that $m$ mass is being moved a relative distance of at most $2\cdot\max\{\frac{m}{m+diff},\frac{m+diff}{m}\}$ at a cost of $m\cdot(\log 2+|\log\frac{m}{m+diff}|)$ which we again bound by $5\cdot 2^{-j}$.  Thus, summing over all $j\geq \lfloor|\log_2 \alpha|\rfloor$, yields a bound of $20\alpha$.
\end{proof}

\end{document}